\newtheorem{theorem}{Theorem}[section]
\newtheorem{definition}[theorem]{Definition}
\newtheorem{corollary}[theorem]{Corollary}
\newtheorem{lemma}[theorem]{Lemma}
\newtheorem{claim}[theorem]{Claim}
\newcommand{\R}{\mathbb{R}}
\newcommand{\prob}{\mathbb{P}}
\newcommand{\E}{\mathbb{E}}
\newcommand{\pto}{\xrightarrow{p}}
\newcommand{\dto}{\xrightarrow{d}}
\newcommand{\var}{\operatorname{Var}}
\newcommand{\N}{\mathcal{N}}
\title{Differentially private ratio statistics}
\author{%
  Tomer Shoham \\
  Department of Computer Science\\
  The Hebrew University of Jerusalem\\
  \texttt{Tomer.Shoham@mail.huji.ac.il} \\
  % examples of more authors
  \And
  Katrina Ligett \\
  Department of Computer Science\\
  The Hebrew University of Jerusalem\\
  \texttt{Katrina.Ligett@mail.huji.ac.il} \\
  % \AND
  % Coauthor \\
  % Affiliation \\
  % Address \\
  % \texttt{email} \\
  % \And
  % Coauthor \\
  % Affiliation \\
  % Address \\
  % \texttt{email} \\
  % \And
  % Coauthor \\
  % Affiliation \\
  % Address \\
  % \texttt{email} \\
}
\begin{document}

\maketitle

\begin{abstract}
Ratio statistics—such as relative risk and odds ratios—play a central role in hypothesis testing, model evaluation, and decision-making across many areas of machine learning, including causal inference and fairness analysis. However, despite privacy concerns surrounding many datasets and despite increasing adoption of differential privacy, differentially private ratio statistics have largely been neglected by the literature and have only recently received an initial treatment, by \citet{lin2024differentially}. This paper attempts to fill this lacuna, giving results that can guide practice in evaluating ratios when the results must be protected by differential privacy. In particular, we show that even a simple algorithm can provide
excellent properties concerning privacy, sample accuracy, and bias, not just asymptotically but also at quite small sample sizes. Additionally, we analyze a differentially private estimator for relative risk, prove its consistency, and develop a method for constructing valid confidence intervals. Our approach bridges a gap in the differential privacy literature and provides a practical solution for ratio estimation in private machine learning pipelines.
\end{abstract}

\section{Introduction}
Differential privacy (DP)~\cite{dwork2006calibrating} offers a rigorous framework and has become the gold standard both in academia and in industry for privacy-preserving data analysis. Ratio statistics are frequently of interest in private machine learning pipelines, whether the application is estimating the performance of a medical classifier or evaluating a fairness metric on a predictor in a loan-granting context.

Differential privacy intuitively works by perturbing computations or their results to mask small changes in the underlying data, and the magnitude of the necessary perturbation is driven by a quantity known as the sensitivity---a bound on how much the quantity of interest might change if one individual's data were to change. Some statistics, like counts or averages, are very robust to changes in one individual. For such statistics, simple techniques can ensure differential privacy while maintaining good accuracy. Ratios, however, are challenging for differential privacy---if the numerator or the denominator reflects a count of people with a certain property, then small changes in the underlying population can potentially dramatically change the ratio, which can result in huge (or even unbounded) sensitivity. 

However, all is not lost. In this work we show that even if an analyst does not have free reign to design a sophisticated algorithm for computing a differentially private ratio, but is merely a consumer of differentially private count statistics from which she constructs ratios, the resulting ratio statistics can have excellent properties concerning privacy, sample accuracy, and bias, not just asymptotically, but also at quite small sample sizes (Section~\ref{sec:noisy_lap_counts}). We achieve these results despite the moments of the noised ratio being undefined.

A key aspect of our findings (Section~\ref{subsec:connection})  is that although the resulting private ratio statistics can indeed be adversely affected by the introduction of privacy, the main adverse effects manifest in precisely the situations where the original, non-private ratio statistic had high variance and hence itself was not particularly meaningful for statistical tests.

We also show (Section~\ref{sec:num_study}) via numerical simulation that although the simple algorithm of post-processing noisy counts produces a biased estimator, this method is competitive with more complicated techniques designed to handle sensitive statistics, especially on small samples.

\subsection{Related work}
The release of frequency tables---a list of count queries with certain constraints---under differential privacy was first studied by~\citet{barak2007privacy}.
Our work follows a growing literature on hypothesis testing on binomial random variables under DP; for a survey of differential privacy and other privacy methods applied to frequency tables, see~\cite{rinott2018confidentiality} and references therein, particularly~\cite{vu2009differential,gaboardi2016differentially,awan2018differentially}. 

Our work also contributes to a literature that considers the challenges faced by an analyst who is a \emph{consumer} (rather than an active producer) of differentially private statistics (e.g., provided, already noised, by a statistical agency such as the Census) and wishes to post-process them and understand the downstream effect that privacy has introduced. The US Census Bureau's decision to enforce differential privacy on data products derived from 2020 Census data \citep{mervis2019can} has sparked a rich discussion of potential impacts of differential privacy~\citep{kenny2021impact}, including specifically on ratio statistics \citep{mueller20222020, santos2020differential}. In this regard, our results provide reassurance that the introduction of differential privacy likely imposes quite limited harm to ratio statistics, even at small sample sizes.

Most closely related to our work is a recent paper of Lin et al.~\cite{lin2024differentially}, wherein they study the problem of constructing confidence regions around proportion estimates when using stratified random sampling techniques, subject to guarantees of differential privacy. In situations where the denominator of their proportion estimate (the size of the relevant sample) is private information, they consider adding Gaussian noise for DP to both the numerator and the denominator. As a result, they end up with a mathematical object---a ratio of two Gaussian random variables---that is also central to our analysis. They provide a very nice analysis of this ratio and show a confidence interval that is valid asymptotically. To arrive at this result, they leverage results of D{\'\i}az-Franc{\'e}s and Rubio~\cite{diaz2013existence} on the goodness of approximation of the ratio of two Gaussian random variables, in particular a Berry-Esseen-like theorem bounding the total variation distance between the ratio of two Gaussians and a single Gaussian distribution. In Section~\ref{sec:CI}, we use similar proof techniques to \cite{lin2024differentially} to obtain confidence interval results on differentially private ratio statistics; our results complement but are rather incomparable to the results of \cite{lin2024differentially}, as our focus is not on asymptotic coverage but on finite-sample properties such as sample accuracy and bias.

\section{Preliminaries}\label{sec:2}

\subsection{A ratio of two counts}

We consider the ratio $Z = X/Y$ of two natural numbers, $X,Y\in {\mathbb N}$. In Section \ref{sec:DP_ratio_estimation_naive} and Section \ref{sec:noisy_lap_counts} we consider $X$ and $Y$ to be constants and our goal is to estimate $Z$; we denote the DP-estimate by $\ \widetilde{Z}$. In general,
we denote a private estimator with $\widetilde{\cdot}$ and the non-private estimator by  $\ \widehat{\cdot}$.

In Section \ref{sec:CI}, motivated by the widely used relative risk statistic, we treat $X$ and $Y$ as binomial random variables with parameters $(p_x, n_x),$ $(p_y,n_y)$, and focus on the statistical task of estimating $p_x/p_y$, which is known as the \textbf{Relative Risk} (RR) statistic. The RR is the ratio of the probability of an outcome (e.g., diseased relative to healthy) in the exposed group to the probability of that outcome in the non-exposed group. If the sample sizes of the groups are known, which is typically the case, then the relative risk estimate is proportional to a ratio of two independent counts. For further reading about the relative risk, see \cite{agresti2012categorical}.

\subsection{Differential privacy}

We give a very brief introduction to differential privacy. A more detailed version is given in Appendix \ref{App:DP_intro}; for further reading see \cite{dwork2014algorithmic} and references therein.

Let $\cal D$ be an abstract data domain. A dataset of size $n$ is a collection of $n$ individuals' data records: $D=\{D_i\}_{i=1}^n \in {\cal D}^n$. 
We assume that $n$ is public; that is, we do not protect the size of the dataset.
We call two datasets $D, D' \in {\cal D}^n$ neighbors, denoted by $D \sim D'$, if they are identical except in one of their records. If we consider a ratio $X/Y$ where both $X$ and $Y$ are counts of the number of individuals in a dataset that have a certain set of properties, then  $X$ and $Y$ will each change by at most one between any two neighboring datasets. 
%\katrina{do we want the sets corresponding to $X$ and $Y$ to be disjoint?}

\begin{definition}[Differential privacy]
        Given a data domain ${\cal D}$ and some domain of responses $\mathcal{R}$, we say that a mechanism ${\cal M}:{\cal D}^n \rightarrow \mathcal{R}$ satisfies $(\varepsilon, \delta)$-\textit{differential privacy}, denoted by  $(\varepsilon, \delta)$-DP, if 
        $\mathbb{P}({\cal M}(D) \in E)\le e^\varepsilon \mathbb{P}({\cal M}(D') \in E)+\delta $
        for all $D\sim D' \in {\cal D}^n$ and all $E \subseteq \mathcal{R}$.
\end{definition} 

Differential privacy has two very important properties. The first is that the composition of two differentially private algorithms is differentially private (see Theorem 3.14 in \cite{dwork2014algorithmic}). The second  is that differential privacy cannot be harmed by post-processing---that is, if an algorithm is differentially private, then no analyst without additional access to the dataset can degrade the privacy guarantee through further analysis of the algorithm's output (see Proposition 2.1 in \cite{dwork2014algorithmic}).
%\katrina{We surely don't need to include these theorems in the appendix, but can just reference Dwork-Roth.}

One way to achieve DP for algorithms that output numbers (or vectors of numbers) is by noise-addition mechanisms. Two such mechanisms are The Laplace Mechanism (see Definition \ref{Def:Lap_mec}) and the Gaussian Mechanism (Definition \ref{def:Gaus_mec}), both adding zero-mean Laplace or Gaussian noise to a statistic, with variance proportional to a quantity known as the global sensitivity (Definition \ref{def:global_sen}).

There are two main results on the relationship between the variance of the added noise and the differential privacy properties of the Gaussian mechanism.
The first, stated in \cite{dwork2014algorithmic}, is not tight but is explicit (see Theorem \ref{thm:Gaus_mec_dwork}). The second, stated in \cite{balle2018improving}, is tight, but the variance has to be approximated numerically (see Theorem \ref{thm:Gaus_mec_balle}). We use the first to state explicit results to give intuition and the second for our numerical simulations.

\subsection{Properties of the statistic}

% Statisticians measure errors with respect to some norm between the statistic and the expectation of the statistic with respect to the data-generating distribution. This is known as the sampling error. 

When we study differential privacy, we want to measure the harm to the statistic with respect to the privacy-induced perturbation, given the sample. This is called \textit{sample accuracy}.

\begin{definition}[$(\alpha,\beta)$-sample accuracy]\label{def:sample_acc}
    Given a family of queries $Q$, we say that a mechanism ${\cal M}:{\cal D}^n\times Q \rightarrow \mathbb{R} $ is $(\alpha,\beta)$-sample accurate if for any dataset $D\in {\cal D}^n$ and for any query $q\in Q$ it outputs a value ${\cal M}(D,q) = r \in \mathbb{R}$ such that 
    $\mathbb{P}(|r-q(D)| > \alpha) < \beta$, where $\alpha \in \mathbb{R}^+,\beta \in [0,1]$.
\end{definition}

Generally, an analyst's goal in estimating a statistic is to gather information about the data-generating distribution and not just about the data sample at hand. Confidence intervals (CI) are a tool for expressing the level of uncertainty about an estimated parameter, where that uncertainty could stem from many sources, including sampling randomness or intentional perturbation for privacy protection.

 \begin{definition}[Approximate CI]\label{def:CI}
     Let $F_{\theta}$ be some distribution defined on $R$, and let $X_n=(x_1,...x_n)$ be an i.i.d random sample from this distribution. A $(1-\gamma)$-CI for the parameter $\theta$ is an interval $(u(X_n), v(X_n))$ where $u(X_n)$ and $v(X_n)$ are random variables, such that
    $ \lim_{n\rightarrow \infty}{\mathbb P}(u(X_n) \leq \theta \leq v(X_n)) = 1-\gamma, $
    where  randomness is with respect to $x_1,...x_n \overset{i.i.d}{\sim} F_{\theta}$, and also over the possible randomness of $u(\cdot)$ and $v(\cdot)$.
 \end{definition}

Throughout the paper, when we address CIs, we refer to this definition. A more conservative notion could require that for {\em every} $n$, the probability that the true parameter is in the CI is at least $1-\gamma$. The theoretical CI results we give are asymptotic, and we also numerically study what $n$ is sufficient.

\subsection{Other methods for DP-ratio estimation}\label{sec:DP_ratio_estimation_naive}

Our paper studies the simple method of adding noise to each count and post-processing to obtain a ratio. We briefly detail other possible methods for comparison, and refer the reader to the appendices for more details.

\paragraph{Direct perturbation}
We briefly give intuition for why direct perturbation of ratio statistics for DP yields terrible privacy-accuracy tradeoffs; we give more details in Appendix \ref{App:DP_ratio_naive}. The sensitivity of a ratio statistic is very large because a small change in a small denominator, with a large numerator, can cause a large change in the statistic (see Claim \ref{clm:naive_lap_dp}). A possible remedy, given that both $X,Y$ are strictly positive, is to take the log of the ratio as a pre-processing step, add noise that scales like the sensitivity of the log (which is 2; see Claim \ref{claim:lap_dp}), and exponentiate the resulting noisy statistic (post-processing). In Appendix \ref{App:DP_ratio_naive}, we derive the bias of this algorithm and explain how it can be debiased (see Claim \ref{clm:bias_log_RR}); we also derive guarantees on its sample accuracy (see Claim \ref{clm:acc_exp_naive}) and show that if $X$ and $Y$ grow at the same rate (that is, if $Z$ is fixed), then the accuracy guarantee achieved by this approach is a constant that does not improve with increased sample size, a quite weak guarantee.

\paragraph{Local-sensitivity-based methods}
The study of a statistic with high global sensitivity invites exploration of local-sensitivity-based privacy methods such as the smooth sensitivity framework~\citep{nissim2007smooth} and the propose-test-release framework~\citep{dwork2009differential}. At a high level, both attempt to tailor the noise addition to a more local version of the sensitivity. We consider them in Appendix \ref{App:local_noise_methods}, including how to tailor them for a ratio statistic (which might be of interest by its own), and compare the performance of our simple algorithm to these algorithms in Section \ref{sec:num_study}. 

\section{A ratio of noisy counts}\label{sec:noisy_lap_counts}
We consider an analyst who wishes to construct a ratio statistic from counts that have been protected by Laplace noise addition. In this section, we discuss the resulting privacy, accuracy, and bias, and we address bias correction. As we will see in the next section, our main interest is in finite-sample properties of the DP statistic that can yield guidance with regard to even very small samples. \citet{lin2024differentially} consider the more general problem of the release of a ratio of two noised statistics and focus on its asymptotic coverage properties (as we discuss further in Section~\ref{sec:CI}, where we treat coverage issues).

Although our paper focuses ratios of counts, the results in this section are much more general, extending to any ratio of noisy statistics with bounded sensitivity. 

\subsection{Unprocessed ratio of noisy counts}

Define the algorithm \textit{LaplaceNoisedCounts$_\varepsilon(X, Y)$}, which receives a privacy parameter $\varepsilon > 0$ and counts $X, Y > 0$, and simply adds independent Laplace noise to each count to return private estimates $\widetilde{X}=X + \mathrm{Lap}\left(\frac{2}{\varepsilon}\right)$ and $\widetilde{Y}=Y + \mathrm{Lap}\left(\frac{2}{\varepsilon}\right)$.

We consider an equal split of the privacy budget allocated to the two counts, both for simplicity and because one of our motivations is the perspective of an analyst who receives a pre-noised frequency table and thus does not directly control the privacy budget. One could also optimize over the split of $\varepsilon$ to get somewhat better results, as shown in \cite{swanberg2019improved}, where they analyze the F-statistic of ANOVA testing, which is a ratio, and perturb its denominator and numerator with different noise loads.

~\textit{LaplaceNoisedCounts$_\varepsilon(X, Y)$} is immediately differentially private, by composition.\footnote{We split the $\varepsilon$ and compose it in order to ensure that our analysis also holds under the more general notion of neighboring, wherein an individual is allowed to move arbitrarily between the cells of the frequency table.}
\begin{claim}
Algorithm~\textit{LaplaceNoisedCounts$_\varepsilon(X, Y)$} is $(\varepsilon,0)$-DP.
\end{claim}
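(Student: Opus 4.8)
The plan is to verify that \textit{LaplaceNoisedCounts$_\varepsilon(X, Y)$} is the composition of two mechanisms, each of which releases a single noisy count, and to show that each of these is $(\varepsilon/2, 0)$-DP via the standard Laplace mechanism guarantee. First I would recall that a count query $X(\cdot)$ has $\ell_1$-global-sensitivity $1$: under the neighboring relation, changing one individual's record can change a count of individuals with a given property by at most one, so $\max_{D \sim D'} |X(D) - X(D')| \le 1$, and likewise for $Y$. By the standard Laplace mechanism guarantee (Definition \ref{Def:Lap_mec} and the accompanying privacy claim referenced in the excerpt), adding noise drawn from $\mathrm{Lap}(\Delta / \varepsilon')$ to a statistic of sensitivity $\Delta$ yields $(\varepsilon', 0)$-DP. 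Here the noise scale is $2/\varepsilon$, and since $\Delta = 1$, this matches the Laplace mechanism run with privacy parameter $\varepsilon' = \varepsilon/2$. Hence releasing $\widetilde{X}$ alone is $(\varepsilon/2, 0)$-DP, and releasing $\widetilde{Y}$ alone is $(\varepsilon/2, 0)$-DP.

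Next I would invoke the basic (sequential) composition theorem for differential privacy (Theorem 3.14 in \cite{dwork2014algorithmic}), which states that the $\varepsilon$-parameters of independently run $(\varepsilon_i, 0)$-DP mechanisms add. Since the two noise draws are independent and each release is $(\varepsilon/2, 0)$-DP, their joint release is $(\varepsilon/2 + \varepsilon/2, 0) = (\varepsilon, 0)$-DP. This establishes that the pair $(\widetilde{X}, \widetilde{Y})$ is $(\varepsilon, 0)$-DP.

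Finally, I would note that the algorithm as a whole only ever touches the data through this pair: any further manipulation an analyst performs to form the ratio $\widetilde{Z} = \widetilde{X}/\widetilde{Y}$ is a data-independent function of $(\widetilde{X}, \widetilde{Y})$. By the post-processing immunity of differential privacy (Proposition 2.1 in \cite{dwork2014algorithmic}), the $(\varepsilon, 0)$-DP guarantee is preserved under any such post-processing, so the full mechanism is $(\varepsilon, 0)$-DP.

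I do not expect any genuine obstacle here, as the claim is an immediate corollary of three textbook facts (Laplace mechanism, composition, post-processing). The only point requiring a little care is the sensitivity accounting under the more general neighboring notion flagged in the footnote: if an individual is permitted to migrate arbitrarily between cells of the frequency table, then a single change can simultaneously shift $X$ by one and $Y$ by one, so one cannot treat $(X, Y)$ as a joint query of $\ell_1$-sensitivity $1$. Splitting the budget and composing the two single-count releases is precisely what sidesteps this subtlety, since each count individually still has sensitivity $1$ regardless of how the changed record moves; I would make sure the composition step is stated in a way that is agnostic to whether the two counts change in a correlated fashion.
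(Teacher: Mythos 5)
Your proof is correct and follows exactly the route the paper intends: it states the claim is ``immediately differentially private, by composition,'' i.e., each count has sensitivity $1$, so Laplace noise of scale $2/\varepsilon$ makes each release $(\varepsilon/2,0)$-DP (Theorem~\ref{clm:Lap_DP}), composition gives $(\varepsilon,0)$-DP, and post-processing preserves it. Your closing remark about the budget split handling the more general neighboring notion (an individual migrating between cells) is precisely the content of the paper's footnote, so nothing is missing.
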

There are good reasons to suspect that post-processing noisy counts might yield sub-optimal privacy-accuracy tradeoffs: the technique is extremely simple, and furthermore, this approach produces a biased estimate of the true ratio. (This can be seen directly from Jensen's inequality: the expectation of a ratio of two non-negative random variables is smaller than the ratio of the expectations.) Nonetheless, as we will see, this simple method performs quite well, even compared to more sophisticated algorithms.

We now establish a result on the sample accuracy of a ratio post-processed from the output of Algorithm~\textit{LaplaceNoisedCounts$_\varepsilon(X, Y)$}. Although the Claim is hard to interpret, it serves as a stepping stone towards Corollary \ref{clm:asym_case_counts}. 

\begin{claim}\label{clm:acc_Alg_count_no_max}
    Consider two natural numbers $X,Y \in {\mathbb N}$, and denote $Z\coloneqq X/Y \in {\mathbb R}^+$. For any $\varepsilon>0$ and $\alpha$ such that $0<\alpha<Z$, if we denote the output of Algorithm \textit{LaplaceNoisedCounts$_\varepsilon(X, Y)$} by $\widetilde{X}$, $\widetilde{Y}$ and take $\widetilde{Z}=\widetilde{X}/\widetilde{Y}$, then $\widetilde{Z}$ is $\left(\alpha, \beta \right)$-sample accurate (see Definition \ref{def:sample_acc}), with
\begin{multline*}
 \beta = \left(0.5 + \frac{0.5}{(Z-\alpha)^2-1} \right)\exp\left(\frac{-\varepsilon\alpha Y}{2(Z-\alpha) }\right)
 +\left(0.5+ \frac{0.5}{(Z+\alpha)^2-1} \right)\exp\left(\frac{-\varepsilon\alpha Y}{2(Z+\alpha) }\right) \\
  \hspace{0.65cm} -\frac{Z^2+\alpha^2-1}{(Z^2+\alpha^2-1)^2 -4\alpha^2 Z^2}\exp\left(\frac{-\varepsilon\alpha Y}{2}\right).
\end{multline*}
\end{claim}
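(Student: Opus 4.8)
The plan is to reduce the two-dimensional event $\{|\widetilde{Z}-Z|>\alpha\}$ to the tails of two one-dimensional linear combinations of the injected noise, and then to compute those tails in closed form. First I would write $\widetilde{X}=X+A$ and $\widetilde{Y}=Y+B$ with $A,B$ i.i.d.\ $\mathrm{Lap}(2/\varepsilon)$, and (using $X=ZY$) define the two affine quantities $h_+ \coloneqq \widetilde{X}-(Z+\alpha)\widetilde{Y}=-\alpha Y + A-(Z+\alpha)B$ and $h_- \coloneqq \widetilde{X}-(Z-\alpha)\widetilde{Y}=\alpha Y + A-(Z-\alpha)B$. Since $\widetilde{Z}-(Z\pm\alpha)=h_\pm/\widetilde{Y}$, the success event $\widetilde{Z}\in(Z-\alpha,Z+\alpha)$ is exactly $\{h_-\widetilde{Y}>0\}\cap\{h_+\widetilde{Y}<0\}$. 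The key algebraic identity $h_--h_+=2\alpha\widetilde{Y}$ shows $\mathrm{sign}(\widetilde{Y})=\mathrm{sign}(h_--h_+)$, and a short case check then collapses success to the single sign condition $h_-h_+<0$. Hence the failure event is $\{h_-h_+\ge 0\}\subseteq\{h_+\ge 0\}\cup\{h_-\le 0\}$, reducing everything to two one-dimensional tails.

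The core computation is the upper tail of a weighted difference of two independent Laplace variables. For $c>0$, $S\coloneqq A-cB$ is the sum of independent $\mathrm{Lap}(2/\varepsilon)$ and $\mathrm{Lap}(2c/\varepsilon)$ variables, using the symmetry of the Laplace law and the scaling $-cB\sim\mathrm{Lap}(2c/\varepsilon)$. I would obtain its density by multiplying the two characteristic functions $1/(1+(2/\varepsilon)^2 t^2)$ and $1/(1+(2c/\varepsilon)^2 t^2)$ and splitting the product into partial fractions, which exhibits $S$ as a signed mixture of two Laplace densities (valid whenever $c\neq 1$). Integrating the tail then gives, for any threshold $\tau\geq 0$, the closed form $\prob(S>\tau)=\dfrac{c^2 e^{-\varepsilon\tau/(2c)}-e^{-\varepsilon\tau/2}}{2(c^2-1)}$; a check at $\tau=0$ returns $1/2$, as it must.

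Assembling the pieces, the union bound together with the symmetry $\prob(h_-\le 0)=\prob(A-(Z-\alpha)B\le-\alpha Y)=\prob(S_-\ge \alpha Y)$, where $S_-=A-(Z-\alpha)B$, yields $\prob(|\widetilde{Z}-Z|>\alpha)\le\prob(S_+\ge \alpha Y)+\prob(S_-\ge \alpha Y)$ with $S_+=A-(Z+\alpha)B$. Substituting $c=Z+\alpha$ and $c=Z-\alpha$ (both positive since $\alpha<Z$) and $\tau=\alpha Y$ into the tail formula produces the first two displayed terms of $\beta$ after rewriting $c^2/(2(c^2-1))=0.5+0.5/(c^2-1)$; the two leftover $e^{-\varepsilon\alpha Y/2}$ pieces, with combined coefficient $-\tfrac12\big(\tfrac{1}{(Z+\alpha)^2-1}+\tfrac{1}{(Z-\alpha)^2-1}\big)$, collapse into the third term via the factorization $((Z+\alpha)^2-1)((Z-\alpha)^2-1)=(Z^2+\alpha^2-1)^2-4\alpha^2Z^2$. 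The inequality is strict because the two union-bound events genuinely overlap on the positive-probability region $\{h_+\ge 0,\ h_-\le 0\}$, which forces $\widetilde{Y}\le 0$---precisely the regime where the noisy denominator changes sign and the raw ratio is most pathological.

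The step I expect to be the main obstacle is the careful handling of the sign of the noisy denominator $\widetilde{Y}$: the equivalence between the ratio event and the half-plane events $\{h_\pm\}$ holds only after tracking whether $\widetilde{Y}$ is positive or negative, and it is the identity $h_--h_+=2\alpha\widetilde{Y}$ that lets me fold this case analysis into the clean condition $h_-h_+\ge 0$. A secondary technical point is that the partial-fraction density, and hence the tail formula, is valid only for $c\neq 1$, i.e.\ $Z\pm\alpha\neq 1$; at those two isolated values the corresponding term of $\beta$ has a removable singularity whose limiting value must be taken, which does not affect the bound elsewhere.
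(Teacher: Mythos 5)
Your proof is correct, and it reaches exactly the paper's closed form for $\beta$ by a genuinely different route. The paper proceeds through an auxiliary lemma (its Claim \ref{clm:Lap_ratio_cum_prob}) that computes $\mathbb{P}(X_1/X_2 < a)$ for two independent Laplace variables by conditioning on $X_2$ and integrating over three regions, then evaluates it at $a = Z\mp\alpha$ and presents $\beta$ as an exact equality for the failure probability. You instead reduce the ratio event to sign conditions on the two affine forms $h_\pm = \widetilde{X}-(Z\pm\alpha)\widetilde{Y}$, compute the tails of $A - cB$ via characteristic functions and partial fractions (exhibiting the convolution as a signed mixture of two Laplace laws), and assemble $\beta$ as a strict upper bound via the union bound; your algebraic consolidation of the two $e^{-\varepsilon\alpha Y/2}$ terms into the third summand of $\beta$, using $((Z+\alpha)^2-1)((Z-\alpha)^2-1) = (Z^2+\alpha^2-1)^2 - 4\alpha^2 Z^2$, matches the paper's rearrangement exactly. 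Two observations favor your version. First, your explicit tracking of the sign of the noisy denominator via $h_- - h_+ = 2\alpha\widetilde{Y}$ is more careful than the paper's: the conditioning step in the paper's lemma integrates $\mathbb{P}(X_1 < a x_2)$ over all $x_2\in\R$, which is the half-plane probability $\mathbb{P}(X_1 < aX_2)$ rather than the ratio CDF (the inequality flips when $x_2<0$), so the paper's ``exact'' $\beta$ is in fact the same sum of half-plane tails $\mathbb{P}(h_-\le 0)+\mathbb{P}(h_+\ge 0)$ that you derive—an upper bound on the failure probability, not an identity. Since the claim only asserts $(\alpha,\beta)$-sample accuracy, i.e., $\mathbb{P}(|\widetilde{Z}-Z|>\alpha) < \beta$, your presentation as a strict bound is precisely what Definition \ref{def:sample_acc} requires, and your strictness argument (the overlap event $\{h_+\ge 0,\ h_-\le 0\}$ has positive probability since the Laplace density is everywhere positive) is sound. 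Second, your remark on the removable singularities at $Z\pm\alpha = 1$ applies equally to the paper's formula, which is silent on the point; flagging it is a genuine (if minor) improvement. What the paper's route buys is a reusable closed-form lemma for the two-sided Laplace-ratio probabilities at arbitrary thresholds; what yours buys is a shorter tail computation and an honest accounting of the negative-denominator regime.
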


The proof of Claim \ref{clm:acc_Alg_count_no_max} is technical, and appears in Appendix~\ref{sec:proofs}. Briefly,
we first derive the cumulative density function (CDF) of a ratio of two independent Laplace random variables with the same scale parameter (due to equal splitting of the privacy budget), but possibly different means (see Claim \ref{clm:Lap_ratio_cum_prob}). In general, it is not trivial that the expression has an explicit CDF (and, unfortunately, this is not true for a ratio of two independent Gaussian random variables). Then we move from the joint distribution to a conditional distribution (conditioned on $L_2$), which requires integrating over the support of $L_2$, that is, over $\R$. The conditional distribution has an explicit solution, and so does the marginal distribution of $L_2$. 

One takeaway from Claim \ref{clm:acc_Alg_count_no_max} is that, when $Z$ and $\alpha$ are constants, $\beta = O(e^{-\varepsilon Y})$; that is, this ``failure'' probability decreases exponentially with the privacy parameter $\varepsilon$, and with the denominator $Y$. Our empirical evaluation in  Figure \ref{fig:Sample_accuracy} shows that for $X, Y\geq50$, when $\varepsilon>1$, we have that with high probability ($1-\beta \approx 1$), the resulting ratio estimate is less than $\alpha=0.1$ away from the true value. More generally, an extensive numerical study demonstrates that this algorithm gives very good accuracy even for $X$ and $Y$ well below 100.

Although in this section we consider a fixed dataset, one might typically think of the counts $X$ and $Y$ as proportional to the sample size; when the sample size increases, $X$ and $Y$ also increase. This motivates the following Corollary:
\begin{corollary}\label{clm:asym_case_counts}
   Fix some $Z \in {\mathbb R}^+$. Consider $Y\in {\mathbb R}^+$ and let $X$ be $X=ZY$. For any $\varepsilon>0$ and $\alpha$ such that $0<\alpha<Z$, if we denote the output of Algorithm \textit{LaplaceNoisedCounts$_\varepsilon(X, Y)$} by $\widetilde{X}$, $\widetilde{Y}$ and take $\widetilde{Z}=\widetilde{X}/\widetilde{Y}$, then, $\widetilde{Z}$ converges in probability to the true value, $Z$. That is,
    $$  \lim_{Y \to \infty} \mathbb{P}(|\widetilde{Z}-Z| > \alpha) = 0. $$
\end{corollary}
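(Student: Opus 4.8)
The plan is to apply Claim~\ref{clm:acc_Alg_count_no_max} essentially verbatim: the text flags it as a stepping stone precisely because its $(\alpha,\beta)$-accuracy bound already contains all the real work. Setting $X=ZY$ in that claim, the three summands of $\beta$ depend on $Y$ only through their exponential factors, so I would rewrite the bound as $\beta(Y)=C_1\exp(-c_1 Y)+C_2\exp(-c_2 Y)-C_3\exp(-c_3 Y)$, where the coefficients $C_1,C_2,C_3$ depend only on $Z$ and $\alpha$ and the rates are $c_1=\tfrac{\varepsilon\alpha}{2(Z-\alpha)}$, $c_2=\tfrac{\varepsilon\alpha}{2(Z+\alpha)}$, $c_3=\tfrac{\varepsilon\alpha}{2}$. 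One small point to record is that Claim~\ref{clm:acc_Alg_count_no_max} is stated for integer $X,Y$, whereas here $X=ZY$ and $Y$ range over $\R^+$; since that claim is really a statement about the ratio of two Laplace variables with means $X$ and $Y$, and its proof never uses integrality, the same bound applies with real-valued means.

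The key observation is that, because $\varepsilon>0$, $\alpha>0$, and $0<\alpha<Z$, all three rates $c_1,c_2,c_3$ are strictly positive (in particular $Z-\alpha>0$, so $c_1$ is finite and positive). Hence each exponential tends to $0$ as $Y\to\infty$ with its coefficient held fixed, and therefore $\beta(Y)\to 0$. Combining this with the accuracy guarantee gives $\prob(|\widetilde{Z}-Z|>\alpha)\le\beta(Y)\to 0$, which is exactly the displayed limit. To upgrade this to genuine convergence in probability $\widetilde{Z}\pto Z$ (i.e. the limit for every $\alpha>0$, not only $\alpha\in(0,Z)$), I would add one line of monotonicity: for $\alpha\ge Z$, fix any $\alpha'\in(0,Z)$ and use $\{|\widetilde{Z}-Z|>\alpha\}\subseteq\{|\widetilde{Z}-Z|>\alpha'\}$ to reduce to the case already established.

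The only genuine obstacle is light bookkeeping around the coefficients $C_i$: they are finite only when the denominators $(Z-\alpha)^2-1$, $(Z+\alpha)^2-1$, and $(Z^2+\alpha^2-1)^2-4\alpha^2Z^2$ are nonzero, i.e. away from boundary cases such as $Z-\alpha=1$ or $Z+\alpha=1$. These are artifacts of the particular closed form and do not affect the conclusion; at such points one takes the corresponding limit of $\beta$, which is still a fixed constant times a decaying exponential. Since the substance lives in Claim~\ref{clm:acc_Alg_count_no_max}, this corollary is routine given that claim.

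A slicker self-contained alternative avoids the closed form entirely: write the output as $\widetilde{Z}=\dfrac{ZY+L_1}{Y+L_2}=\dfrac{Z+L_1/Y}{1+L_2/Y}$ with $L_1,L_2\sim\mathrm{Lap}(2/\varepsilon)$ independent and, crucially, held fixed across all $Y$ (this coupling leaves the marginal law of $\widetilde{Z}$ unchanged for each fixed $Y$). Since $L_1,L_2$ are almost surely finite, $L_1/Y\to 0$ and $L_2/Y\to 0$ almost surely, and the map $(a,b)\mapsto (Z+a)/(1+b)$ is continuous at $(0,0)$ because the limiting denominator is $1\neq 0$; the continuous mapping theorem then yields $\widetilde{Z}\to Z$ almost surely, hence the claimed convergence in probability. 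I would present the formula-based proof to match the paper's narrative, but note this one-line argument as the conceptual reason the result holds.
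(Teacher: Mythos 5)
Your main argument is exactly the paper's proof: Corollary~\ref{clm:asym_case_counts} is immediate from Claim~\ref{clm:acc_Alg_count_no_max} because each exponent's rate ($c_1,c_2,c_3>0$, using $0<\alpha<Z$ and $\varepsilon>0$) sends all three exponential terms to zero while the coefficients stay fixed. Your added bookkeeping (real-valued $X,Y$, the degenerate-denominator cases such as $Z-\alpha=1$) and the coupled continuous-mapping alternative are correct bonuses but do not change the route, which matches the paper's one-line derivation.
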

This is immediate from Claim \ref{clm:acc_Alg_count_no_max}, since all the expressions in the exponents go to $-\infty$, and all other terms are constants. If we think of $X$ and $Y$ as sums of indicators for individuals drawn from some underlying distributions (see Section \ref{sec:CI} for more detail), this implies that the larger the sample size, the more accurate the output will be.

\subsection{Quantifying and correcting the bias}

While each noisy count is unbiased, the ratio itself is biased. This is intuitive from Jensen's inequality: the expectation of a ratio of two independent positive random variables is smaller than the ratio of the expectation of each random variable. However, it is known that the ratio of two Laplace or Gaussian random variables has no moments \citep{marsaglia2006ratios, lin2024differentially}, and thus taking a ratio of the outputs of \textit{LaplaceNoisedCounts$_\varepsilon(X, Y)$} does not have an expectation; formally we therefore cannot say anything about its ``bias.'' To address this, we can post-processes the noisy denominator to bound it to be at least 1.\footnote{One could also post-process the noisy numerator in the same spirit. Sample accuracy would improve, but the bias of the resulting statistic would increase. Either way, in most cases, the impact would be negligible.} When $Y$ and $\varepsilon$ are sufficiently large, the effect of the $\max$ term is negligible. For example, when $Y\geq 30$ and  $\varepsilon=0.1$, then $\prob(\widetilde{Y}<1)=0.0027$;  for $\varepsilon=1$ and $Y=10$, then $\prob(\widetilde{Y}<1)<10^{-4}$. This thus provides a way to understand the bias induced by taking the ratio of two perturbed counts. 

\begin{claim}[Bias of maxed ratio of noisy counts]\label{clm:bias_ratio_counts}
    Given two numbers $X,Y \in {\mathbb N}$, and $\varepsilon>0$, denote the output of Algorithm \textit{LaplaceNoisedCounts$_\varepsilon(X, Y)$} by $\widetilde{X}$, $\widetilde{Y}$, set $X/Y=Z \in \mathbb{R}^+$, and take $\widetilde{Z}=\widetilde{X}/\max(\widetilde{Y},1)$. Then, for any $\varepsilon>0$ we have
    \small{
    \begin{equation*}\begin{split}
        \mathbb{E}\left[\widetilde{Z}\right] 
       &= X \Bigg( \frac{1}{2}\exp\left(\frac{\varepsilon(1-Y)}{2} \right) +  \frac{\varepsilon}{4}\exp\left(-\frac{\varepsilon Y}{2}\right)  \left(Ei\left(\frac{\varepsilon Y}{2}\right)-  Ei\left(\frac{\varepsilon}{2}\right)\right)    -  \frac{\varepsilon}{4}\exp\left(\frac{\varepsilon Y}{2}\right)Ei\left(-\frac{\varepsilon Y}{2}\right) \Bigg) \\
       &\approx Z \cdot \frac{\varepsilon^2 Y^2}{4} \int_{-\infty}^{0} \frac{-e^u}{u^2 - \left(\frac{\varepsilon Y}{2}\right)^2} \, du,
    \end{split}\end{equation*}
    }
    where $Ei(x)= -\int_{-x}^{\infty}\frac{e^{-t}}{t}dt = \int_{-\infty}^{x}\frac{e^t}{t}dt$.
\end{claim}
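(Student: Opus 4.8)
The plan is to exploit the independence of the two noisy counts to reduce the computation to a one-dimensional integral against the Laplace density, and then to recognize that integral as an exponential integral. Since $\widetilde{X}=X+\mathrm{Lap}(2/\varepsilon)$ and $\widetilde{Y}=Y+\mathrm{Lap}(2/\varepsilon)$ are independent, and since $0<1/\max(\widetilde{Y},1)\le 1$ is bounded (this boundedness, guaranteed by the $\max$, is exactly what makes the expectation exist, in contrast to the un-maxed ratio, which has no moments), I would factor
\begin{equation*}
\mathbb{E}[\widetilde{Z}] = \mathbb{E}[\widetilde{X}]\cdot \mathbb{E}\!\left[\frac{1}{\max(\widetilde{Y},1)}\right] = X\cdot \mathbb{E}\!\left[\frac{1}{\max(\widetilde{Y},1)}\right],
\end{equation*}
using that the Laplace noise is zero-mean. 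It thus remains to compute $\mathbb{E}[1/\max(\widetilde{Y},1)]$, where $\widetilde{Y}$ has density $f(y)=\tfrac{\varepsilon}{4}\exp(-\tfrac{\varepsilon}{2}|y-Y|)$.

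Next I would split the expectation according to whether the $\max$ is active:
\begin{equation*}
\mathbb{E}\!\left[\frac{1}{\max(\widetilde{Y},1)}\right] = \mathbb{P}(\widetilde{Y}\le 1) + \int_{1}^{\infty}\frac{1}{y}\,f(y)\,dy.
\end{equation*}
For the first term, because $Y\ge 1$ we have $1-Y\le 0$, so the Laplace CDF gives $\mathbb{P}(\widetilde{Y}\le 1)=\tfrac12\exp(\tfrac{\varepsilon(1-Y)}{2})$, which (after multiplying by $X$) is the first term of the claimed formula. For the integral I would use $Y\ge 1$ again to split the range at the mode $y=Y$, writing $|y-Y|=Y-y$ on $(1,Y)$ and $|y-Y|=y-Y$ on $(Y,\infty)$; this yields two integrals of the form $\int \frac{e^{\pm\varepsilon y/2}}{y}\,dy$.

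The heart of the calculation is matching these two pieces to the exponential integral. Substituting $u=\varepsilon y/2$ in the first piece turns it into $\frac{\varepsilon}{4}e^{-\varepsilon Y/2}\int_{\varepsilon/2}^{\varepsilon Y/2}\frac{e^u}{u}\,du=\frac{\varepsilon}{4}e^{-\varepsilon Y/2}\big(Ei(\tfrac{\varepsilon Y}{2})-Ei(\tfrac{\varepsilon}{2})\big)$, directly from the definition $Ei(x)=\int_{-\infty}^{x}\frac{e^t}{t}\,dt$; substituting $s=\varepsilon y/2$ in the second piece turns it into $\frac{\varepsilon}{4}e^{\varepsilon Y/2}\int_{\varepsilon Y/2}^{\infty}\frac{e^{-s}}{s}\,ds=-\frac{\varepsilon}{4}e^{\varepsilon Y/2}Ei(-\tfrac{\varepsilon Y}{2})$, using $Ei(-x)=-\int_{x}^{\infty}\frac{e^{-t}}{t}\,dt$. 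Collecting the three contributions gives the exact expression. The main obstacle here is purely bookkeeping: keeping the signs and the two branches of $Ei$ (positive versus negative argument) straight, and confirming integrability on $(1,\infty)$, which holds because $1/y\le 1$ there, so the pole of $1/y$ at the origin, responsible for the non-existence of moments in the un-maxed case, never enters.

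Finally, for the approximation I would observe that the two dropped terms, $\tfrac12 e^{\varepsilon(1-Y)/2}$ and $-\tfrac{\varepsilon}{4}e^{-\varepsilon Y/2}Ei(\tfrac{\varepsilon}{2})$, each carry a factor $e^{-\varepsilon Y/2}$ and are therefore exponentially small in $\varepsilon Y$, whereas the retained terms are $\Theta(1/Y)$ (from $e^{-a}Ei(a)\sim 1/a$ and $e^{a}Ei(-a)\sim -1/a$ with $a=\varepsilon Y/2$), consistent with $\mathbb{E}[\widetilde{Z}]\to Z$. Discarding them and applying the partial-fraction identity $\frac{1}{u^2-a^2}=\frac{1}{2a}\big(\frac{1}{u-a}-\frac{1}{u+a}\big)$ together with the integral representation of $Ei$ rewrites the two retained terms as the single compact integral $\frac{\varepsilon^2 Y^2}{4}\int_{-\infty}^{0}\frac{-e^u}{u^2-(\varepsilon Y/2)^2}\,du$, the prefactor $Z=X/Y$ arising once the $X$ out front is combined with the $1/Y$ scale of these terms.
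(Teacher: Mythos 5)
Your proposal is correct and follows essentially the same route as the paper's proof: factor $\mathbb{E}[\widetilde{Z}] = X\,\mathbb{E}\left[1/\max(\widetilde{Y},1)\right]$ by independence, split the expectation at the threshold $1$ and the remaining integral at the mode $y=Y$, match the two pieces to $Ei$ via the substitutions you describe, and obtain the approximation by dropping the exponentially small terms and recombining the retained pair through partial fractions. If anything, your sign bookkeeping in the last step is more careful than the paper's appendix, whose intermediate approximation line flips the sign of the $\frac{\varepsilon}{4}e^{-\varepsilon Y/2}Ei\left(\frac{\varepsilon Y}{2}\right)$ term and consequently lands on $+e^u$ rather than the claim's (correct) $-e^u$ in the final integrand, which your partial-fraction derivation reproduces correctly.
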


The full proof of Claim \ref{clm:bias_ratio_counts} is given in Appendix~\ref{sec:proofs}. The $Ei$ function is known not to have an explicit solution (see \cite{abramowitz1988handbook}, Chapter 5), but it can be numerically approximated. 
When $\varepsilon Y$ is large enough (see comment below), the last integral satisfies
$
\int_{-\infty}^{0} \frac{-e^u}{u^2 - \left(\frac{\varepsilon Y}{2}\right)^2} \, du \approx \frac{1}{\left( \frac{\varepsilon Y}{2} \right)^2} \int_{-\infty}^{0} e^u \, du = \frac{4}{\varepsilon^2 Y^2}
$; that is, the bias decreases with $Y$ and $\varepsilon$.

In Section \ref{sec:num_study}, and specifically Figure \ref{fig:Bias_figure}, we empirically study the bias of the private and non-private versions of this estimator and show that the approximation in Claim \ref{clm:bias_ratio_counts} is very accurate for a wide range of parameters. Furthermore, we see that the bias is numerically small (a few percent of the true ratio) and it is negligible when $\varepsilon Y > 10$.

The bias derived in Claim \ref{clm:bias_ratio_counts} is a function of the specific dataset; there is no universal constant that corrects the bias for every dataset. However, an analyst with some prior or null hypothesis about the data-generating distribution (for example $p_x=p_y$, which is usually the null hypothesis when studying the relative risk) can use Claim \ref{clm:bias_ratio_counts} to quantify the bias by weighting the bias according to the corresponding probabilities.

\section{Consistency and confidence interval of the relative risk statistic}\label{sec:CI}

In Sections~\ref{sec:2} (and in more detail in Appendix \ref{App:DP_ratio_naive}) and~\ref{sec:noisy_lap_counts}, we studied accuracy solely with respect to the impact of the perturbation noise (see Definition \ref{def:sample_acc}).
  In practice, analysts often view data as generated by a random process, model that process, and study the model parameters and derive CIs for the parameter of interest. In this section, we study the combined effect of such sampling noise and the privacy noise on CIs. 

We suppose that  $X$ and $Y$ are independent binomially distributed random variables; that is, \begin{equation}\label{eq:bin_rv}
    X\sim Bin(n_x,p_x),\  Y\sim Bin(n_y,p_y),\text{ with } p_x,p_y \in (0,1),\text{ and } n_x,n_y \in \mathbb{N}, \ \  n_x+n_y=n.
\end{equation}
 We assume that $n_x$ and $n_y$ are non-private constants, and we wish to estimate $p \coloneqq p_x/p_y$, which is the relative risk under these assumptions. Note that if $n_x=n_y$, then the ratio of counts is exactly the relative risk statistic, and the results of Section \ref{sec:noisy_lap_counts} apply.
We note that one could study different distributional assumptions on $X$ and $Y$, including some dependence between them. Different modeling would require the development of additional methods to construct valid CIs.

We denote
    $\widehat{p}_x \coloneqq X/n_x, \ \ \widehat{p}_y \coloneqq Y/n_y \ \  \widetilde{p}_x \coloneqq \widetilde{X}/n_x, \ \  \widetilde{p}_y \coloneqq \widetilde{Y}/n_y, \ \ \widehat{p} \coloneqq \widehat{p}_x/\widehat{p}_y, \ \ \widetilde{p} \coloneqq \widetilde{p}_x/\widetilde{p}_y$.
That is, $\widehat{p}_x$ and  $\widehat{p}_y$ are the classical MLE estimators of $p_x$ and $p_y$. The estimators $\widetilde{p}_x$ and $ \widetilde{p}_y$ are their private perturbed counterparts (released with Laplace or Gaussian noise, for example).
We have the non-private estimator of the relative risk $\widehat{p}$ and its private counterpart $\widetilde{p}$. Our goal in this section is to analyze $\widetilde{p}$ and construct valid CIs for $p=p_x/p_y$.

It is well-known that the non-private relative risk statistic, $\widehat{p}$, gives a biased but consistent estimator of the relative risk (see, for example, \cite{van2000asymptotic}); that is, $\widehat{p}\pto p$ when $n_x,n_y\rightarrow \infty$. We derive similar results for $\widetilde{p}$, a ratio of noised counts, in Claim \ref{clm:consist_est}.
\begin{claim}[Consistency of $\widetilde{p}$]\label{clm:consist_est}
    Let $X$ and $Y$ be as in \eqref{eq:bin_rv}. Let $L_x$ and $L_y$ be two zero-mean independent random variables. Denote $\widetilde{X} = X+L_x, \ \widetilde{Y} = Y+L_y$, and $\widetilde{p} \coloneqq \widetilde{p}_x/\widetilde{p}_y$. If $\var(L_x) = O(n_x)$ and $\var(L_y)=O(n_y)$, then $\widetilde{p}\pto p$.
\end{claim}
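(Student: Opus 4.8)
The plan is to reduce the convergence of the ratio to the separate convergence of its numerator and denominator, and then recombine them via the continuous mapping theorem. Throughout I take $n_x, n_y \to \infty$ (as in the non-private consistency statement that precedes this claim), so that $n = n_x + n_y \to \infty$ forces both group sizes to grow.

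First I would treat the numerator. Writing $\widetilde{p}_x = \widetilde{X}/n_x = X/n_x + L_x/n_x = \widehat{p}_x + L_x/n_x$, I handle the two summands in turn. Since $X \sim Bin(n_x, p_x)$ is a sum of $n_x$ i.i.d.\ Bernoulli$(p_x)$ variables, $\widehat{p}_x = X/n_x$ is their sample mean, so the weak law of large numbers gives $\widehat{p}_x \pto p_x$. For the noise term, the hypotheses $\E[L_x] = 0$ and $\var(L_x) = O(n_x)$ invite Chebyshev's inequality: for any fixed $\eta > 0$,
\[
\prob\!\left(\left|\tfrac{L_x}{n_x}\right| > \eta\right) = \prob\!\left(|L_x| > \eta\, n_x\right) \le \frac{\var(L_x)}{\eta^2 n_x^2} = O\!\left(\tfrac{1}{n_x}\right) \to 0,
\]
so $L_x/n_x \pto 0$. (Note that for Laplace noise of fixed scale $\var(L_x)$ is a constant, hence trivially $O(n_x)$; the stated condition merely allows mildly growing noise.) Because both summands converge in probability, so does their sum, giving $\widetilde{p}_x \pto p_x$; an identical argument yields $\widetilde{p}_y \pto p_y$.

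Finally I would assemble the quotient. Marginal convergence in probability of each coordinate implies joint convergence $(\widetilde{p}_x, \widetilde{p}_y) \pto (p_x, p_y)$ by a union bound on the two coordinate events. The map $g(a,b) = a/b$ is continuous at $(p_x, p_y)$ because $p_y \in (0,1)$ keeps the denominator bounded away from $0$ at the limit point, so the continuous mapping theorem delivers $\widetilde{p} = g(\widetilde{p}_x, \widetilde{p}_y) \pto g(p_x, p_y) = p_x/p_y = p$, as desired.

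The one point requiring care—and the only real obstacle—is the denominator. Since $\widetilde{p}$ has no moments (a ratio of the relevant noised counts has none, as noted earlier), we cannot argue through expectations or $L^2$ convergence; the entire argument must remain within convergence in probability, which is precisely why routing through the continuous mapping theorem, rather than bounding $\E[\widetilde{p} - p]$, is essential. Relatedly, $g$ is discontinuous on $\{b = 0\}$, so one must invoke $p_y > 0$ explicitly: the bad event $\{\widetilde{p}_y \le 0\}$, on which the ratio misbehaves, has probability tending to $0$ exactly because $\widetilde{p}_y \pto p_y > 0$, and this is absorbed automatically by the continuous mapping step.
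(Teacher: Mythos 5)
Your proof is correct and follows essentially the same route as the paper's: establish $\widehat{p}_x \pto p_x$, $\widehat{p}_y \pto p_y$ and $L_x/n_x \pto 0$, $L_y/n_y \pto 0$, conclude $\widetilde{p}_x \pto p_x$, $\widetilde{p}_y \pto p_y$, and apply the continuous mapping theorem using $p_y > 0$. The only difference is that you spell out the Chebyshev step justifying $L_x/n_x \pto 0$ under $\var(L_x) = O(n_x)$, which the paper simply asserts—a welcome addition rather than a departure.
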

\vspace{-0.5cm}
\begin{proof} Note that
    $ \widehat{p}_x \pto p_x$, $ \widehat{p}_y \pto p_y$, $\frac{L_x}{n_x} \pto 0$, and $\frac{L_y}{n_y} \pto 0$. It thus follows that 
    $ \widetilde{p}_x \pto p_x \ \ \text{and} \ \ \widetilde{p}_y \pto p_y$. 
    As long as $p_y>0$, the claim follows from the Continuous Mapping Theorem (CMT).
\end{proof}

%We actually need weaker conditions on $\sigma$ to get $N(0, \sigma^2)/n_x \pto 0$, but usually we think of the privacy parameters as constants.

\subsection{Confidence interval for the relative risk statistic}

Our method to construct a CI for a ratio is based on a normal approximation of the ratio. This method was proposed by~\cite{lin2024differentially}, and our proof of validity closely follows  theirs, but adapted to a ratio of two independent proportions with known sample sizes, instead of a single proportion with unknown sample size.

We first put forward our main theorem for the validity of a CI based on zero-mean-noised counts, and then explain some of the building blocks of the proof.

\begin{theorem}[Valid CI for the relative risk]\label{thm:Valid_CI}
    Let $X$ and $Y$ be as in \eqref{eq:bin_rv}. Let $L_x$ and $L_y$ be two independent zero-mean random variables, with variance $\var(L_x)=\var(L_y)=\Sigma$. Denote $\widetilde{X} = X+L_x, \ \widetilde{Y} = Y+L_y$, $\widetilde{p}_x \coloneqq \widetilde{X}/n_x$ and $\widetilde{p}_y \coloneqq \widetilde{Y}/n_y$, $p=p_x/p_y$ and $\widetilde{p} = \widetilde{p}_x/\widetilde{p}_y$.
    Define
    $$ \widetilde{V} = \widetilde{p}  \cdot \Bigg(\frac{1}{\widetilde{X}}-\frac{1}{n_x} + \frac{1}{\widetilde{Y}}-\frac{1}{n_y} \Bigg)^{1/2}.$$
    For any $\Sigma \in {\mathbb R}^{+}$ such that $\Sigma = O(n_x)$ and $\Sigma = O(n_y)$, if $n_x$ and $n_y$ both tend to infinity, then for any $0 < \alpha < 1$
        $$
    \prob\left(p \in \left(\widetilde{p} \pm \Phi^{-1}(1-\alpha/2)\cdot \widetilde{V}\right)\right) \rightarrow 1-\alpha, 
    $$
where $\Phi^{-1}(\cdot)$ is the inverse of the CDF of a standard Gaussian random variable.
\end{theorem}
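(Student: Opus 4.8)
The plan is to reduce the coverage statement to a single distributional limit for the studentized pivot
\[
T_n \coloneqq \frac{\widetilde{p}-p}{\widetilde{V}},
\]
and to show that $T_n \dto \N(0,1)$ as $n_x,n_y\to\infty$. Given this, the conclusion is immediate: the event $p \in (\widetilde{p}\pm \Phi^{-1}(1-\alpha/2)\,\widetilde{V})$ is exactly $\{|T_n| < \Phi^{-1}(1-\alpha/2)\}$, and since $\pm\Phi^{-1}(1-\alpha/2)$ are continuity points of the standard normal CDF, $\prob(|T_n| < \Phi^{-1}(1-\alpha/2))$ converges to $\Phi(\Phi^{-1}(1-\alpha/2)) - \Phi(-\Phi^{-1}(1-\alpha/2)) = 1-\alpha$. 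So the entire content of the theorem lies in the CLT for $T_n$.

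For the numerator I would first establish approximate normality of $\widetilde{X}$ and $\widetilde{Y}$. The sampling parts $X,Y$ are sums of i.i.d. indicators, so a Berry--Esseen bound makes their standardized versions Gaussian to the appropriate order; the noise terms $L_x,L_y$ are independent and mean zero with variance $\Sigma$, and convolution with such a term preserves the approximation. Hence $\widetilde{X}$ and $\widetilde{Y}$ are approximately Gaussian, independently, with means $n_x p_x,\ n_y p_y$ and variances $n_x p_x(1-p_x)+\Sigma,\ n_y p_y(1-p_y)+\Sigma$. Writing $\widetilde{p} = (n_y/n_x)\,\widetilde{X}/\widetilde{Y}$, I would then apply the ratio-of-two-independent-Gaussians approximation of \cite{diaz2013existence} (exactly as in \cite{lin2024differentially}): because the denominator's coefficient of variation $\sqrt{n_y p_y(1-p_y)+\Sigma}/(n_y p_y)\to 0$ under the stated scaling, the ratio is itself approximately Gaussian, centered at $(n_y/n_x)(n_x p_x)/(n_y p_y)=p$, with leading-order variance $(V^*)^2 \coloneqq p^2\big(\tfrac{1-p_x}{n_x p_x}+\tfrac{1-p_y}{n_y p_y}\big)$. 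This yields $(\widetilde{p}-p)/V^* \dto \N(0,1)$.

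It then remains to show $\widetilde{V}/V^*\pto 1$, after which Slutsky's theorem gives $T_n = \tfrac{\widetilde{p}-p}{V^*}\cdot\tfrac{V^*}{\widetilde{V}}\dto\N(0,1)$. By Claim~\ref{clm:consist_est}, $\widetilde{p}\pto p$, $\widetilde{p}_x\pto p_x$ and $\widetilde{p}_y\pto p_y$; consequently $\tfrac{1}{\widetilde{X}}-\tfrac{1}{n_x} = \tfrac{1}{n_x}\big(\tfrac{1}{\widetilde{p}_x}-1\big)$, whose ratio to $\tfrac{1-p_x}{n_x p_x}=\tfrac{1}{n_x}\big(\tfrac{1}{p_x}-1\big)$ converges in probability to $1$, and likewise for the $y$-term, so the bracket in $\widetilde{V}^2$ divided by the bracket in $(V^*)^2$ tends to $1$; combined with $\widetilde{p}^2/p^2\pto 1$ this gives $\widetilde{V}/V^*\pto 1$. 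A point worth flagging is that $\widetilde{V}$ contains no explicit $\Sigma$ term, whereas the true variance of $\widetilde{p}$ carries an additive privacy contribution of order $\Sigma/n^2$; this is harmless precisely because, under the scaling $\Sigma=O(n_x),O(n_y)$ (and with $\Sigma$ fixed in the differentially private application), that contribution is negligible relative to the sampling variance of order $1/n$.

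The main obstacle is controlling the ratio structure rather than the two marginals. Both the delta-method heuristic and the Gaussian-ratio approximation can fail when the denominator $\widetilde{Y}$ is close to zero, so the crux is verifying that $\widetilde{Y}$ concentrates away from $0$---equivalently that its coefficient of variation vanishes---so that the \cite{diaz2013existence} bound applies and the total-variation error of the Gaussian approximation to $\widetilde{p}$ tends to $0$. The secondary technical care is the bookkeeping of the previous paragraph: showing that the noisy, $\Sigma$-free estimator $\widetilde{V}$ remains asymptotically exact, which is what pins the limiting variance of the pivot at $1$ and hence delivers asymptotically nominal $(1-\alpha)$ coverage.
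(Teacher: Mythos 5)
Your proposal is correct and follows essentially the same route as the paper's proof: a CLT-based Gaussian approximation of $\widetilde{p}_x$ and $\widetilde{p}_y$ with the privacy noise treated as asymptotically negligible under $\Sigma=O(n_x),O(n_y)$, the D{\'\i}az-Franc{\'e}s--Rubio approximation~\cite{diaz2013existence} of the Gaussian ratio on an interval whose limit is $(0,2p)$ (which, as you note, contains the relevant quantile points since $V^*\to 0$), consistency of the plug-in $\widetilde{V}$ via Claim~\ref{clm:consist_est} and the CMT, and a Slutsky-type conclusion. Your studentized-pivot formulation is just a repackaging of the paper's telescoping decomposition of the coverage probability into vanishing CDF differences plus $F_{p^*}(U)-F_{p^*}(L)=1-\alpha$, and the caveat you flag about the absent $\Sigma$ term in $\widetilde{V}$ is glossed by the paper in exactly the same second-order-negligibility manner.
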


Consequently, we can now derive CI for a ratio of noisy counts perturbed with the Gaussian or Laplace Mechanism, the most common methods to release counts.

\begin{corollary}[valid CI with the Laplace mechanism] If $L_x,L_y \overset{\text{i.i.d.}}{\sim} Lap(0, 2/\varepsilon)$ independent of $X$ and $Y$, then Theorem \ref{thm:Valid_CI} gives a valid CI for $p=p_x/p_y$, which is also $(\varepsilon,0)$-DP.  
\end{corollary}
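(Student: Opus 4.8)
The plan is to verify that Laplace noise of scale $2/\varepsilon$ meets the hypotheses of Theorem~\ref{thm:Valid_CI}, and then to invoke the post-processing property of differential privacy for the $(\varepsilon,0)$-DP guarantee. Both parts are short, since the corollary is a specialization of the theorem together with a privacy bookkeeping step.

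First I would check the distributional conditions. The perturbations $L_x,L_y\sim Lap(0,2/\varepsilon)$ are by construction independent of each other and of $X,Y$, and the Laplace law is symmetric about its location parameter, so each has zero mean. The variance of $Lap(0,b)$ equals $2b^2$; with $b=2/\varepsilon$ this gives $\var(L_x)=\var(L_y)=8/\varepsilon^2$, and setting $\Sigma\coloneqq 8/\varepsilon^2$ the two noise variables share the common variance $\Sigma$ demanded by the theorem. It remains to confirm the growth condition $\Sigma=O(n_x)$ and $\Sigma=O(n_y)$: since $\Sigma=8/\varepsilon^2$ is a fixed constant independent of the sample sizes, it is trivially $O(n_x)$ and $O(n_y)$ as $n_x,n_y\to\infty$. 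With every hypothesis met, Theorem~\ref{thm:Valid_CI} immediately yields that $\bigl(\widetilde{p}\pm\Phi^{-1}(1-\alpha/2)\,\widetilde{V}\bigr)$ is a valid $(1-\alpha)$-CI for $p$.

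For the privacy claim, I would argue that the whole CI is a post-processing of the output of \textit{LaplaceNoisedCounts$_\varepsilon(X,Y)$}. The earlier Claim establishes that releasing the pair $(\widetilde{X},\widetilde{Y})$ is $(\varepsilon,0)$-DP (each count has sensitivity $1$, receives an $\varepsilon/2$ share of the budget, and the two releases compose). The interval endpoints $\widetilde{p}\pm\Phi^{-1}(1-\alpha/2)\,\widetilde{V}$ are deterministic functions of $\widetilde{X},\widetilde{Y}$ and the public quantities $n_x,n_y,\varepsilon,\alpha$ (recall $n_x,n_y$ are assumed non-private), so forming the interval touches the data only through $(\widetilde{X},\widetilde{Y})$. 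The post-processing property of DP (Proposition 2.1 in \cite{dwork2014algorithmic}) then gives that the CI is $(\varepsilon,0)$-DP.

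There is no genuine obstacle here; the only points requiring care are the scale-to-variance conversion for the Laplace distribution, so that $\Sigma=8/\varepsilon^2$ and not, say, $4/\varepsilon^2$, and the observation that a constant variance automatically satisfies the asymptotic $O(n_x),O(n_y)$ requirement of Theorem~\ref{thm:Valid_CI}.
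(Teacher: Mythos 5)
Your proposal is correct and matches the paper's (implicit) reasoning exactly: the corollary is treated as immediate from Theorem~\ref{thm:Valid_CI} once one notes that $\var(Lap(0,2/\varepsilon))=2(2/\varepsilon)^2=8/\varepsilon^2$ is a constant common variance $\Sigma$ trivially satisfying $\Sigma=O(n_x),O(n_y)$, with the privacy guarantee inherited from Claim~3.1 (DP of \textit{LaplaceNoisedCounts}) via post-processing. Your attention to the scale-to-variance conversion and to $n_x,n_y$ being public is exactly the right bookkeeping; nothing is missing.
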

\vspace{-0.3cm}
\begin{corollary}[valid CI with the Gaussian mechanism] If $L_x,L_y \overset{\text{i.i.d.}}{\sim} \N(0, \sigma^2)$ independent of $X$ and $Y$, where $\sigma^2$ is computed according to Theorem \ref{thm:Gaus_mec_balle} to obtain $(\varepsilon/2, \delta/2)$-DP, then Theorem \ref{thm:Valid_CI} gives a valid CI for $p=p_x/p_y$, which is also $(\varepsilon,\delta)$-DP.  
\end{corollary}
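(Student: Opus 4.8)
The corollary makes two logically separate assertions about the interval produced when the counts are released via the Gaussian mechanism: that it is a valid CI for $p = p_x/p_y$, and that the whole procedure is $(\varepsilon,\delta)$-DP. The plan is to treat Theorem \ref{thm:Valid_CI} and the standard composition and post-processing properties of differential privacy as black boxes, so that the entire argument reduces to checking hypotheses; no fresh asymptotic or distributional analysis is required.

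For validity, I would verify that i.i.d.\ $\N(0,\sigma^2)$ noise satisfies every premise of Theorem \ref{thm:Valid_CI}. Independence from $X,Y$ and zero mean are assumed outright, and setting $\Sigma = \sigma^2$ makes the two noise variances equal, as the theorem demands. The only premise needing an argument is the growth condition $\Sigma = O(n_x)$ and $\Sigma = O(n_y)$. The key observation is that $\sigma^2$ is determined by Theorem \ref{thm:Gaus_mec_balle} from the target level $(\varepsilon/2,\delta/2)$ and the sensitivity of a count, which is $1$; it does not scale with the sample sizes. Hence $\sigma^2 = O(1)$, and since $n_x,n_y \to \infty$ the condition holds trivially. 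With all premises met, validity of the interval follows immediately from Theorem \ref{thm:Valid_CI}.

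For the privacy claim, I would argue by composition followed by post-processing. Each count has sensitivity $1$, so by Theorem \ref{thm:Gaus_mec_balle} the release $\widetilde{X} = X + L_x$ is $(\varepsilon/2,\delta/2)$-DP, and likewise for $\widetilde{Y}$. Basic composition then makes the joint release of the pair $(\widetilde{X},\widetilde{Y})$ an $(\varepsilon/2+\varepsilon/2,\ \delta/2+\delta/2) = (\varepsilon,\delta)$-DP mechanism. Finally, the point estimate $\widetilde{p}$ and the endpoints $\widetilde{p} \pm \Phi^{-1}(1-\alpha/2)\cdot\widetilde{V}$ are deterministic functions of this private pair together with the public constants $n_x,n_y$, so post-processing invariance transfers the $(\varepsilon,\delta)$-DP guarantee to the released interval.

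The one point requiring care---rather than a genuine obstacle---is the interaction with the neighboring relation. A single changed record may in principle alter both $X$ and $Y$ at once (as discussed in Section \ref{sec:2}), so a naive joint-sensitivity argument would be awkward; routing the proof through per-count composition sidesteps this entirely, which is exactly why the budget is split equally across the two releases rather than analyzed jointly. The only substantive thing I would take care to confirm is that the sensitivity entering Theorem \ref{thm:Gaus_mec_balle} is the count sensitivity $1$, and therefore independent of $n$, since it is precisely this fact that supplies the $O(n_x),O(n_y)$ condition linking the privacy calibration back to the validity hypotheses.
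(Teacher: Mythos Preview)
Your proposal is correct and matches the paper's intended reasoning: the paper does not give a separate proof of this corollary, treating it as immediate from Theorem~\ref{thm:Valid_CI} together with basic composition and post-processing, which is exactly the route you take. Your explicit check that $\sigma^2$ depends only on $(\varepsilon,\delta)$ and the count sensitivity (hence is $O(1)\subset O(n_x)\cap O(n_y)$) is the one nontrivial hypothesis verification, and you handle it correctly.
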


The proof of Theorem \ref{thm:Valid_CI} can be found in Appendix \ref{App:CIappendix}. In short, we claim that both $\widehat{p}_x$ and $\widehat{p}_y$ approach normal distributions by the CLT, and thus their private counterparts do, by Slutsky's Theorem. It follows that $\widetilde{p}$ approaches the distribution of a ratio of two normal random variables, by the CMT. We leverage results of \cite{diaz2013existence} to approximate the ratio of two Gaussian random variables with a Gaussian distribution in some constant interval containing $p$, and then asymptotically the normal approximation holds with probability one.

We compute coverage rates of the CIs derived using Theorem \ref{thm:Valid_CI} in Appendix~\ref{subsubsec:gauss_coverage}, for various parameters. For example, when we consider Laplace noise addition, when $\varepsilon=1$, $p_x=0.5$, $p_y=0.5$, $n_x=n_y=200$, taking $\alpha=0.05$, the empirical coverage is $0.933$, and, in general, not surprisingly, we have under-coverage. The Gaussian Mechanism, due to the non-negligible effect of $\delta$ at small sample sizes, has much worse coverage. The reason lies in the proof technique, which uses the fact that the perturbation noise is second-order, and thus asymptotically ignores it. This may be acceptable for large sample sizes, but our primary concern in this paper is small sample sizes. 

\paragraph{A more conservative CI for small sample sizes} To obtain a more conservative CI that is also useful for small sample sizes, instead of ignoring the noise asymptotically (using Slutsky's Theorem), we need to account for the perturbation error. To do so, we note that if the counts are perturbed with Gaussian noise (and not with arbitrary additive noise), then the variances of the binomial random variable (approaching a normal distribution) and the perturbation add up, and we get the following conservative CI.

\begin{theorem}[CI for the Gaussian Mechanism]\label{thm:Gaussian_Valid_CI}
   Let $X$ and $Y$ be as in \eqref{eq:bin_rv}. Let $L_x,L_y \overset{\text{i.i.d.}}{\sim} \N(0, \sigma^2)$ independent from $X$ and $Y$. Denote $\widetilde{X} = X+L_x, \ \widetilde{Y} = Y+L_y$, $\widetilde{p}_x \coloneqq \widetilde{X}/n_x$, $\widetilde{p}_y \coloneqq \widetilde{Y}/n_y$, $p=p_x/p_y$ and $\widetilde{p} = \widetilde{p}_x/\widetilde{p}_y$.
    Define
    $$ \widetilde{V} = \widetilde{p}  \cdot \Bigg(\frac{1}{\widetilde{X}}-\frac{1}{n_x} + \frac{1}{\widetilde{Y}}-\frac{1}{n_y} \\ + \sigma^2 \left(\frac{1}{\widetilde{X}^2} + \frac{1}{\widetilde{Y}^2} \right) \Bigg)^{1/2}.$$
    Then, for any $\sigma^2 \in {\mathbb R}^{+}$ such that $\sigma^2 = O(n_x)$ and $\sigma^2 = O(n_y)$, if $n_x$ and $n_y$ both tend to infinity, then for any $0 < \alpha < 1$,
        $$
    \prob\left(p \in \left(\widetilde{p} \pm \Phi^{-1}(1-\alpha/2)\cdot \widetilde{V}\right)\right) \rightarrow 1-\alpha, 
    $$
where $\Phi^{-1}(\cdot)$ is the inverse of the CDF of a standard Gaussian random variable.
\end{theorem}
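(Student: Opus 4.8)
The plan is to mirror the proof of Theorem~\ref{thm:Valid_CI} (which itself follows \cite{lin2024differentially} via the ratio-of-Gaussians approximation of \cite{diaz2013existence}), but to \emph{retain} the perturbation variance rather than discard it asymptotically through Slutsky's theorem. The one genuinely new ingredient is the observation flagged before the statement: because $L_x,L_y$ are \emph{Gaussian}, the sampling variance of $X/n_x$ and the noise variance of $L_x/n_x$ simply add, so $\widetilde p_x$ is asymptotically normal with the combined variance $v_x \coloneqq p_x(1-p_x)/n_x + \sigma^2/n_x^2$ (and likewise for $y$). Everything downstream of this — the ratio-of-Gaussians step and the plug-in for $\widetilde V$ — then proceeds as in Theorem~\ref{thm:Valid_CI} but with $v_x,v_y$ in place of the pure-binomial variances.

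First I would establish asymptotic normality of each perturbed proportion with its inflated variance. Write $\widetilde p_x - p_x = (X/n_x - p_x) + L_x/n_x$, a sum of two independent terms: the first is asymptotically $\N(0,\,p_x(1-p_x)/n_x)$ by the CLT, the second is exactly $\N(0,\,\sigma^2/n_x^2)$. Under $\sigma^2 = O(n_x)$ the noise variance is $O(1/n_x)$, of the same order as (or smaller than) the sampling variance, so it cannot in general be dropped. Setting $a_n^2 = p_x(1-p_x)/n_x$, $b_n^2 = \sigma^2/n_x^2$, and $\lambda_n^2 = a_n^2/v_x \in [0,1]$, I would write $(\widetilde p_x - p_x)/\sqrt{v_x} = \lambda_n Z_x + \sqrt{1-\lambda_n^2}\,W_x$, where $Z_x = (X/n_x - p_x)/a_n \dto \N(0,1)$ and $W_x = L_x/\sigma \sim \N(0,1)$ exactly, independent. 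A characteristic-function argument then closes it: along any subsequence on which $\lambda_n \to \lambda$, the product of characteristic functions tends to $e^{-t^2\lambda^2/2}\,e^{-t^2(1-\lambda^2)/2} = e^{-t^2/2}$, and since this limit does not depend on $\lambda$, the full sequence converges to the standard normal characteristic function. The same holds for $\widetilde p_y$ with $v_y$.

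Next, since $\widetilde p_x$ and $\widetilde p_y$ are independent and each asymptotically Gaussian with variances $v_x,v_y$, their standardized pair converges jointly to a bivariate normal, so by the continuous mapping theorem $\widetilde p = \widetilde p_x/\widetilde p_y$ converges to a ratio of two independent Gaussians (using $p_y>0$, cf.\ Claim~\ref{clm:consist_est}). I would then invoke the D\'iaz-Franc\'es--Rubio bound of \cite{diaz2013existence}, exactly as in Theorem~\ref{thm:Valid_CI}, to approximate this ratio by a single Gaussian on a fixed neighborhood of $p$; the quality condition (small coefficient of variation of the denominator) holds asymptotically since $\var(\widetilde p_y)/p_y^2 = v_y/p_y^2 \to 0$, and the probability mass the ratio places in that neighborhood tends to one. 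The delta-method parameters are mean $p$ and variance $V^2 = (p_x^2/p_y^2)\big(v_x/p_x^2 + v_y/p_y^2\big)$.

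Finally I would verify that $\widetilde V$ is a consistent plug-in estimate of $V$. Substituting the consistent estimates $\widetilde X \approx n_x p_x$, $\widetilde Y \approx n_y p_y$, $\widetilde p \approx p$ and simplifying $ (1-p_x)/(n_x p_x) = 1/(n_x p_x) - 1/n_x \approx 1/\widetilde X - 1/n_x$ together with $\sigma^2/(n_x p_x)^2 \approx \sigma^2/\widetilde X^2$ (and the same for $y$) reproduces exactly the expression defining $\widetilde V$, so $\widetilde V/V \pto 1$ by the continuous mapping theorem. A Slutsky argument combining the Gaussian limit of $(\widetilde p - p)/V$ with $\widetilde V/V \pto 1$ then yields the stated coverage. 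The main obstacle is the second step: keeping the perturbation variance on an equal footing with the sampling variance and proving asymptotic normality of their sum. This is precisely where Gaussianity of the noise is essential — for non-Gaussian $L_x$ (e.g.\ Laplace) the sum of an asymptotically-normal term and a fixed-shape noise term of comparable scale need not be asymptotically normal with the added variances, so this conservative construction does not carry over to the Laplace mechanism.
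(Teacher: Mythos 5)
Your proposal is correct and follows essentially the same route as the paper's proof: asymptotic normality of each perturbed proportion with the combined variance $p_x(1-p_x)/n_x + \sigma^2/n_x^2$, joint convergence and the continuous mapping theorem to reduce to a ratio of independent Gaussians, the D\'iaz-Franc\'es--Rubio approximation on an interval around $p$, and plug-in consistency of $\widetilde V$ with the final coverage step inherited from Theorem~\ref{thm:Valid_CI}. In fact your characteristic-function/subsequence argument for the variance-addition step is more rigorous than the paper's, which asserts that step loosely (attributing to the CMT that the limit of the independent sum is normal with summed variances), and your closing remark that Gaussianity of the noise is what licenses this step matches the paper's stated motivation for treating the Gaussian mechanism separately.
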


Theorem \ref{thm:Valid_CI} gives intuition for the second-order effect the perturbation has on the standard deviation (see Subsection \ref{subsec:connection}) and provides us with a tool to generate more conservative CIs for other noise-addition mechanisms (such as Laplace, by plugging in the variance of the Laplace random variable $\sigma^2=2b^2$). Clearly, this is also valid by Theorem \ref{thm:Valid_CI}, since it only increases the CI. 

We empirically evaluate CIs for Laplace and Gaussian noise addition using Theorem \ref{thm:Gaussian_Valid_CI} in Figure \ref{fig:CI_width} and in Table \ref{tab:coverage-px-py}, for various parameters. We observe that when the actual counts are at least 30, this conservative method with Laplace noise performs quite well, and the empirical coverage is above the required coverage. A rule of thumb fairly common across statistics is that a sample size of 30 suffices for Gaussian approximation, and this seems to also hold in our case. %A smaller privacy budget would require a larger sample size.

To conclude, from Theorem \ref{thm:Valid_CI} we have that any noise-addition mechanism yields an asymptotically valid CI, and Theorem \ref{thm:Gaussian_Valid_CI} gives us intuition for how to account for the perturbation noise to get good coverage for small sample sizes. Both methods give similar results when the sample size is large. 

\subsection{The noise from privacy is problematic only when the noise from sampling is problematic}\label{subsec:connection} 

When an analyst works with DP statistics, one major concern is whether the perturbation introduced for privacy will ruin the ability to detect true discoveries. One interesting outcome of the CI we derived in Theorem~\ref{thm:Gaussian_Valid_CI} is that if the standard deviation of the ratio without perturbation is small (which corresponds to large $X$ and $Y$), then the variance added by the perturbation is also small; that is, the smaller the sampling randomness, the smaller the perturbation noise. Thus, if even before we consider privacy, the sampling randomness was very large, then we expect the privacy perturbation noise to be substantial, but this is also not an interesting case. If on the other hand an experiment was designed to have large power (the probability to detect true discoveries), then the increase in the CI due to noise for privacy should be reasonable.

\section{Numerical studies}\label{sec:num_study}

We present several numerical studies. First, following Section \ref{sec:noisy_lap_counts}, we explore sample accuracy for a variety of algorithms for privately releasing ratio statistics. We then study the bias of the ratio of noisy counts (accounting for both sampling randomness and privacy noise). Then, following Section \ref{sec:CI}, we study the width and coverage of the proposed CIs for the relative risk. Throughout the numerical study, we consider  $n_x=n_y$ because then the ratio of counts is exactly the relative risk. Since $X$ and $Y$ are independent, considering different $p_x$ and $p_y$ is similar to considering $n_x \neq n_y$. Due to space limitations, a more detailed description of the numerical studies can be found in Appendix~\ref{App:numerical_study}.

\begin{figure}[h]
\begin{center}
\centerline{\includegraphics[width=\columnwidth]{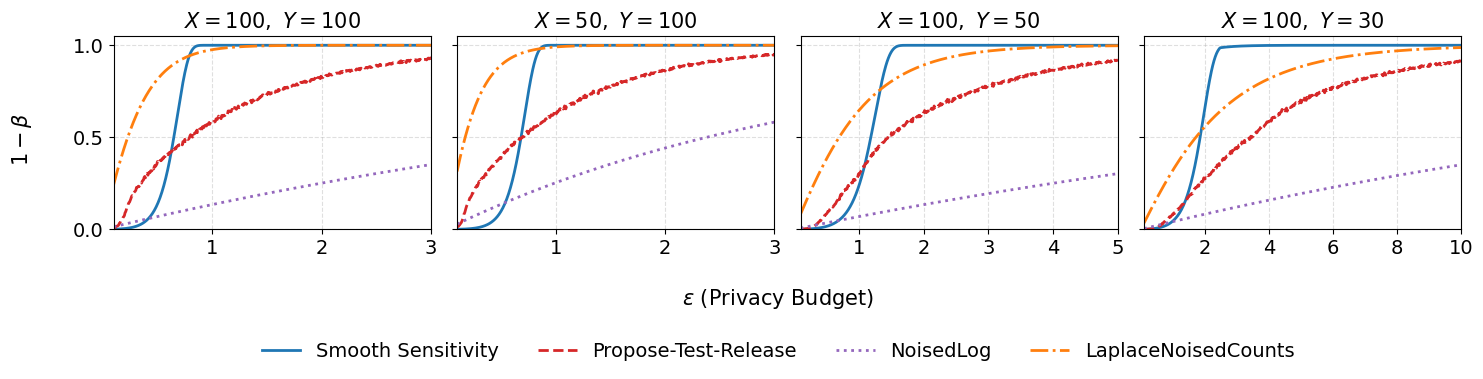}}
\caption{Comparison of the sample accuracy $(1 - \beta)$ of the algorithms LaplaceNoisedCounts %\textit{LaplaceNoisedCounts$_\varepsilon(X, Y)$} (LaplaceNoisedCounts; 
(see Claim~\ref{clm:acc_Alg_count_no_max}),  %\textit{NoisedLog$_\varepsilon(X, Y)$} (
NoisedLog (see Claim~\ref{clm:acc_exp_naive}), Propose-Test-Release (see Algorithm~\ref{alg:PTR}), and the smooth sensitivity approach (described in Appendix \ref{App:local_noise_methods}). We take $\alpha=0.1$, and $\delta=1/n_x$ when needed.}
\label{fig:Sample_accuracy}
\end{center}
\vskip -0.2in
\end{figure}

\paragraph{Sample accuracy}

When $Y$ is small relative to $X$---that is, the local sensitivity is relatively large---the Smooth Sensitivity method performs well. That said, $\delta=1/n_x$ is extremely high (typically one sets $\delta<<1/n$). At more reasonable settings of $\delta$ and for these small sample sizes, \textit{LaplaceNoisedCounts$_\varepsilon(X, Y)$} outperforms the other algorithms for almost every $\varepsilon$. %Furthermore, in this regime, the numerical value of the ratio $X/Y=100/30$ is larger; thus, if we think of the ratio in the more natural multiplicative terms, we expect less accuracy overall. 

\paragraph{The bias of LaplaceNoisedCounts}
    In Figure \ref{fig:Bias_figure} we quantify the bias of the \textit{LaplaceNoisedCounts$_\varepsilon(X, Y)$} algorithm. We take $n_x=n_y$ so $X/Y=\widehat{p}_x/\widehat{p}_y=\widehat{p}$ and $\widetilde{X}/\widetilde{Y}=\widetilde{p}_x/\widetilde{p}_y=\widetilde{p}$. We compare the bias of $\widehat{p}$ (the non-private estimate), $\widetilde{p}$ (the private estimate), and Claim \ref{clm:bias_ratio_counts} about the bias of the private estimate. First, we see that the approximation derived in Claim \ref{clm:bias_ratio_counts} is highly accurate. Additionally, at a very reasonable privacy budget, the bias introduced by privacy perturbation is smaller than the bias introduced by sampling. Overall, it also seems that the bias is not significant.

\begin{figure}[h]
\begin{center}
\centerline{\includegraphics[width=\columnwidth]{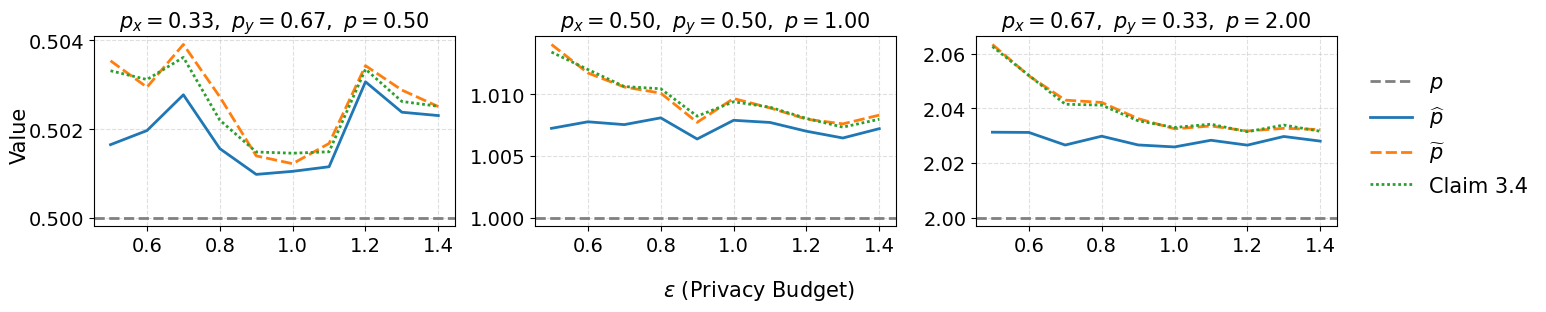}}
\caption{Comparison of: the bias of a ratio of noisy counts drawn according to Equation~\eqref{eq:bin_rv} ($\widehat{p}$), a ratio of the outputs of \textit{LaplaceNoisedCounts$_\varepsilon(X, Y)$} on those counts ($\widetilde{p}$),  and the expected value according to Claim \ref{clm:bias_ratio_counts}.
For each $\varepsilon$ we sample $20,000$ pairs of binomial random variables with $n_x=n_y=150$ and $p_x$ and $p_y$ as stated in the title, and report the average for each $\varepsilon$.} 
\label{fig:Bias_figure}
\end{center}
\vskip -0.2in
\end{figure}

\paragraph{Confidence interval width and coverage}
Following Section \ref{sec:CI}, we compare two non-private methods to construct CIs (for non-private counts), to the methods derived in Theorem \ref{thm:Valid_CI} and in Theorem \ref{thm:Gaussian_Valid_CI}.

\begin{figure}[h]
\begin{center}
\centerline{\includegraphics[width=\columnwidth]{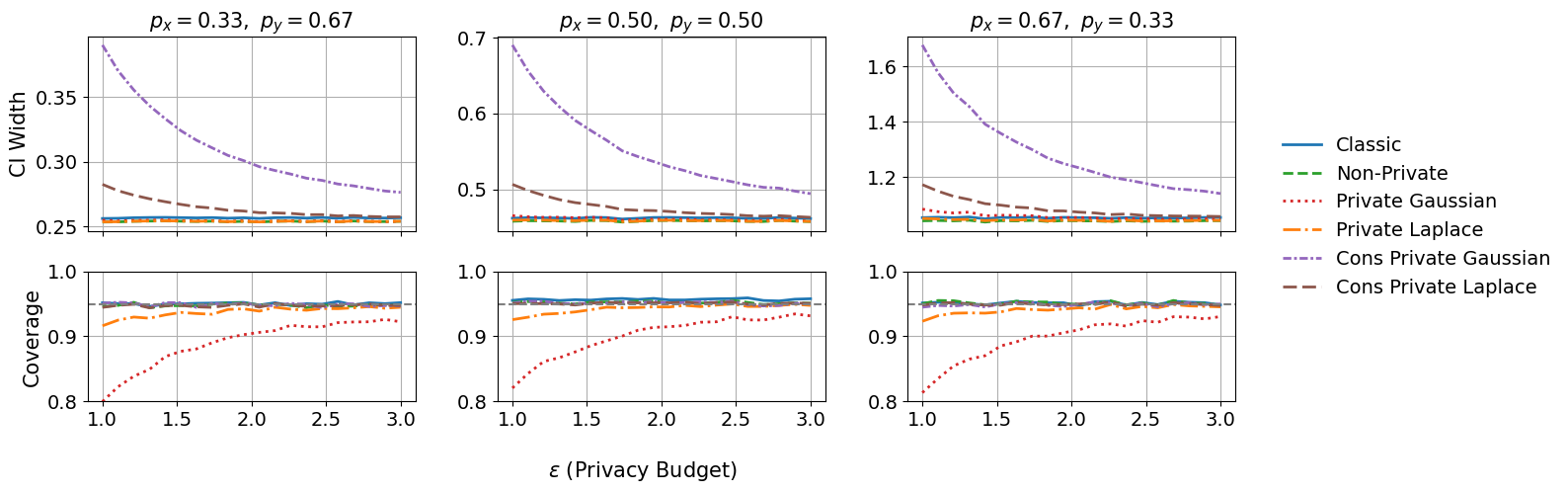}}
\caption{Average width and coverage of 10,000 CIs constructed for 10,000 pairs of binomial random variables drawn according to Equation~\eqref{eq:bin_rv}, with $n_x=n_y=150$. We compute the Classic CI (see Equation \eqref{eq:conf_int_RR}) and the non-private CI from binomial counts. We then perturb the counts with both Laplace and Gaussian noise to obtain $(\varepsilon,0)$-DP and $(\varepsilon, \delta=10^{-4})$-DP respectively, and compute CIs using Theorem \ref{thm:Valid_CI} (Private Laplace and Private Gaussian), and according to Theorem \ref{thm:Gaussian_Valid_CI} (Cons Private Gaussian and Laplace); for the Laplace noised counts we plug in the variance of the Laplace.}
\label{fig:CI_width}
\end{center}
\vskip -0.2in
\end{figure}

First, we see in Figure~\ref{fig:CI_width} that most of the CIs have the desired coverage, not just asymptotically (as Theorem \ref{thm:Valid_CI} and \ref{thm:Gaussian_Valid_CI} state), but also at small sample sizes. We see, for example, that the conservative private Laplace CI is about 20\% larger than the non-private CI when $\varepsilon=1$, while obtaining the desired coverage (note that $\E[X]$ and $\E[Y]$ are between 50 and 100). The Gaussian Mechanism underperforms the Laplace Mechanism %(either by width or coverage) 
due to the non-negligible effect of $\delta$  for small sample sizes.

\begin{ack}
This work was supported in part by a gift to the McCourt School of Public Policy and Georgetown University, Simons Foundation Collaboration 733792, Israel Science Foundation (ISF) grant 2861/20, Apple, ERC grant 101125913, and a grant from the Israeli Council of Higher Education. Views and opinions expressed are however those of the authors only and do not necessarily reflect those of the European Union or the European Research Council Executive Agency. Neither the European Union nor the granting authority can be held responsible for them.
\end{ack}

\newpage

\newpage
\appendix

\section{Additional background on differential privacy}\label{App:DP_intro}

Let $\cal D$ be an abstract data domain. A dataset of size $n$ is a collection of $n$ individuals' data records: $D=\{D_i\}_{i=1}^n \in {\cal D}^n$. 
We assume that $n$ is public; that is, we do not protect the size of the dataset.
We call two datasets $D, D' \in {\cal D}^n$ neighbors, denoted by $D \sim D'$, if they are identical except in one of their records. 

One way to achieve DP for algorithms that output numbers (or vectors of numbers) is by noise addition mechanisms. In order to define them, we first define a quantity that is called \textit{Global Sensitivity} see Definition \ref{def:global_sen}, that measures the maximal change an output of a query can change (in some norm), when we change one individual, for any neighboring datasets.

\begin{definition}[Global sensitivity]\label{def:global_sen}
    Given a data domain ${\cal D}$ and a function $f: {\cal D}^n\rightarrow \mathbb{R}$, the global sensitivity of $f$ is given by
    $\Delta_{f} = \underset{\substack{D,D'\in {\cal D}^n \\ 
    D \sim D'}}{\text{max}}|f(D)-f(D')|.$
\end{definition}

The Laplace mechanism (see Definition \ref{Def:Lap_mec}) is one of the classic methods to obtain $(\varepsilon,0)$-DP. Simply out, it adds zero-mean Laplace noise to a statistic, with variance proportional to the global sensitivity. 

\begin{definition}[Laplace mechanism]\label{Def:Lap_mec}
   Consider a data domain ${\cal D}$ and a function $f: {\cal D}^n\rightarrow \mathbb{R}$. The Laplace mechanism, denoted by ${\cal M}^{Lap}_f$, simply adds independent Laplace noise to the results of $f$ on a dataset; that is,
    $$ {\cal M}^{Lap}_f(D) = f(D) + Y, \text{ where } Y \sim \text{Lap}(b), \ D \in {\cal D}^n,$$
   where $\text{Lap}(b)$ denotes a distribution with probability density function $ p(x) = \frac{1}{2b}\exp\left(-\frac{|x|}{b}\right).$ 
\end{definition}

\begin{theorem}[Theorem 3.6 in \cite{dwork2014algorithmic}]\label{clm:Lap_DP}
    Given some $\varepsilon>0$, the Laplace Mechanism with $b=\Delta_f/\varepsilon$ is $(\varepsilon,0)$-DP, where $\Delta_f$ is the global sensitivity of $f$ (see Definition \ref{def:global_sen}).
\end{theorem}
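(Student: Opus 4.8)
The plan is to prove $(\varepsilon,0)$-DP directly, by bounding the ratio of the output densities pointwise and then integrating over the event. First I would fix an arbitrary pair of neighboring datasets $D \sim D'$ and an arbitrary measurable event $E \subseteq \R$. Since $\M^{Lap}_f(D) = f(D) + Y$ with $Y \sim \text{Lap}(b)$, the output of the mechanism on $D$ has density $p_D(z) = \frac{1}{2b}\exp\!\left(-\frac{|z - f(D)|}{b}\right)$, i.e.\ a Laplace density centered at $f(D)$; likewise $p_{D'}$ is centered at $f(D')$. The whole argument then reduces to controlling the ratio $p_D(z)/p_{D'}(z)$ uniformly in $z$.

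Next I would compute this ratio explicitly. Because the normalizing constants $\tfrac{1}{2b}$ cancel, we get
$$\frac{p_D(z)}{p_{D'}(z)} = \exp\!\left(\frac{|z - f(D')| - |z - f(D)|}{b}\right).$$
The key step is to bound the exponent using the reverse triangle inequality, $\big||z-f(D')| - |z-f(D)|\big| \le |f(D)-f(D')|$, together with the definition of global sensitivity (Definition~\ref{def:global_sen}), which gives $|f(D)-f(D')| \le \Delta_f$ since $D \sim D'$. Substituting $b = \Delta_f/\varepsilon$ then yields $p_D(z)/p_{D'}(z) \le \exp(\Delta_f/b) = e^{\varepsilon}$ for every $z$.

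Finally I would integrate this pointwise bound over $E$: since $p_D(z) \le e^{\varepsilon} p_{D'}(z)$ everywhere, we obtain $\prob(\M^{Lap}_f(D) \in E) = \int_E p_D(z)\,dz \le e^{\varepsilon}\int_E p_{D'}(z)\,dz = e^{\varepsilon}\,\prob(\M^{Lap}_f(D') \in E)$, which is exactly the $(\varepsilon,0)$-DP condition (with $\delta = 0$). As $D \sim D'$ and $E$ were arbitrary, the claim follows. There is no serious obstacle here: the only substantive insight is the reverse-triangle-inequality step that converts the difference of shift distances into the sensitivity, and the rest is routine bookkeeping. (For the scalar output stated in the theorem this is immediate; the only place care would be needed in a vector-valued generalization is replacing the single density by a product of coordinatewise Laplace densities and using the $\ell_1$ sensitivity, which is not required for the statement as given.)
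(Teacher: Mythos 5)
Your proof is correct and follows exactly the canonical argument for this result: the paper does not prove this theorem itself but cites it as Theorem 3.6 of Dwork and Roth, and the proof given there is precisely your route---bound the pointwise density ratio $p_D(z)/p_{D'}(z)$ via the reverse triangle inequality, use $|f(D)-f(D')|\le \Delta_f$ with $b=\Delta_f/\varepsilon$ to get the uniform bound $e^{\varepsilon}$, and integrate over the event $E$. Your closing remark about the vector-valued case ($\ell_1$ sensitivity and product densities) is also the correct generalization, though, as you note, it is not needed for the scalar statement here.
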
 

The Gaussian Mechanism (Definition \ref{def:Gaus_mec}), similar to the Laplace Mechanism, adds zero-mean Gaussian noise to a statistic, with variance proportional to the global sensitivity.

\begin{definition}[Gaussian mechanism]\label{def:Gaus_mec}
    Consider a data domain ${\cal D}$ and a function $f: {\cal D}^n\rightarrow \mathbb{R}$. The Gaussian mechanism, denoted by ${\cal M}^{Gaus}_f$, simply adds independent Gaussian noise to the results of $f$ on a dataset; that is,
    $$ {\cal M}^{Gaus}_f(D) = f(D) + Y, \text{ where } Y \sim \text{N}(0, \sigma^2), \ D \in {\cal D}^n.$$
\end{definition}

There are two main results on the relationship between the variance of the added noise and the differential privacy properties of the Gaussian mechanism.
The first, stated in \cite{dwork2014algorithmic}, is not tight but is explicit (see Theorem \ref{thm:Gaus_mec_dwork}).

\begin{theorem}[Theorem A.1. in \cite{dwork2014algorithmic}]\label{thm:Gaus_mec_dwork}
   Let $f: {\cal D}^n \rightarrow \mathbb{R}$ be a function with global sensitivity $\Delta_f$. For any $\varepsilon \in (0,1)$, $\delta \in (0,1)$, and $D \in {\cal D}$, the Gaussian mechanism ${\cal M}_f^{Gaus}(D)=f(D)+Z$ where $Z\sim \N(0, \sigma^2)$ is $(\varepsilon,\delta)$-DP with 
   $\sigma^2 = \frac{2ln(1.25/\delta)\Delta_f^2}{\varepsilon^2}.$
\end{theorem}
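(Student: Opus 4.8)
The plan is to run the standard privacy-loss argument of \cite{dwork2014algorithmic}. First I would reduce to the scalar worst case: since decreasing $|f(D)-f(D')|$ only brings the two output densities closer together, it suffices to treat neighbors with $|f(D)-f(D')|=\Delta_f$, and by translation invariance of the Gaussian I may place $f(D)=0$ and $f(D')=\Delta_f$. Let $p_D,p_{D'}$ denote the densities of $\M_f^{Gaus}(D),\M_f^{Gaus}(D')$. Everything hinges on the \emph{privacy-loss} random variable $L(x)=\ln\bigl(p_D(x)/p_{D'}(x)\bigr)$, evaluated at $x$ drawn from $\M_f^{Gaus}(D)$.

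The first key step is a reduction lemma: it suffices to show $\prob_{x\sim \M_f^{Gaus}(D)}(L(x)>\varepsilon)\le\delta$. Indeed, for any measurable $E\subseteq\R$ I would split $\prob(\M_f^{Gaus}(D)\in E)$ over $E\cap\{L\le\varepsilon\}$ and $E\cap\{L>\varepsilon\}$; on the first piece $p_D\le e^{\varepsilon}p_{D'}$ pointwise, contributing at most $e^{\varepsilon}\prob(\M_f^{Gaus}(D')\in E)$, while the second piece contributes at most $\delta$ by hypothesis, which is exactly the $(\varepsilon,\delta)$-DP inequality. So the task collapses to bounding the upper tail of $L$.

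The analytic heart is then an explicit tail computation. A one-line calculation gives $L(x)=(\Delta_f^2-2x\Delta_f)/(2\sigma^2)$, so with $x\sim\N(0,\sigma^2)$ the loss is itself Gaussian: $L\sim\N\!\bigl(\Delta_f^2/(2\sigma^2),\,\Delta_f^2/\sigma^2\bigr)$. Standardizing, $\prob(L>\varepsilon)=\prob\bigl(\N(0,1)>\theta\bigr)$ with threshold $\theta=\varepsilon\sigma/\Delta_f-\Delta_f/(2\sigma)$. I would then apply the Gaussian tail bound $\prob(\N(0,1)>\theta)\le (\theta\sqrt{2\pi})^{-1}e^{-\theta^2/2}$ and substitute the ansatz $\sigma=c\,\Delta_f/\varepsilon$, which collapses the threshold to the clean form $\theta=c-\varepsilon/(2c)$.

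The main obstacle is the final calibration: I must choose $c$ so that $(\theta\sqrt{2\pi})^{-1}e^{-\theta^2/2}\le\delta$, and the constant $1.25$ in the statement is precisely what is needed to absorb the prefactor $(\theta\sqrt{2\pi})^{-1}$ together with the cross term $-\varepsilon/(2c)$. Here I would invoke the hypothesis $\varepsilon<1$ to control the lower-order contributions, so that $\theta\ge c-1/(2c)$ and the prefactor stays bounded, reducing the requirement to $c^2\ge 2\ln(1.25/\delta)$, i.e. $\sigma^2=2\ln(1.25/\delta)\Delta_f^2/\varepsilon^2$. This last step is delicate only in that it trades a clean asymptotic tail estimate for an explicit, slightly loose constant; no new idea beyond careful bookkeeping is required.
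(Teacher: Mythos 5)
The paper does not prove this statement---it imports it verbatim as Theorem A.1 of Dwork and Roth \cite{dwork2014algorithmic}---and your proposal is a faithful reconstruction of exactly the argument in Appendix A of that reference: the reduction via the privacy-loss random variable (the standard splitting lemma showing $\prob_{x\sim \M_f^{Gaus}(D)}(L(x)>\varepsilon)\le\delta$ implies $(\varepsilon,\delta)$-DP), the observation that $L\sim\N\bigl(\Delta_f^2/(2\sigma^2),\,\Delta_f^2/\sigma^2\bigr)$, and the Gaussian tail calibration with $\sigma=c\Delta_f/\varepsilon$ yielding $c^2\ge 2\ln(1.25/\delta)$. The only caveat is the one you already flag yourself: the final bookkeeping that absorbs the prefactor $\bigl(\theta\sqrt{2\pi}\bigr)^{-1}$ genuinely requires $\varepsilon<1$ and $\delta$ not too close to $1$, exactly as in the source.
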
 

The second, stated in \cite{balle2018improving}, is tight, but the variance has to be approximated numerically (see Theorem \ref{thm:Gaus_mec_balle}). 

\begin{theorem}[Theorem 8 in \cite{balle2018improving}, simplified]\label{thm:Gaus_mec_balle}
    Let $f: {\cal D}^n \rightarrow \mathbb{R}$ be a function with global sensitivity $\Delta_f$. For any $\varepsilon>0$, $\delta \in (0,1)$, and $D\in {\cal D}$, the Gaussian  mechanism ${\cal M}^{Gaus}(D)=f(D)+Z$ with $Z\sim \N(0, \sigma^2)$ is $(\varepsilon,\delta)$-DP if and only if
    $$\Phi \left(\frac{\Delta_f}{2\sigma}-\frac{\varepsilon \sigma}{\Delta_f} \right) - e^{\varepsilon}\Phi \left(-\frac{\Delta_f}{2\sigma}-\frac{\varepsilon \sigma}{\Delta_f} \right) \leq \delta, $$
    where $\Phi$ is the cumulative distribution function of the standard Gaussian random variable.
\end{theorem}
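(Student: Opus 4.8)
The plan is to compute the smallest $\delta$ for which $(\varepsilon,\delta)$-DP holds and to verify that it equals the stated expression, so that the displayed inequality is exactly the condition $\delta \ge \delta^*(\varepsilon)$. Since $\mathcal{M}^{Gaus}(D)\sim\N(f(D),\sigma^2)$ and $\mathcal{M}^{Gaus}(D')\sim\N(f(D'),\sigma^2)$ are one-dimensional Gaussians differing only in their mean, the pair depends on the neighbors $D\sim D'$ only through $\mu:=|f(D)-f(D')|$, which by Definition~\ref{def:global_sen} ranges over $[0,\Delta_f]$ with the maximum attained. After a translation I may take the two output laws to be $P=\N(0,\sigma^2)$ and $Q=\N(\mu,\sigma^2)$; then $(\varepsilon,\delta)$-DP for this pair is precisely the requirement $\sup_{S}\big(P(S)-e^\varepsilon Q(S)\big)\le\delta$ together with the reverse inequality. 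The reflection $x\mapsto\mu-x$ swaps $P$ and $Q$, so the two suprema are equal and I need only control one of them.

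First I would identify the maximizing event. Writing $p,q$ for the densities, $P(S)-e^\varepsilon Q(S)=\int_S (p-e^\varepsilon q)$ is maximized exactly by $S^*=\{x:p(x)>e^\varepsilon q(x)\}$, since including any point where $p\le e^\varepsilon q$ cannot increase the integral (a Neyman--Pearson-type argument). For the Gaussians the log-likelihood ratio $\log(p/q)=(\mu^2-2\mu x)/(2\sigma^2)$ is affine and strictly decreasing in $x$, so $S^*$ is a half-line $\{x<c\}$ with $c=\mu/2-\varepsilon\sigma^2/\mu$. Evaluating the two normal CDFs at $c$ (standardizing each by its own mean) gives the tight parameter at gap $\mu$,
$$\delta^*(\mu)=\Phi\!\left(\frac{\mu}{2\sigma}-\frac{\varepsilon\sigma}{\mu}\right)-e^\varepsilon\,\Phi\!\left(-\frac{\mu}{2\sigma}-\frac{\varepsilon\sigma}{\mu}\right).$$

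It then remains to reduce from a fixed gap $\mu$ to the worst case over all neighbors. The stated expression is exactly $\delta^*(\Delta_f)$, so I would show that $\delta^*(\mu)$ is nondecreasing in $\mu$: granting this, DP holds simultaneously for all neighbors if and only if it holds at $\mu=\Delta_f$, which yields both directions of the iff (sufficiency because every smaller gap is dominated; necessity because the gap-$\Delta_f$ neighbors together with their optimal set $S^*$ force $\delta\ge\delta^*(\Delta_f)$). I expect this monotonicity to be the main obstacle, but it resolves cleanly: with $a=\tfrac{\mu}{2\sigma}-\tfrac{\varepsilon\sigma}{\mu}$ and $b=-\tfrac{\mu}{2\sigma}-\tfrac{\varepsilon\sigma}{\mu}$ one checks $b^2-a^2=2\varepsilon$, whence the density identity $\phi(a)=e^\varepsilon\phi(b)$; differentiating $\delta^*$ in $\mu$ and applying this identity collapses the derivative to $\phi(a)/\sigma>0$. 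The only delicate bookkeeping is the chain rule through the two $\mu$-dependent arguments, after which the theorem follows.
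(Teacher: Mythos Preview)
The paper does not prove this theorem: it is quoted from \cite{balle2018improving} as background (Theorem~\ref{thm:Gaus_mec_balle}) and used only to calibrate Gaussian noise numerically, so there is no ``paper's own proof'' to compare against.

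That said, your argument is correct and is essentially the proof in the original reference. The reduction to a pair $P=\N(0,\sigma^2)$, $Q=\N(\mu,\sigma^2)$ via translation, the Neyman--Pearson identification of the maximizing half-line, and the symmetry $x\mapsto\mu-x$ handling the two directions of the DP inequality are all standard and sound. Your computation of the threshold $c=\mu/2-\varepsilon\sigma^2/\mu$ and the resulting $\delta^*(\mu)$ check out. The monotonicity step is the only nontrivial piece, and your treatment is clean: the identity $b^2-a^2=2\varepsilon$ gives $\phi(a)=e^\varepsilon\phi(b)$, and then $a'-b'=1/\sigma$ collapses the derivative to $\phi(a)/\sigma>0$. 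One minor caveat: the ``only if'' direction uses that the supremum $\Delta_f$ in Definition~\ref{def:global_sen} is actually attained by some pair of neighbors; the paper's definition uses a $\max$, so this is implicit, but if $\Delta_f$ were only a supremum you would need a limiting argument to get the necessity.
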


\section{Naive methods for ratio perturbation}\label{App:DP_ratio_naive}

The most naive method to release a differentially private ratio of counts is by straightforward noise addition to the ratio. The numerator and denominator are finite and strictly positive (by assumption), so the sensitivity is finite. We consider Laplace noise addition for simplicity and because its cumulative density function has an explicit expression; similar results also hold for Gaussian noise.

Define the algorithm \textit{NaiveRelativeRisk$_\varepsilon(X, Y)$}, which receives privacy parameter $\varepsilon > 0$ and counts $X, Y > 0$, and returns a private estimate of the relative risk. It outputs $
\widetilde{Z} = \frac{X}{Y} + \mathrm{Lap}\left(\frac{n_x}{2\varepsilon}\right)$
where $n_x\in \mathbb{N}$, $n_x\geq X$, and $\mathrm{Lap}(\cdot)$ denotes Laplace noise with mean zero, with the given scale.

The algorithm is trivially differentially private, and we can derive its sample accuracy from the Laplace distribution.

\begin{claim}\label{clm:naive_lap_dp}
     Algorithm \textit{NaiveRelativeRisk$_\varepsilon(X, Y)$} is $(\varepsilon,0)$-DP.
\end{claim}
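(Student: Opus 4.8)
The plan is to recognize \textit{NaiveRelativeRisk$_\varepsilon(X,Y)$} as an instance of the Laplace mechanism (Definition~\ref{Def:Lap_mec}) applied to the function $f(D) = X/Y$, and then to invoke Theorem~\ref{clm:Lap_DP}. For this it suffices to show that the global sensitivity of $f$ (Definition~\ref{def:global_sen}) satisfies $\Delta_f = n_x/2$, since the chosen scale $b = n_x/(2\varepsilon)$ then equals $\Delta_f/\varepsilon$, which is exactly the calibration required by Theorem~\ref{clm:Lap_DP} to obtain $(\varepsilon,0)$-DP.

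The core of the argument is the sensitivity computation. Since the group sizes $n_x, n_y$ are public constants, moving to a neighboring dataset flips the outcome of a single individual in exactly one of the two groups, so either $X$ or $Y$ changes by one while the other is held fixed; moreover both counts remain strictly positive on the admissible domain, so $Y \geq 1$ and $X \leq n_x$. I would bound the two cases separately. If the numerator moves, then $|X/Y - (X\pm 1)/Y| = 1/Y \leq 1$ because $Y \geq 1$. If the denominator moves, then $|X/Y - X/Y'|$ with $|Y - Y'| = 1$ and $Y, Y' \geq 1$ is maximized at the transition $Y = 1 \leftrightarrow Y' = 2$, giving $X\cdot|1 - 1/2| = X/2 \leq n_x/2$. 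Taking the maximum over all admissible neighbors yields $\Delta_f = \max\{1, n_x/2\} = n_x/2$ in the relevant regime $n_x \geq 2$.

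I would then conclude by applying Theorem~\ref{clm:Lap_DP}: the Laplace mechanism with scale $b = \Delta_f/\varepsilon = n_x/(2\varepsilon)$ is $(\varepsilon,0)$-DP, which is precisely the noise scale used by the algorithm, completing the proof.

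The main obstacle---and the reason this \emph{naive} approach is ultimately unattractive---is pinning down the worst case in the sensitivity computation. The delicate point is that the denominator is allowed to be as small as $1$ while the numerator is as large as $n_x$, so the denominator-change case dominates and forces $\Delta_f$ to scale linearly with $n_x$. One must also be careful to restrict attention to neighboring pairs in which both counts stay strictly positive (ruling out a transition to $Y = 0$, which would make the ratio undefined and the sensitivity infinite); this is exactly what the standing assumption $X, Y > 0$ provides. The resulting large sensitivity is what drives the poor privacy--accuracy tradeoff discussed in Appendix~\ref{App:DP_ratio_naive}.
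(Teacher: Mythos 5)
Your proposal is correct and follows essentially the same route as the paper's proof: compute the global sensitivity of $X/Y$ by identifying the worst-case neighboring pair $X=n_x$, $Y=1 \leftrightarrow Y=2$ to get $\Delta_f = n_x/2$, then invoke Theorem~\ref{clm:Lap_DP} with scale $b = n_x/(2\varepsilon)$. Your treatment is in fact slightly more careful than the paper's, since you explicitly bound the numerator-change case by $1$ and note the $\max\{1, n_x/2\}$ comparison (and the implicit $n_x \geq 2$ regime), which the paper leaves tacit.
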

\begin{proof}
    Remember that $X \in [1,2,...,n_x]$ and $Y \in [1,2,...,n_y]$. The global sensitivity is thus the maximal change in absolute value between the ratios if we change one individual, either in $X$ or $Y$. This is given when the first sample is given by $X=n_x$, $Y=1$, and the neighbor sample is given by $X=n_x$, $Y=2$. Thus $$ \Delta = |n_x-n_x/2| = n_x/2,$$   and the claim follows from the Laplace mechanism (see Claim \ref{clm:Lap_DP}).
\end{proof}

\begin{claim}\label{clm:naive_lap_sample_acc}
    Algorithm \textit{NaiveRelativeRisk$_\varepsilon(X, Y)$} is $\left(\alpha,\beta \right)$-sample accurate, with
    $ \beta =  \exp \left(-2\alpha\varepsilon/n_x \right). $
\end{claim}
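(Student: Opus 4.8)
The plan is to recognize that $(\alpha,\beta)$-sample accuracy (Definition \ref{def:sample_acc}) measures only the perturbation error relative to the target query, which here is the true ratio $q(D)=X/Y$. Since the algorithm returns $r=\widetilde{Z}=X/Y+\mathrm{Lap}(n_x/(2\varepsilon))$, the deviation $r-q(D)$ is exactly one draw from $\mathrm{Lap}(n_x/(2\varepsilon))$. The whole problem therefore reduces to bounding a Laplace tail: I need $\mathbb{P}(|L|>\alpha)$ for $L\sim\mathrm{Lap}(b)$ with scale $b=n_x/(2\varepsilon)$.

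Next I would evaluate this tail directly from the density $p(x)=\tfrac{1}{2b}e^{-|x|/b}$. Using symmetry, $\mathbb{P}(|L|>\alpha)=2\int_\alpha^\infty \tfrac{1}{2b}e^{-x/b}\,dx=e^{-\alpha/b}$, and substituting $b=n_x/(2\varepsilon)$ gives $\beta=\exp(-2\alpha\varepsilon/n_x)$, exactly as claimed. In fact the inequality in the definition holds with equality, so the bound is tight.

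There is essentially no obstacle here; the only points requiring care are conceptual rather than computational. First, one must correctly identify the estimation target as the noiseless ratio $X/Y$, so that sample accuracy captures the privacy noise alone, matching the convention used elsewhere in the paper. Second, one must plug in the correct scale $b=n_x/(2\varepsilon)$ --- which itself comes from the sensitivity analysis in Claim \ref{clm:naive_lap_dp} --- into the standard Laplace tail formula. The remaining calculation is a single elementary integral.
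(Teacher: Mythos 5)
Your proof is correct and follows exactly the route the paper intends: the paper omits an explicit proof, noting only that the sample accuracy follows ``from the Laplace distribution,'' and your computation---identifying the deviation as a single $\mathrm{Lap}(n_x/(2\varepsilon))$ draw and evaluating the two-sided tail $\mathbb{P}(|L|>\alpha)=e^{-\alpha/b}$ with $b=n_x/(2\varepsilon)$---is precisely that derivation. Your observation that the bound holds with equality is also consistent with the paper's convention (cf.\ Claim~\ref{clm:acc_exp_naive}, where $\beta$ is likewise the exact tail probability).
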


We note that the accuracy decreases with $n_x$, an undesirable property. Assuming a lower bound on $Y$ could help, but we prefer to avoid such assumptions if possible. If we assume that $Y \geq C$, we get a smaller sensitivity $ \Delta = n_x/(C^2+C)$. If $C$ is large enough or $n_x$ is small, this can work to some extent, but we want to avoid making such assumptions because we are interested in small sample sizes.

Claims \ref{clm:naive_lap_dp} and \ref{clm:naive_lap_sample_acc} can be easily adjusted to Gaussian noise; just remember that the CDF of the Gaussian distribution does not have a closed form, although many tail bounds exist. 

\subsubsection*{Perturbation of the log relative risk}

Another naive option is to perturb the logarithm of the relative risk ratio. By doing so we move from a multiplicative term to an additive term, add unbiased noise calibrated to it, and then post-process the perturbed response by exponentiating. 

For further discussion of tailored noise addition, see \cite{shoham2022asking}, where they show that for different metrics under the same noise mechanism, different transformations can be applied to optimize the accuracy of the output under the same $(\varepsilon,\delta)$-DP level.

Define the algorithm \textit{NoisedLog$_\varepsilon(X, Y)$}, which receives a privacy parameter $\varepsilon > 0$ and counts $X, Y > 0$, and returns a private estimate of the relative risk. It outputs $
\widetilde{Z} = \frac{X}{Y} \cdot e^{\mathrm{Lap}\left(\frac{\log(2)}{\varepsilon}\right)} $.

\begin{claim}\label{claim:lap_dp}
    Algorithm \textit{NoisedLog$_\varepsilon(X, Y)$} is $(\varepsilon,0)$-DP.
\end{claim}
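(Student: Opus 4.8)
The plan is to recognize \textit{NoisedLog$_\varepsilon(X,Y)$} as nothing more than the Laplace mechanism applied to the logarithm of the ratio, followed by a deterministic post-processing step (exponentiation). Concretely, set $g(X,Y) \coloneqq \log(X/Y) = \log X - \log Y$ and observe that
$$ \widetilde{Z} = \frac{X}{Y}\cdot e^{\mathrm{Lap}(\log(2)/\varepsilon)} = \exp\!\big(g(X,Y) + \mathrm{Lap}(\log(2)/\varepsilon)\big). $$
So the only two ingredients we need are (i) the global sensitivity of $g$, which governs the correct Laplace scale, and (ii) the post-processing property of differential privacy.

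First I would compute the global sensitivity $\Delta_g$ of $g$. Since $X,Y$ are strictly positive integers (so $X,Y\geq 1$ and the logarithm is finite), and since under our neighboring relation a single individual changes exactly one of the two counts by at most one while leaving the other fixed, it suffices to bound the effect of replacing a count $k\geq 1$ by $k\pm 1$. That change contributes $|\log((k\pm1)/k)|$ to $|g(X,Y)-g(X',Y')|$, and the function $k\mapsto |\log((k+1)/k)|$ is decreasing in $k$, so the supremum over admissible moves is attained at $k=1$, where it equals $\log 2$. The same bound holds whether the altered count is in the numerator or the denominator, and the bound is achieved (e.g.\ $X=1\to X=2$ with $Y$ fixed). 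Hence $\Delta_g = \log 2$.

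I would then conclude by invoking the Laplace mechanism. By Theorem~\ref{clm:Lap_DP}, adding $\mathrm{Lap}(b)$ noise with $b = \Delta_g/\varepsilon = \log(2)/\varepsilon$ to $g(X,Y)$ yields an $(\varepsilon,0)$-DP release of the noised log-ratio. The estimator $\widetilde{Z}$ is obtained from this noised quantity by applying the deterministic map $\exp(\cdot)$, which uses no further access to the data; by the post-processing property of differential privacy, $\widetilde{Z}$ is therefore also $(\varepsilon,0)$-DP.

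The proof is essentially immediate given the Laplace-mechanism and post-processing machinery, so the only real content—and hence the step to treat carefully—is the sensitivity calculation: verifying that the worst case is the transition of a count between $1$ and $2$, and being explicit that we assume a neighboring notion under which a single record perturbs only one of $X$ and $Y$ (by contrast with the more permissive notion handled via budget splitting for \textit{LaplaceNoisedCounts}). Under that assumption the bound $\Delta_g=\log 2$ is tight, and nothing further is required.
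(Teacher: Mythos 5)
Your proposal is correct and follows essentially the same route as the paper's proof: view \textit{NoisedLog} as the Laplace mechanism applied to $\log X - \log Y$, whose global sensitivity is $\log 2$ (the paper justifies this via concavity of $\log$ together with $X,Y\geq 1$, which is the same worst case $k=1\to 2$ you identify), and then invoke post-processing for the exponentiation. Your explicit remark that the $\log 2$ bound presumes a neighboring notion in which a single record alters only one of the two counts is a point the paper leaves implicit, but it does not change the argument.
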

\begin{proof}
    Think of the mechanism as adding Laplace noise to $log(X) - log(Y)$, and then taking the exponent. The global sensitivity of the quantity $log(X) - log(Y)$ is $\log(2)$ because the log is a concave function and $X,Y \geq 1$ by assumption. Using the fact that the Laplace mechanism is DP (Claim \ref{clm:Lap_DP}), and post-processing completes the proof.
 \end{proof}

\begin{claim}\label{clm:acc_exp_naive}
    Given two counts, $X,Y \in {\mathbb N}$, define $Z \coloneqq X/Y \in \mathbb{R}^+$. Then, for any $0<\alpha<Z$, Algorithm \textit{NoisedLog$_\varepsilon(X, Y)$} is $\left(\alpha, \beta \right)$-sample accurate, with
    $$ \beta = \frac{1}{2}\left(\frac{\alpha}{Z}+1 \right)^{-\frac{\varepsilon}{\log(2)}} + \frac{1}{2}\left(1-\frac{\alpha}{Z} \right)^{\frac{\varepsilon}{\log(2)}}.$$
\end{claim}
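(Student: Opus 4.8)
The plan is to reduce the accuracy event to a statement purely about the additive Laplace noise and then read off the answer from the Laplace tail probabilities. Write $Z = X/Y$ and let $L \sim \mathrm{Lap}(b)$ with scale $b = \log(2)/\varepsilon$, so that the output of \textit{NoisedLog}$_\varepsilon(X,Y)$ is $\widetilde{Z} = Z\, e^{L}$. Then $\widetilde{Z} - Z = Z(e^{L}-1)$, and since $Z > 0$ the failure event $\{|\widetilde{Z}-Z| > \alpha\}$ is equivalent to $\{|e^{L}-1| > \alpha/Z\}$. Setting $t \coloneqq \alpha/Z$, the hypothesis $0 < \alpha < Z$ guarantees $t \in (0,1)$, which is exactly what is needed to make the logarithms below real and correctly signed.

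Next I would split the two-sided event using monotonicity of the exponential: $|e^{L}-1| > t$ holds iff either $e^{L} > 1+t$ or $e^{L} < 1-t$, i.e.\ iff $L > \log(1+t)$ or $L < \log(1-t)$. Because $t \in (0,1)$, the first threshold $\log(1+t)$ is strictly positive and the second threshold $\log(1-t)$ is strictly negative and finite. Since the two events are disjoint,
$$\prob(|\widetilde{Z}-Z|>\alpha) = \prob\big(L > \log(1+t)\big) + \prob\big(L < \log(1-t)\big).$$
I would then evaluate each term with the standard Laplace tails for the density $\tfrac{1}{2b}e^{-|x|/b}$: namely $\prob(L>x) = \tfrac12 e^{-x/b}$ for $x \ge 0$ and $\prob(L<x) = \tfrac12 e^{x/b}$ for $x \le 0$. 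Applying these at the (correctly signed) thresholds gives $\tfrac12(1+t)^{-1/b}$ and $\tfrac12(1-t)^{1/b}$; substituting $1/b = \varepsilon/\log(2)$ and $t = \alpha/Z$ produces exactly the stated $\beta$.

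There is no genuine obstacle here; the entire content is the sign bookkeeping — using the upper tail at $\log(1+t)>0$ and the lower tail at $\log(1-t)<0$ — together with the observation that the constraint $0<\alpha<Z$ is precisely what keeps $t$ strictly inside $(0,1)$, so that $\log(1-t)$ is a finite negative number rather than $-\infty$ (or undefined). Since $L$ is continuous the boundary cases carry no mass, so the computed probability is in fact equal to $\beta$, which establishes the claimed $(\alpha,\beta)$-sample accuracy.
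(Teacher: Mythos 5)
Your proposal is correct and follows essentially the same route as the paper's proof: reduce the event to $|e^{L}-1| > \alpha/Z$, split into the two disjoint tail events at $\log(1+\alpha/Z) > 0$ and $\log(1-\alpha/Z) < 0$ (using $0<\alpha<Z$ exactly as the paper does at its step (a)), and evaluate the Laplace tails with scale $\log(2)/\varepsilon$. Your added remark that continuity of $L$ makes the strict/non-strict boundary distinction immaterial is a small clarification the paper leaves implicit, but the argument is otherwise identical.
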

\begin{proof}
    We denote the output of \textit{NoisedLog$_\varepsilon(X, Y)$} by a random variable $\widetilde{Z}$. We further denote $L \sim \text{Lap}(\frac{\log(2)}{\varepsilon})$.
\begin{equation*}\begin{split}
     \mathbb{P}(|\widetilde{Z}-Z| \geq \alpha) &= \mathbb{P}\left(\big|Z \cdot \exp(L)-Z\big| \geq \alpha \right) \\
     &= \mathbb{P}\left(\big|\exp(L)-1\big| \geq \frac{\alpha}{Z} \right) \\
     &= \mathbb{P}\left(\exp(L) \geq \frac{\alpha}{Z}+1 \right) + 
     \mathbb{P}\left(\exp(L) \leq -\frac{\alpha}{Z}+1 \right) \\
     &\overset{(a)}{=}
     \mathbb{P}\left(L \geq \log\left(\frac{\alpha}{Z}+1\right) \right) + 
     \mathbb{P}\left(L \leq \log \left(1-\frac{\alpha}{Z}\right) \right)\\
     &= \frac{1}{2}\exp{\left(\frac{-\log(\frac{\alpha}{Z}+1)\varepsilon}{\log(2)} \right)} + \frac{1}{2}\exp{\left(\frac{\log(1-\frac{\alpha}{Z})\varepsilon}{\log(2)} \right)} \\
     &= \frac{1}{2}\left(\frac{\alpha}{Z}+1 \right)^{-\frac{\varepsilon}{\log(2)}} + \frac{1}{2}\left(1-\frac{\alpha}{Z} \right)^{\frac{\varepsilon}{\log(2)}} 
\end{split}\end{equation*}
{\footnotesize (a) We assumed that $\alpha<Z$}
\end{proof}
Note that the accuracy of Algorithm~\textit{NoisedLog$_\varepsilon(X, Y)$} is only a function of the ratio statistic and not of its components. If we increase $X$ and $Y$ but keep the same ratio, we get the same accuracy. The accuracy is also not affected by the sample size $n$.

\subsubsection*{Correcting the bias term}

Algorithm \textit{NaiveRelativeRisk$_\varepsilon(X, Y)$} produced an unbiased estimate because we added zero-mean noise. Algorithm \textit{NoisedLog$_\varepsilon(X, Y)$}, on the other hand, perturbs the ratio statistic with multiplicative noise, thus adding bias to the final result. 

\begin{claim}[Bias of \textit{NoisedLog$_\varepsilon(X, Y)$}]\label{clm:bias_log_RR}
    Denote the output of \textit{NoisedLog$_\varepsilon(X, Y)$} by $\widetilde{Z}$. When $\varepsilon > \log(2)$, multiplying $\widetilde{Z}$ by $1-(\log(2)/\varepsilon)^2$  yields an unbiased estimator for the ratio, $Z$.
\end{claim}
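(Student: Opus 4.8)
The plan is to compute $\mathbb{E}[\widetilde{Z}]$ in closed form and show it equals $Z/(1-(\log(2)/\varepsilon)^2)$, so that rescaling by the stated factor produces an estimator with expectation exactly $Z$. First I would write the output of the algorithm as $\widetilde{Z} = Z \cdot e^{L}$, where $Z = X/Y$ is a constant and $L \sim \mathrm{Lap}(\log(2)/\varepsilon)$. Since $Z$ is deterministic, linearity of expectation gives $\mathbb{E}[\widetilde{Z}] = Z \cdot \mathbb{E}[e^{L}]$, and the quantity $\mathbb{E}[e^{L}]$ is simply the moment generating function (MGF) of the Laplace noise evaluated at $t=1$.

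The key step is to recall (or quickly derive) the Laplace MGF. For $L \sim \mathrm{Lap}(b)$ with density $\frac{1}{2b}e^{-|x|/b}$, splitting the integral $\int e^{tx}\frac{1}{2b}e^{-|x|/b}\,dx$ at the origin produces two one-sided exponential integrals; combining them yields $\mathbb{E}[e^{tL}] = 1/(1 - b^2 t^2)$. Crucially, the integral over the positive half-line converges only when $t < 1/b$, so the MGF exists only for $|t| < 1/b$. This convergence condition is the one genuine subtlety of the proof, and it is precisely where the hypothesis enters.

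Setting $t = 1$ and $b = \log(2)/\varepsilon$, the condition $|t| < 1/b$ becomes $\log(2)/\varepsilon < 1$, i.e.\ $\varepsilon > \log(2)$---exactly the assumption of the claim. This is the main obstacle to watch: outside this regime the expectation does not exist (the ratio of Laplace variables is heavy-tailed), so no deterministic rescaling could debias the estimator, and the restriction $\varepsilon > \log(2)$ is therefore not merely convenient but necessary for the statement to make sense. Under the assumption we obtain $\mathbb{E}[e^{L}] = 1/(1 - b^2) = 1/(1 - (\log(2)/\varepsilon)^2)$, and hence $\mathbb{E}[\widetilde{Z}] = Z/(1 - (\log(2)/\varepsilon)^2)$. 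Since $\varepsilon > \log(2)$ also guarantees $1 - (\log(2)/\varepsilon)^2 > 0$, multiplying $\widetilde{Z}$ by this positive factor yields an estimator whose expectation is exactly $Z$, which completes the proof.
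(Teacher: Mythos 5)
Your proof is correct and follows essentially the same route as the paper's: write $\widetilde{Z} = Z\,e^{L}$ with $L \sim \mathrm{Lap}(\log(2)/\varepsilon)$, identify $\mathbb{E}[e^{L}]$ as the Laplace moment generating function at $t=1$, which equals $1/(1-(\log(2)/\varepsilon)^2)$ and exists exactly when $\varepsilon > \log(2)$. Your explicit derivation of the MGF and the observation that the condition is necessary (not just sufficient) is a slightly fuller account of the step the paper cites in one line, but the argument is the same.
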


\begin{proof}
\begin{equation*}\begin{split}
    \mathbb{E} \left[\widetilde{Z} \right] = \mathbb{E} \left[e^{\log(Z)+Lap(\log(2)/\varepsilon)}\right]  = Z \cdot \mathbb{E}\left[e^{Lap(\log(2)/\varepsilon)}\right] = \frac{Z}{1-(\log(2)/\varepsilon)^2}.
\end{split}\end{equation*}
    The last equality is just the moment-generating function of the Laplace random variable, defined only for $\log(2)/\varepsilon<1$. In our case, the constraint becomes $\varepsilon > \log(2)$.
\end{proof}

 We note that the bias of Algorithm~\textit{NoisedLog$_\varepsilon(X, Y)$} is independent of the dataset; thus, correcting the bias does not require any further access to the dataset or to a prior. Note, however, that correcting the bias does not necessarily improve the sample accuracy.  Intuitively, this is because the noisy distribution has a long right tail, and hence the mass of the distribution is not around its expectation.

\newpage
\section{Local sensitivity based mechanisms}\label{App:local_noise_methods}

In Section \ref{sec:noisy_lap_counts}, we discussed the sample accuracy of a ratio computed from two perturbed counts with Laplace noise. We compared it with naive methods detailed in Appendix \ref{App:DP_ratio_naive}. In this section, we survey the classic local sensitivity-based methods and tailor them for a ratio statistic. The goal is to numerically quantify if, when, and by how much we lose by post-processing perturbed frequency tables, compared to releasing the sensitive data with perturbation explicitly tailored to the ratio statistic. We emphasize that these methods are designed to release only the ratio, thus, for the sake of CIs or hypothesis testing (which requires estimating the standard deviation) they are not helpful. Nevertheless, we wish to compare these methods to the noisy counts to understand the magnitude of the noise we add.

We explore two methods, one based on the smooth sensitivity framework~\cite{nissim2007smooth}, and the other based on the propose-test-release framework~\cite{dwork2009differential}. At a high level, they both attempt to tailor the noise addition to a more local version of the sensitivity (see Definition \ref{def:loc_sen}). 

Intuitively, the first method is based on finding a function that upper bounds the local sensitivity at any point, and is not sensitive to changes of individuals in the dataset. If these two requirements are met, revealing the smooth local sensitivity reveals very little information about a specific individual in the dataset, and adding noise proportional it that keeps the participants' privacy. The second method is based on \textbf{p}roposing a value for the local sensitivity, \textbf{t}esting privately whether this value is indeed larger than the true local sensitivity, if it is, \textbf{r}eleasing the value of the query privately with noise proportional to the proposed value if it was sufficient (and otherwise returning a failure message). Two  disadvantages of this second method are that it relies on access to a good prior on the local sensitivity, and that there is some probability that it will release ``fail'' rather than a statistic.

\subsection{Smooth sensitivity}\label{subsec_smo_sen}

We begin with some definitions.

\begin{definition}[Local sensitivity]\label{def:loc_sen}
    Let $f: {\cal D}^n \rightarrow \mathbb{R}$ and $D\in {\cal D}^n $. The local sensitivity of $f$ at $D$ is given by
    $$ LS_f(D) = \underset{\substack{D' \in {\cal D}^n \\ D \sim D'}}{\text{max}}|f(D)-f(D')|.$$
\end{definition}

Intuitively, adding noise that scales like the local sensitivity has privacy implications (because the magnitude of the noise leaks information about the data at hand), unlike adding noise that scales like the global sensitivity, which does not depend on the data at hand. That being said, if the noise is a function of the specific dataset and we are able to bound the effect on changing one individual for any dataset, adding noise that scales like it can be differentially private.

\begin{definition}[A smooth bound on the local sensitivity]
    For $\beta > 0$, a function $S : {\cal D}^n \rightarrow \mathbb{R}^+$ is a $\beta$-smooth upper bound
on the local sensitivity of $f$ if it satisfies the following requirements:
\begin{enumerate}
    \item $\forall D\in {\cal D}^n$: \quad $S(D) \geq LS_f(D)$
    \item $\forall D,D'\in {\cal D}^n, D\sim D'$: \quad $S(D)\leq e^\beta S(D')$
\end{enumerate}
\end{definition}

For two datasets $D,\tilde{D} \in {\cal D}^n$ we denote by $d(D,\tilde{D})$ the number of data elements we need to change in $D$ to get the dataset $\tilde{D}$. Trivially if $D \sim \tilde{D}$ we have $d(D,\tilde{D})=1$, and in general we have $d(D,\tilde{D})\leq n$.

\begin{definition}[Smooth sensitivity]\label{def:smooth_sen}
    For $\beta>0$, the $\beta$-smooth sensitivity of $f$ is
\begin{equation*}
 S^*_{f,\beta}(D) = \underset{\tilde{D} \in {\cal D}^n}{max}\left(LS_f(\tilde{D})e^{-\beta d(D,\tilde{D})} \right) \\ =  \underset{k=0,1,...,n}{\max}e^{-k\varepsilon}\left(\underset{\substack{\tilde{D} \in {\cal D}^n \\
d(D,\tilde{D})= k}}{\max} LS_f(\tilde{D}) \right).
\end{equation*}
\end{definition}

\begin{lemma}[Lemma 2.3 in \cite{nissim2007smooth}]
    $S^*_{f,\beta}(D)$ is a $\beta$-smooth upper bound on $LS_f(D)$. In addition, $S^*_{f,\beta}(D) \leq S(D)$ for all $D \in {\cal D}^n$ for every $\beta$-smooth upper bound on $LS_f(D)$.
\end{lemma}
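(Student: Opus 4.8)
The plan is to verify the two defining properties of a $\beta$-smooth upper bound for $S^*_{f,\beta}$, and then separately establish its minimality among all such bounds. Throughout I would work from the first form of the definition, $S^*_{f,\beta}(D) = \max_{\tilde{D} \in {\cal D}^n} LS_f(\tilde{D})\, e^{-\beta d(D,\tilde{D})}$, and use the elementary facts that $d$ is a metric on ${\cal D}^n$ (in particular it satisfies the triangle inequality, and $d(D,D')=1$ precisely when $D \sim D'$).

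For the first requirement, that $S^*_{f,\beta}$ dominates $LS_f$, I would simply restrict the maximand to the single choice $\tilde{D}=D$. Since $d(D,D)=0$, that term equals $LS_f(D)$, and because $S^*_{f,\beta}(D)$ is a maximum over all $\tilde{D}$ it is at least this value, giving $S^*_{f,\beta}(D) \geq LS_f(D)$. For the smoothness requirement, I would fix neighbors $D \sim D'$, so $d(D,D')=1$, and for an arbitrary $\tilde{D}$ invoke the triangle inequality $d(D',\tilde{D}) \le d(D',D) + d(D,\tilde{D}) = 1 + d(D,\tilde{D})$, hence $e^{-\beta d(D,\tilde{D})} \le e^{\beta}\, e^{-\beta d(D',\tilde{D})}$. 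Multiplying by $LS_f(\tilde{D}) \ge 0$ and taking the maximum over $\tilde{D}$ on both sides yields $S^*_{f,\beta}(D) \le e^{\beta} S^*_{f,\beta}(D')$, which is exactly the second condition in the definition of a $\beta$-smooth upper bound.

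For minimality, I would let $S$ be any $\beta$-smooth upper bound on $LS_f$, fix $D$, and take $\tilde{D}$ arbitrary with $k = d(D,\tilde{D})$. The idea is to connect $D$ and $\tilde{D}$ by a chain $D = D_0 \sim D_1 \sim \cdots \sim D_k = \tilde{D}$ of neighboring datasets, which exists because $d$ counts the number of records that must be changed; applying the smoothness property of $S$ step by step along this chain gives $S(\tilde{D}) \le e^{\beta k} S(D)$, i.e. $S(D) \ge e^{-\beta k} S(\tilde{D})$. Combining this with the upper-bound property $S(\tilde{D}) \ge LS_f(\tilde{D})$ shows $S(D) \ge LS_f(\tilde{D})\, e^{-\beta d(D,\tilde{D})}$. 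Since $\tilde{D}$ was arbitrary, taking the maximum over $\tilde{D}$ gives $S(D) \ge S^*_{f,\beta}(D)$, as claimed.

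The argument is essentially bookkeeping, so I do not expect a genuine obstacle; the one place to be careful is the minimality step, where one must invoke the existence of a length-$k$ neighbor chain realizing $d(D,\tilde{D})$ and iterate the smoothness inequality in the correct direction (each step can inflate $S$ by at most a factor $e^{\beta}$). I would also note in passing that the maxima are attained because the relevant domain is effectively finite for count-based queries (otherwise one replaces $\max$ by $\sup$ uniformly), which does not affect any of the inequalities above.
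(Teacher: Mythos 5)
Your proof is correct, and it is essentially the standard argument: the paper itself states this lemma as an imported result from the smooth sensitivity paper of Nissim, Raskhodnikova, and Smith and provides no proof of its own, and your three steps (take $\tilde{D}=D$ for domination, use the triangle inequality $d(D',\tilde{D}) \leq 1 + d(D,\tilde{D})$ for $\beta$-smoothness, and iterate the smoothness inequality along a neighbor chain of length $d(D,\tilde{D})$ for minimality) are exactly how Lemma 2.3 is proved in the original source. The only points needing the care you already flagged are the symmetry of the neighbor relation (so the smoothness inequality can be applied in either direction along the chain) and replacing $\max$ by $\sup$ over infinite domains, neither of which affects the argument.
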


\begin{lemma}[Lemma 2.5 in \cite{nissim2007smooth}, simplified]
    If $\beta \leq \frac{\varepsilon}{2ln(2/\delta)}$ and $\delta \in (0,1)$ the mechanism ${\cal M}(D) = f(D)+\eta$, where $\eta \sim Lap(\frac{2S(D)}{\varepsilon})$, is $(\varepsilon, \delta)$-differentially private.
\end{lemma}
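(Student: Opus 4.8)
The plan is to bound the pointwise privacy loss of the mechanism and then split the output space into a region where this loss is at most $\varepsilon$ and a low-probability tail. Write $b(D) = 2S(D)/\varepsilon$ for the Laplace scale on dataset $D$, so that the output density is $p_D(z) = \frac{1}{2b(D)}\exp\!\big(-|z-f(D)|/b(D)\big)$. It suffices to exhibit, for every neighboring pair $D\sim D'$, a set $G\subseteq \R$ with $\prob_{z\sim p_D}[z\notin G]\le\delta$ on which $p_D(z)\le e^{\varepsilon}p_{D'}(z)$; the usual decomposition $\prob_D[E]=\prob_D[E\cap G]+\prob_D[E\cap G^c]\le e^{\varepsilon}\prob_{D'}[E]+\delta$ then yields $(\varepsilon,\delta)$-DP, and since the bound I obtain will be symmetric in $D,D'$ (smoothness gives $e^{-\beta}\le S(D')/S(D)\le e^{\beta}$ either way), both directions of the definition follow.

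First I would write the log privacy loss as
$$\mathcal{L}(z)=\ln\frac{p_D(z)}{p_{D'}(z)} = \ln\frac{b(D')}{b(D)} + \frac{|z-f(D')|-|z-f(D)|}{b(D')} + |z-f(D)|\Big(\tfrac{1}{b(D')}-\tfrac{1}{b(D)}\Big),$$
and bound each term using the two defining properties of a $\beta$-smooth upper bound. The first term is at most $\beta$ by smoothness. For the second, the reverse triangle inequality together with the local-sensitivity bound $|f(D)-f(D')|\le LS_f(D)\le S(D)$ gives a numerator of magnitude at most $S(D)$, and since $b(D')\ge e^{-\beta}b(D)=e^{-\beta}\cdot 2S(D)/\varepsilon$ this term is at most $\tfrac{1}{2}\varepsilon e^{\beta}$. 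For the third term, $\tfrac{1}{b(D')}-\tfrac{1}{b(D)}\le (e^{\beta}-1)/b(D)$, so it is at most $\tfrac{|z-f(D)|}{b(D)}(e^{\beta}-1)$. Altogether
$$\mathcal{L}(z)\le \beta+\tfrac{1}{2}\varepsilon e^{\beta}+\frac{|z-f(D)|}{b(D)}\,(e^{\beta}-1).$$

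Finally I would control the only unbounded piece. Under $p_D$ we have $z-f(D)\sim\mathrm{Lap}(b(D))$, so $|z-f(D)|/b(D)$ has the exponential tail $\prob[\,\cdot>t\,]=e^{-t}$. Hence $\mathcal{L}(z)>\varepsilon$ forces $|z-f(D)|/b(D)>T$ with $T=\big(\varepsilon-\beta-\tfrac12\varepsilon e^{\beta}\big)/(e^{\beta}-1)$, and I would take $G=\{z: |z-f(D)|/b(D)\le T\}$. It remains to substitute the hypothesis $\beta\le \varepsilon/(2\ln(2/\delta))$ and verify, via the elementary estimate $e^{\beta}-1\le \beta e^{\beta}$ together with $e^{\beta}$ close to $1$, that $T\ge \ln(2/\delta)$, which makes $\prob_D[z\notin G]=e^{-T}\le \delta/2\le\delta$.

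The main obstacle is precisely this last term: because the Laplace scale itself changes between neighbors, the privacy loss grows \emph{linearly} in $|z-f(D)|$ and is unbounded over $\R$, so no pure $(\varepsilon,0)$ guarantee is possible and the $\delta$ slack is genuinely needed. The delicate bookkeeping is choosing the threshold so that the two effects are balanced---keeping the constant part $\beta+\tfrac12\varepsilon e^{\beta}$ safely below $\varepsilon$ while simultaneously forcing $T\ge\ln(2/\delta)$---which is exactly what the constraint $\beta\le\varepsilon/(2\ln(2/\delta))$ buys, at the mild cost of the factor-of-two margin in $\ln(2/\delta)$ that absorbs the $e^{\beta}\approx 1+\beta$ approximations.
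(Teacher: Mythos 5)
The paper itself gives no proof of this lemma---it is imported, in simplified form, from \cite{nissim2007smooth}---so your attempt can only be compared against the original sliding/dilation argument there. Your setup (the standard ``good set'' sufficient criterion for $(\varepsilon,\delta)$-DP), your three-term decomposition of the privacy loss, and the bounds on each individual term are all correct. The genuine gap is the final step: the verification ``$T\ge\ln(2/\delta)$'' is not merely unproven, it is false whenever $\beta$ is near the permitted cap. Since $e^{\beta}\ge 1+\beta$ and $e^{\beta}-1\ge\beta$,
\[
T=\frac{\varepsilon-\beta-\tfrac{\varepsilon}{2}e^{\beta}}{e^{\beta}-1}\;\le\;\frac{\tfrac{\varepsilon}{2}-\beta\bigl(1+\tfrac{\varepsilon}{2}\bigr)}{\beta}\;=\;\frac{\varepsilon}{2\beta}-1-\frac{\varepsilon}{2},
\]
so at $\beta=\frac{\varepsilon}{2\ln(2/\delta)}$ you get $T\le\ln(2/\delta)-1-\frac{\varepsilon}{2}<\ln(2/\delta)-\ln 2=\ln(1/\delta)$, whence $\mathbb{P}_{D}[z\notin G]=e^{-T}>\delta$: your good set cannot even meet the relaxed requirement $e^{-T}\le\delta$, let alone $\delta/2$. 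Concretely, for $\varepsilon=1$ and $\delta=10^{-2}$ the cap is $\beta\approx 0.094$, and then $T\approx 3.60$ and $e^{-T}\approx 0.027>\delta$. What your argument actually establishes is a weaker lemma, with hypothesis roughly $\beta\lesssim \varepsilon\big/\bigl(2(\ln(2/\delta)+1+\varepsilon/2)\bigr)$.

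The cause is that you charge incompatible worst cases at the same point $z$. The first term equals $+\beta$ only when $S(D')=e^{\beta}S(D)$; but in that case the third term is nonpositive and the second is at most $S(D)/b(D)=\varepsilon/2$, so the loss is bounded pointwise by $\beta+\varepsilon/2$ with no tail needed at all (whenever $\ln(2/\delta)\ge 1$). The dangerous regime is $S(D')=e^{-t}S(D)$ with $0<t\le\beta$, where the first term is $-t$, and that sign matters: the budget for the linear term becomes $\varepsilon+t-\tfrac{\varepsilon}{2}e^{t}$ rather than $\varepsilon-\beta-\tfrac{\varepsilon}{2}e^{\beta}$, and---together with the standing assumption $\varepsilon\le 1$ of \cite{nissim2007smooth}, silently dropped in the paper's restatement---one can then check that $T\ge\ln(1/\delta)$ does hold. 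This case split is exactly what the original proof encodes structurally: it passes through the hybrid $f(D)+\mathrm{Lap}(b(D))\to f(D)+\mathrm{Lap}(b(D'))\to f(D')+\mathrm{Lap}(b(D'))$, a \emph{dilation} step costing $e^{\varepsilon/2}$ outside a $\delta$-probability tail, followed by a \emph{sliding} step costing a pure $e^{\varepsilon/2}$ via $|f(D)-f(D')|\le LS_f(D')\le S(D')$. There the shift is always measured against a single fixed scale (so it never picks up your $e^{\beta}$ inflation), and the normalization gain and the tail growth are never both adversarial at the same point. Either repair your pointwise argument with this sign-sensitive case analysis, or switch to the hybrid route.
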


\subsubsection{Smooth sensitivity and ratio of counts}

To simplify the analysis, assume that $1<X<n_x$, $1<Y<n_y$ (a version of the analysis that does not require the strict inequalities also holds). We consider that changing one individual changes at most one of the counts. The local sensitivity of a ratio of counts, $Z=X/Y$, (see Definition \ref{def:loc_sen}) is given by

\begin{equation*}\begin{split}
     LS_{Z}(D) &= \max \Bigg(
     \Bigg|\frac{X}{Y} - \frac{X+1}{Y}\Bigg|,
     \Bigg|\frac{X}{Y} - \frac{X-1}{Y}\Bigg|,  \Bigg|\frac{X}{Y} - \frac{X}{Y+1}\Bigg|,
     \Bigg|\frac{X}{Y} - \frac{X}{Y-1}\Bigg|
     \Bigg) \\
     &= \max \left(
     \frac{1}{Y}, \frac{X}{Y^2-Y}
     \right) \\
     &=  \frac{1}{Y}\begin{cases} 
      1 & X < Y \\
      \frac{X}{Y-1} & X \geq Y,
   \end{cases}
\end{split}\end{equation*}
 since $\Big|\frac{X}{Y} - \frac{X+1}{Y}\Big| = \Big|\frac{X}{Y} - \frac{X-1}{Y}\Big|$, and $\Big|\frac{X}{Y} - \frac{X}{Y+1}\Big| < \Big|\frac{X}{Y} - \frac{X}{Y-1}\Big|$.

With that in mind, we can derive the smooth sensitivity of our ratio (see Definition \ref{def:smooth_sen})

\begin{claim}[Smooth sensitivity of a ratio of counts]
Fix some constant $m\in \mathbb{N}$. Note that for datasets $D, D'$ such that $X, Y$ correspond to $D$ and $X', Y'$ correspond to $D'$, $d\left(D, D'\right)=|X-X'|+|Y-Y'|$. Then for all datasets $D'$,\\
\\
If $ X < Y-m $ :
    $$ \underset{D': d \left(D,D' \right)= m}{\max} LS_{Z}(D') = \frac{1}{(Y-m)}. $$
If $ X \geq Y-m $, and $Y-m > 1$:
    $$ \underset{D': d\left(D,D'\right)= m}{\max} LS_{Z}(D') = \frac{X}{(Y-m)(Y-m-1)}. $$
If $ X \geq Y-m $, and $Y-m < 2$:
    $$
        \underset{D': d\left(D,D'\right)= m}{\max} LS_{Z}(D') = \frac{\min \Big(n_x, X + m-Y+1\Big)}{2}.
    $$

\end{claim}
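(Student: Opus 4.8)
The plan is to recast $\max_{D' : d(D,D') = m} LS_{Z}(D')$ as a discrete optimization over the sphere $\{(X',Y') : |X-X'| + |Y-Y'| = m\}$, using the closed form for $LS_{Z}$ derived just above the claim. The first reduction I would make is a monotonicity observation: within each of the two regimes the value of $LS_{Z}(X',Y')$ is either $1/Y'$ or $X'/(Y'(Y'-1))$, so it is non-decreasing in $X'$ and strictly decreasing in $Y'$. Hence any maximizer spends its whole budget either decreasing $Y$ or increasing $X$, never increasing $Y$ or decreasing $X$. This collapses the search to the one-parameter family $X' = X + a$, $Y' = Y - (m - a)$ for $a \in \{0,1,\dots,m\}$ (truncated so that $X' \le n_x$ and $Y' \ge 1$), and the problem becomes choosing how to split the budget between numerator and denominator.

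For the first case, if $X < Y - m$ then even spending the entire budget on $X$ cannot reach $X' \ge Y'$: that would require $a + (m-a) = m \ge Y - X > m$, a contradiction. So every feasible $D'$ stays in the regime $X' < Y'$, where $LS_{Z} = 1/Y'$, which is maximized by making $Y'$ as small as possible, i.e. $a = 0$ and $Y' = Y - m$, yielding the stated $1/(Y-m)$.

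For the second case, when $X \ge Y - m$ and $Y - m > 1$, the denominator can be driven to $Y' = Y - m \ge 2$ while staying in the regime $X' \ge Y'$, where $LS_{Z} = X'/(Y'(Y'-1))$. Writing $g(a) = (X + a)/((Y - m + a)(Y - m + a - 1))$, I would show $g$ is strictly decreasing on $a \ge 0$. This monotonicity is the main obstacle: after clearing the common positive factor, $g(a) > g(a+1)$ reduces to $2(X + a) > (Y - m) - 1$, which holds because $X \ge Y - m$. Thus the optimum sits at $a = 0$, giving $X/((Y-m)(Y-m-1))$; a short check confirms this dominates the best value attainable in the $X' < Y'$ regime, so no regime crossing can do better.

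For the third case, when $Y - m < 2$ the budget is large enough to push $Y'$ to the bottom of its range, where the formula $X'/(Y'(Y'-1))$ degenerates (its denominator vanishes at $Y' = 1$). Here I would compute $LS_{Z}$ directly at the smallest feasible values of $Y'$ — at $Y' = 1$ the only admissible move through the denominator gives $LS_{Z} = X'/2$ — spend the leftover budget $m - (Y - 1)$ on the numerator, and cap $X'$ at $n_x$, arriving at the stated $\min(n_x, X + m - Y + 1)/2$. The genuinely delicate part of the whole claim lives here: one must simultaneously track the degenerate denominator, the truncation at the domain boundary $X' \le n_x$, and a careful comparison of the few smallest feasible choices of $Y'$ against one another, since the clean monotonicity that pinned the Case~2 optimum to a single endpoint no longer applies once $Y'$ reaches the boundary of its range.
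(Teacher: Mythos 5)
Your Cases 1 and 2 are sound and follow essentially the same route as the paper: reduce to the one-parameter family that spends the whole budget decreasing $Y$ or increasing $X$, then use monotonicity. Indeed, your Case 2 is \emph{more} rigorous than the paper's, which disposes of the numerator-versus-denominator tradeoff with ``it can easily be shown''; your explicit $g(a)$ computation fills that in (modulo a small algebra slip: clearing the positive denominators gives $g(a) > g(a+1)$ iff $2(X+a) > (Y-m+a)-1$, not $2(X+a) > (Y-m)-1$; the hypothesis $X \geq Y-m$, $Y-m>1$ still yields the needed inequality, so this is harmless).

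The genuine gap sits exactly where you flag the delicacy, in Case 3 --- and if the comparison you postpone is actually carried out, it fails. Your opening observation that $LS_Z$ is \emph{strictly} decreasing in $Y'$ breaks at the boundary: $LS_Z(X',2) = X'/(2\cdot 1) = X'/2$ while $LS_Z(X',1) = |X'/1 - X'/2| = X'/2$ as well, so the two are \emph{equal}. Hence the last unit of budget spent pushing $Y'$ from $2$ down to $1$ buys nothing, whereas spending that unit on the numerator buys an extra $1/2$. Concretely, take $X=5$, $Y=3$, $m=2$, $n_x$ large: your sketch (and the stated claim) give $\min(n_x,\, X+m-Y+1)/2 = 5/2$, attained at $(5,1)$, but $(6,2)$ is also at distance $m=2$ and has $LS_Z = 6/2 = 3 > 5/2$. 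In general, for $Y \geq 2$ the maximizer stops at $Y'=2$ and attains $\min(n_x,\, X+m-Y+2)/2$; the stated formula is correct only when $Y=1$. The paper's own proof makes the same leap (``once $Y=1$, we can further increase $X$''), so you have reproduced its strategy faithfully --- but the ``careful comparison of the few smallest feasible choices of $Y'$'' is not a routine verification to be deferred: done honestly, it contradicts the third-case formula, so no proof along these lines can close the gap without amending the claim itself.
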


\begin{proof}
Let us examine the case $X < Y-m$. First, we note that if we decrease $X$, the local sensitivity does not change. If we increase $X$ by less than $m$, also, the local sensitivity does not change. So we can only increase the local sensitivity by decreasing $Y$. Because the local sensitivity is monotonically decreasing as long as $X < Y$ and we can decrease it by at most $m$, we got the first result. Now consider $X \geq Y$, thus we have $\frac{X}{Y-1}>1$. To increase the local sensitivity, we can either increase $X$, or decrease $Y$. It can easily be shown that decreasing $Y$ will increase the local sensitivity by more. Once $Y=1$, we can further increase $X$. Thus, to conclude, we first decrease $Y$ to 1 (if we can), and then increase $X$ to at most $n_x$.
\end{proof}

Unfortunately, at small sample sizes, the smooth sensitivity is large. Intuitively, the local-sensitivity decreases fast with $Y$, while the smooth sensitivity is only allowed to decrease logarithmically. At some point, when $Y$ is large enough, the smooth sensitivity is very close to the local sensitivity, showing very good performance of the algorithm.

\subsection{Propose-Test-Release}

\cite{dwork2009differential} put forward an algorithm (see Algorithm \ref{alg:PTR}) that can be used to estimate sensitive statistics if the analyst has some prior knowledge about the local sensitivity or does not care about the statistic if it happens to have a large local sensitivity.

\begin{algorithm}
\caption{Propose-Test-Release}\label{alg:PTR}
\begin{algorithmic}
\Require Privacy parameters $\varepsilon, \delta>0$, a query $f$, a sample $S$, a suggested bound on the local sensitivity $\beta$
\State $\gamma \gets$ distance (in data-elements) from S to the nearest dataset S' such that $LS_{f}(S') \geq \beta$.
\State $\hat{\gamma}$ = $\gamma$ + Lap($1/\varepsilon_1$)
\If{$\hat{\gamma} \leq \ln(1/\delta)/\varepsilon_1$}
    \State Return FAIL
\Else 
    \State Return $f(S) +$ Lap($\beta/\varepsilon_2$)
\EndIf
\end{algorithmic}
\end{algorithm}

\begin{theorem}
   The Propose-Test-Release algorithm described in Algorithm \ref{alg:PTR} is $(\varepsilon_1+\varepsilon_2, \delta)$-differentially private.
\end{theorem}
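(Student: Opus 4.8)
The plan is to prove that Propose-Test-Release (Algorithm~\ref{alg:PTR}) satisfies $(\varepsilon_1+\varepsilon_2,\delta)$-DP by decomposing the mechanism into two stages and accounting for the privacy loss of each. The first stage is the private test on the distance $\gamma$, which determines whether we output FAIL or proceed. The second stage, reached only when the test passes, releases $f(S)$ with noise calibrated to the \emph{proposed} bound $\beta$ rather than the true local sensitivity. The key structural observation is that the quantity $\gamma$ (the distance from $S$ to the nearest dataset with local sensitivity at least $\beta$) has global sensitivity $1$: moving one individual changes any such distance by at most one. Hence $\hat\gamma = \gamma + \mathrm{Lap}(1/\varepsilon_1)$ is an $(\varepsilon_1,0)$-DP release by the Laplace mechanism (Theorem~\ref{clm:Lap_DP}), and since the decision to output FAIL is a deterministic post-processing of $\hat\gamma$, that branch alone is $(\varepsilon_1,0)$-DP.

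First I would fix neighboring datasets $S \sim S'$ and an arbitrary output event $E$, and split the analysis according to whether the output lands in $\{\mathrm{FAIL}\}$ or in $\mathbb{R}$. For the release branch, the crucial point is that whenever the test passes---that is, whenever $\hat\gamma > \ln(1/\delta)/\varepsilon_1$---we are guaranteed with high probability that $\gamma \geq 1$, meaning the true local sensitivity $LS_f(S)$ is strictly below $\beta$, so that adding $\mathrm{Lap}(\beta/\varepsilon_2)$ noise is genuinely calibrated to an upper bound on the sensitivity at $S$. The plan is to condition on the value of $\hat\gamma$ and argue that, except on a ``bad'' event of probability at most $\delta$, the proposed bound $\beta$ dominates the local sensitivity at both $S$ and its neighbor $S'$, so the release step behaves like a standard Laplace mechanism with scale $\beta/\varepsilon_2$ and contributes a factor $e^{\varepsilon_2}$.

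The combination step is where the two privacy budgets compose. I would bound $\prob(\M(S)\in E)$ by separating the probability mass on the bad event (where $\gamma$ is too small for the proposed bound to be valid) from the good event. The $\delta$ slack arises precisely because the test threshold $\ln(1/\delta)/\varepsilon_1$ is chosen so that the probability that $\hat\gamma$ exceeds it while $\gamma = 0$ (i.e.\ a false pass) is at most $\delta$; this follows from the Laplace tail bound $\prob(\mathrm{Lap}(1/\varepsilon_1) > t) = \tfrac12 e^{-\varepsilon_1 t}$ evaluated at $t = \ln(1/\delta)/\varepsilon_1$. On the good event the test-and-release pipeline multiplies the two multiplicative factors $e^{\varepsilon_1}$ and $e^{\varepsilon_2}$, yielding $e^{\varepsilon_1+\varepsilon_2}$; on the bad event we absorb everything into the additive $\delta$.

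The main obstacle I anticipate is handling the asymmetry between neighbors carefully in the release branch: because $\gamma$ can differ by one between $S$ and $S'$, it is possible that the test passes on $S$ but the margin is thin on $S'$, and one must verify that the proposed bound $\beta$ remains a valid upper bound on $LS_f$ for both datasets on the conditioned-good event. This requires using the definition of $\gamma$ together with a ``safety margin'' argument---the threshold is set one unit of noise-scale above the point where $\gamma$ could be zero---so that whenever the test passes with enough confidence to avoid the $\delta$ bad event, we have $\gamma \geq 1$ for \emph{both} neighbors simultaneously, ensuring $LS_f(S), LS_f(S') < \beta$. Making this simultaneity precise, and correctly partitioning the probability so that the false-pass mass is exactly the piece charged to $\delta$ while the remaining mass cleanly splits the $e^{\varepsilon_1}$ and $e^{\varepsilon_2}$ factors, is the delicate part of the argument.
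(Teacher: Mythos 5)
The paper itself states this theorem without proof, importing it from the propose-test-release paper of Dwork and Lei, so the comparison is against the standard argument. Your skeleton matches that argument in its main components: the distance $\gamma$ has global sensitivity $1$, so $\hat\gamma=\gamma+\mathrm{Lap}(1/\varepsilon_1)$ is $(\varepsilon_1,0)$-DP and the FAIL decision is post-processing of it; the Laplace tail at the threshold $t=\ln(1/\delta)/\varepsilon_1$ gives false-pass probability $\tfrac12 e^{-\varepsilon_1 t}=\delta/2\le\delta$; and the pass event and the release noise contribute multiplicative factors $e^{\varepsilon_1}$ and $e^{\varepsilon_2}$ that combine by independence of the two Laplace draws.

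However, your planned execution of the release branch contains a genuine error. You propose to condition on the test passing and deduce that $\gamma\ge 1$ (``whenever the test passes with enough confidence to avoid the $\delta$ bad event, we have $\gamma\ge 1$ for both neighbors''). The event that the test passes never certifies anything about $\gamma$: when $\gamma=0$ the test still passes with probability exactly $\delta/2$, and that residual mass is precisely what must be charged to the additive $\delta$ --- no conditioning on $\hat\gamma$ can eliminate it. The correct case split is on the \emph{deterministic} property of the dataset, not on the random test outcome: if $LS_f(S)\ge\beta$ (equivalently $\gamma(S)=0$), bound the probability that $\M(S)$ outputs any real value by the false-pass probability $\le\delta$ and stop; if $LS_f(S)<\beta$, then the pass probabilities at $S$ and $S'$ differ by at most a factor $e^{\varepsilon_1}$, and since $|f(S)-f(S')|\le LS_f(S)<\beta$ for \emph{every} neighbor $S'$, the release densities differ by at most $e^{\varepsilon_2}$. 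This also dissolves what you call the delicate simultaneity issue: you never need $LS_f(S')<\beta$ as well, because the DP inequality $\prob(\M(S)\in E)\le e^{\varepsilon_1+\varepsilon_2}\prob(\M(S')\in E)+\delta$ is verified separately for each ordered pair of neighbors, and the direction starting from $S$ uses only $|f(S)-f(S')|\le LS_f(S)$; the reverse direction is the identical argument with the roles of $S$ and $S'$ swapped, carrying its own case split and its own $\delta$. As written, your ``safety margin'' step would fail, but replacing it with this deterministic case analysis yields the standard, and simpler, proof.
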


If the analyst were to guess the true local sensitivity, then $\gamma=0$. But then, with high probability (more than 0.5), the algorithm would return FAIL.
%, but with some probability, we will return statistics with noise propositional to the local sensitivity. 
In analyzing the performance of propose-test-release, for simplicity, we will sum the probabilities of the two types of errors (the probability that the algorithm returns FAIL, plus the probability that the algorithm returns a value that is not sample-accurate). In some settings, it may, of course, be less harmful not to get a response than to get a response that is far away from the correct one. When we compare this algorithm to the ratio of noisy counts, we will give oracle access to the true local sensitivity (which is not private), and let the algorithm choose the optimal guess.

\newpage
\section{Confidence interval for the relative risk}\label{App:CIappendix}

In Sections~\ref{sec:2} (and in more detail in Appendix \ref{App:DP_ratio_naive}) and Section~\ref{sec:noisy_lap_counts}, we studied accuracy solely with respect to the impact of the perturbation noise (see Definition \ref{def:sample_acc}).
  In practice, analysts often view data as generated by a random process, model that process, and study the model parameters and derive CIs for the parameter of interest. In this section, we study the combined effect of such sampling noise and the privacy noise on CIs. 

We suppose that  $X$ and $Y$ are independent binomially distributed random variables; that is, \begin{equation*}
    X\sim Bin(n_x,p_x),\  Y\sim Bin(n_y,p_y),\text{ with } p_x,p_y \in (0,1),\text{ and } n_x,n_y \in \mathbb{N}, \ \  n_x+n_y=n.
\end{equation*}
 We assume that $n_x$ and $n_y$ are non-private constants, and we wish to estimate $p \coloneqq p_x/p_y$, which is the relative risk under these assumptions. Note that if $n_x=n_y$, then the ratio of counts is exactly the relative risk statistic, and the results of Section \ref{sec:noisy_lap_counts} apply.
We note that one could study different distributional assumptions on $X$ and $Y$, including some dependence between them. Different modeling would require the development of additional methods to construct valid CIs.

\subsection{Non-private confidence interval for the relative risk}

The classical method of constructing a CI for the relative risk is based on a direct normal approximation of the log-likelihood. \citet{katz1978obtaining} show that given independent $X \sim Bin(n_x,p_x)$ and $Y \sim Bin(n_y,p_y)$, if we denote $T = (X/n_x)/(Y/n_y)$, then $\log(T)$ is approximately normally distributed with mean $\log(p_x/p_y)$ and variance $((1/p_x) - 1)/n_x + ((1/p_y) - 1)/n_y$. Since $\E[X]=n_xp_x$, $\E[Y]=n_yp_y$, the estimated standard deviation of the log RR is given by
\begin{equation}\label{eq:std_log_RR}
    \widehat{\sigma}_{\log(T)} = \sqrt{\left(\frac{1}{X} - \frac{1}{n_x} \right) + \left(\frac{1}{Y} - \frac{1}{n_y} \right)},
\end{equation}
and a $(1-\alpha)$-CI (see Definition \ref{def:CI}) based on normal approximation, denoted by $CI_{\text{Classic}}$, is given by
\begin{equation}\label{eq:conf_int_RR}
  CI_{\text{Classic}} = \exp \Big(\ln(T) \pm \Phi^{-1}(1-\alpha/2) \cdot \widehat{\sigma}_{\log(T)} \Big). 
\end{equation}
where $\Phi^{-1}(1-\alpha/2)$ is the inverse of the CDF of a standard Gaussian random variable at the point $1-\alpha/2$.

We see from Equation~\eqref{eq:std_log_RR} that the smaller $X$ or $Y$ are, the larger the standard deviation is, which increases the width of the CI.

In this paper we use a different approach, which is more compatible with differential privacy, and, specifically, with nose addition to Binomial random variables. We compare our method, both coverage and width, to the classic method detailed in Equation~\eqref{eq:conf_int_RR}

\subsection{Private confidence interval for a ratio of noisy counts}\label{subsec:CI_Gaussian_noise}

We now consider a different method to construct a CI for the relative risk, based on a normal approximation of the ratio. This method was proposed by~\cite{lin2024differentially} for proportion estimation when the count and the sample size are perturbed with Gaussian noise. We extend it to a ratio of two proportions. First, we approximate each of the binomial random variables by a Gaussian random variable. We then approximate the ratio of these two Gaussians by a Gaussian distribution. This approach enables us to argue about the validity of a different private ratio of noisy counts.

\begin{theorem}[Theorem \ref{thm:Valid_CI}, restated]
     Let $X$ and $Y$ be as in \eqref{eq:bin_rv}. Let $L_x$ and $L_y$ be two independent zero-mean random variables, with variance $\var(L_x)=\var(L_y)=\Sigma$. Denote $\widetilde{X} = X+L_x, \ \widetilde{Y} = Y+L_y$, $\widetilde{p}_x \coloneqq \widetilde{X}/n_x$ and $\widetilde{p}_y \coloneqq \widetilde{Y}/n_y$, $p=p_x/p_y$ and $\widetilde{p} = \widetilde{p}_x/\widetilde{p}_y$.
    Define
    $$ \widetilde{V} = \widetilde{p}  \cdot \Bigg(\frac{1}{\widetilde{X}}-\frac{1}{n_x} + \frac{1}{\widetilde{Y}}-\frac{1}{n_y} \Bigg)^{1/2}.$$
    For any $\Sigma \in {\mathbb R}^{+}$ such that $\Sigma = O(n_x)$ and $\Sigma = O(n_y)$, if $n_x$ and $n_y$ both tend to infinity, then for any $0 < \alpha < 1,$
        $$
    \prob\left(p \in \left(\widetilde{p} \pm \Phi^{-1}(1-\alpha/2)\cdot \widetilde{V}\right)\right) \rightarrow 1-\alpha, 
    $$
where $\Phi^{-1}(\cdot)$ is the inverse of the CDF of a standard Gaussian random variable.
\end{theorem}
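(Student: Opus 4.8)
The plan is to reduce the coverage statement to the single distributional limit $(\widetilde{p}-p)/\widetilde{V}\dto\N(0,1)$; once this holds, $\prob\big(|(\widetilde{p}-p)/\widetilde{V}|\le \Phi^{-1}(1-\alpha/2)\big)\to \Phi(\Phi^{-1}(1-\alpha/2))-\Phi(-\Phi^{-1}(1-\alpha/2))=1-\alpha$, which is exactly the claim. First I would record the marginal behavior of the two private proportions. By the CLT, $\sqrt{n_x}(\widehat{p}_x-p_x)\dto\N(0,p_x(1-p_x))$, and likewise for $\widehat{p}_y$. Because $\widetilde{p}_x=\widehat{p}_x+L_x/n_x$ with $\var(L_x/n_x)=\Sigma/n_x^2$, the hypothesis $\Sigma=O(n_x)$ forces $L_x/n_x\pto 0$ by Chebyshev, so by Slutsky the privacy perturbation is asymptotically negligible and $\widetilde{p}_x$ has the same limiting law as $\widehat{p}_x$; the same holds for $\widetilde{p}_y$. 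Equivalently, $\widetilde{X}=n_x\widetilde{p}_x$ and $\widetilde{Y}=n_y\widetilde{p}_y$ behave to leading order like independent Gaussians centered at $n_xp_x$ and $n_yp_y$ with variances $n_xp_x(1-p_x)$ and $n_yp_y(1-p_y)$.

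Next I would pass to the ratio. Since $p_y>0$, the denominator $\widetilde{p}_y$ stays bounded away from $0$ with probability tending to one, so the continuous mapping theorem transfers the joint convergence of $(\widetilde{p}_x,\widetilde{p}_y)$ to $\widetilde{p}=\widetilde{p}_x/\widetilde{p}_y$, which therefore inherits the law of a ratio of two independent Gaussians. A first-order (delta-method) computation identifies the relevant scale as $\sigma_T^2=p^2\big(\tfrac{1-p_x}{n_xp_x}+\tfrac{1-p_y}{n_yp_y}\big)$, the same quantity that $\widetilde{V}^2$ estimates empirically through the identities $\tfrac{1}{\widetilde{X}}-\tfrac{1}{n_x}\approx\tfrac{1-p_x}{n_xp_x}$ and $\tfrac{1}{\widetilde{Y}}-\tfrac{1}{n_y}\approx\tfrac{1-p_y}{n_yp_y}$ (both exact in the limit, since $\widetilde{X}/(n_xp_x)\pto 1$ and $\widetilde{Y}/(n_yp_y)\pto 1$).

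The crux is to turn ``distribution of a ratio of Gaussians'' into an honest normal limit. For this I would invoke the approximation of \cite{diaz2013existence}: the ratio $A/B$ of two independent Gaussians is close, in total variation and on a bounded interval around $\mu_A/\mu_B$, to a single Gaussian with mean $\mu_A/\mu_B$ and variance $\sigma_T^2$, with an error governed by the coefficient of variation $\sigma_B/\mu_B$ of the denominator. Here the denominator $\widetilde{Y}$ has coefficient of variation $\sqrt{(1-p_y)/(n_yp_y)}=\Theta(n_y^{-1/2})\to 0$, so the approximation error vanishes and $\widetilde{p}$ is asymptotically $\N(p,\sigma_T^2)$ on any fixed neighborhood of $p$. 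Finally, from $\widetilde{p}\pto p$ together with $\widetilde{X}/(n_xp_x)\pto 1$ and $\widetilde{Y}/(n_yp_y)\pto 1$ one checks $\widetilde{V}/\sigma_T\pto 1$, and a last application of Slutsky converts the $\N(p,\sigma_T^2)$ limit into $(\widetilde{p}-p)/\widetilde{V}\dto\N(0,1)$, completing the argument.

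The main obstacle is this third step. Two points need care. First, the bound of \cite{diaz2013existence} is only uniform on a fixed bounded interval around $p$, so I must separately control the vanishing mass that $\widetilde{p}$ places outside such an interval and splice the two estimates together; the consistency of $\widetilde{p}$ supplies exactly this tail control, but combining it with the interval-wise total-variation bound is the delicate part. Second, the Gaussians in the first two steps are only limits---$\widetilde{X},\widetilde{Y}$ are binomial plus (for the Laplace corollary) non-Gaussian noise---so the real content needed is that the perturbation standard deviation $\sqrt{\Sigma}/n$ be of smaller order than the sampling scale $\Theta(n^{-1/2})$, i.e.\ effectively $\Sigma=o(n)$, which holds for the constant-variance Laplace and Gaussian mechanisms of the corollaries (this is the ``second-order'' phenomenon the paper highlights), so that $\widetilde{X},\widetilde{Y}$ may be replaced by genuine Gaussians without disturbing the normal quantiles. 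The remaining manipulations---the delta-method variance, $\widetilde{V}/\sigma_T\pto 1$, and the final Slutsky step---are routine.
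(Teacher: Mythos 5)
Your proposal follows essentially the same route as the paper's proof in Appendix~\ref{App:CIappendix}: CLT plus Slutsky to absorb the privacy noise, the continuous mapping theorem to pass to the ratio, the D\'iaz-Franc\'es--Rubio interval-wise normal approximation of a ratio of Gaussians, consistency of $\widetilde{V}$, and a final splicing at the random interval endpoints --- your self-normalized reformulation $(\widetilde{p}-p)/\widetilde{V}\dto\N(0,1)$ is just a repackaging of the paper's bookkeeping with CDF differences at $\widetilde{U},\widetilde{L}$ versus $U,L$. One remark worth keeping: your observation that the argument really needs the perturbation to be second order, i.e.\ $\Sigma=o(n)$ rather than merely $\Sigma=O(n)$, is apt --- if $\Sigma=\Theta(n_x)$ then $L_x/\sqrt{n_x}$ contributes non-vanishing variance at the CLT scale and the uniform approximation of $\widetilde{p}_x$ by $\N\left(p_x,\var(\widehat{p}_x)\right)$ used in the paper's proof would fail, though this does not affect the constant-variance Laplace and Gaussian corollaries.
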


Before we begin the proof, we state the main Theorem of \citet{diaz2013existence}, which we leverage.

\begin{theorem}[Theorem 1 in \cite{diaz2013existence}]\label{thm:gauss_approx_ratio}
    Let $X$ be a normal random variable with mean $\mu_x$ and variance $\sigma^2_x$ and coefficient of variation $\delta_x = \sigma_x/\mu_x$ such that $0<\delta_x < \lambda \leq 1$, where $\lambda$ is a known constant. For every $\xi>0$, there exists $\gamma(\xi)\in (0, \sqrt{\lambda^2-\delta^2_x})$, and also a normal random variable $Y$, independent of $X$, with positive mean $\mu_y$, variance $\sigma^2_y$, and coefficient of variation $\delta_y = \sigma_y/\mu_y$ that satisfy the conditions
    \begin{equation}\label{eq:condition}
         0 < \delta_y \leq \gamma(\xi) \leq \sqrt{\lambda^2-\delta^2_x} \leq \lambda
    \end{equation}
    for which the following result holds:

    Any $z$ that belongs to the interval
    $$ I = \left[\beta - \frac{\sigma_z}{\lambda}, \beta + \frac{\sigma_z}{\lambda} \right]$$
    where $\beta = \mu_x / \mu_y$ and $\sigma_z = \beta\sqrt{\delta^2_x+\delta^2_y}$ satisfies that
    $$ |G(z)-F_Z(z)| < \xi,$$
    where $G(z)$ is the distribution function of a normal random variable with mean $\beta$ and variance $\sigma^2_z$, and $F_Z$ is the distribution function of $Z=X/Y$.
\end{theorem}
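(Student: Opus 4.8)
\section*{Proof proposal for Theorem~\ref{thm:gauss_approx_ratio}}

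The plan is to bypass the exact (and unwieldy) Hinkley--Marsaglia density of $Z=X/Y$ and argue directly at the level of distribution functions, in two stages. First I would replace $F_Z$ by an \emph{exact} Gaussian-type CDF that is valid for every $z$ up to an exponentially small error (a Geary-style sign argument), and then I would show that, restricted to the interval $I$, this Gaussian-type CDF agrees with the target $G$ to within $\xi$. The role of the hypotheses is to guarantee that $\delta_y$ can be driven small while $I$ stays inside a region where $z$ is positive and $z/\beta$ is bounded.

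First stage (Geary's exact reduction). I would partition on the sign of $Y$ (the event $\{Y=0\}$ is null). On $\{Y>0\}$ one has $\{X/Y\le z\}=\{X-zY\le 0\}$, while on $\{Y<0\}$ the inequality flips to $\{X-zY\ge 0\}$. The variable $W\coloneqq X-zY$ is exactly normal with mean $\mu_x-z\mu_y$ and variance $\sigma_x^2+z^2\sigma_y^2$, so $\prob(W\le 0)=\Phi(G_z)$ with $G_z=(z\mu_y-\mu_x)/\sqrt{\sigma_x^2+z^2\sigma_y^2}$. Comparing $F_Z(z)$ with $\prob(W\le 0)$, the two expressions differ only through events intersected with $\{Y<0\}$, so $|F_Z(z)-\Phi(G_z)|\le \prob(Y<0)=\Phi(-1/\delta_y)$, a bound that is uniform in $z$ and exponentially small in $1/\delta_y^2$.

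Second stage (matching the Gaussian arguments on $I$). I would simplify $G_z$ by factoring $\mu_y$ out of the numerator, $z\mu_y-\mu_x=\mu_y(z-\beta)$, and out of the denominator, $\sigma_x^2+z^2\sigma_y^2=\mu_y^2(\beta^2\delta_x^2+z^2\delta_y^2)$, which gives $G_z=(z-\beta)/\sqrt{\beta^2\delta_x^2+z^2\delta_y^2}$. The target is $G(z)=\Phi(H_z)$ with $H_z=(z-\beta)/\sigma_z=(z-\beta)/\sqrt{\beta^2\delta_x^2+\beta^2\delta_y^2}$, so $G_z$ and $H_z$ differ only in that the denominator carries $z^2\delta_y^2$ where $H_z$ carries $\beta^2\delta_y^2$. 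On $I$ the hypothesis $\sqrt{\delta_x^2+\delta_y^2}\le\lambda$ forces $|z-\beta|\le\sigma_z/\lambda\le\beta$, hence $z\in[0,2\beta]$ and $z^2$ lies within a bounded factor of $\beta^2$; a direct estimate of the difference of the two reciprocal square roots then yields $|G_z-H_z|=O(\delta_y^2)$ uniformly on $I$, with a constant depending only on $\delta_x$ and $\lambda$. Since $\Phi$ is $1/\sqrt{2\pi}$-Lipschitz, this transfers to $|\Phi(G_z)-G(z)|$. Combining the two stages and choosing $\gamma(\xi)$ small enough that both $\Phi(-1/\delta_y)$ and the $O(\delta_y^2)$ term fall below $\xi/2$ for every $\delta_y\le\gamma(\xi)$ establishes $|F_Z(z)-G(z)|<\xi$ on $I$; the requirement $\gamma(\xi)\le\sqrt{\lambda^2-\delta_x^2}$ is exactly what keeps $I$ inside $[0,2\beta]$.

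The main obstacle I anticipate is the second stage: making the comparison $|G_z-H_z|$ uniform over $I$ while $I$ itself depends on $\delta_y$ through $\sigma_z$, and ensuring that a single threshold $\gamma(\xi)$ simultaneously controls the Geary tail and the argument mismatch. The Geary step is clean and delivers a uniform, exponentially small error essentially for free; the delicate part is verifying that sending $\delta_y\to 0$ (with $\delta_x$ held fixed below $\lambda$) shrinks $|G_z-H_z|$ across the \emph{whole} interval, which relies on the positivity $z>0$ and the boundedness of $z/\beta$ guaranteed by $\sqrt{\delta_x^2+\delta_y^2}\le\lambda$.

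\noindent\ignorespaces
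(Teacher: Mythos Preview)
The paper does not prove this theorem: it is quoted as Theorem~1 of D\'{\i}az-Franc\'{e}s and Rubio~\cite{diaz2013existence} and used as a black-box ingredient in the proofs of Theorems~\ref{thm:Valid_CI} and~\ref{thm:Gaussian_Valid_CI}. There is therefore no in-paper argument to compare your proposal against.

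For what it is worth, your two-stage plan is sound and is the classical route to results of this type. The Geary sign argument in stage one gives the exact uniform bound $|F_Z(z)-\Phi(G_z)|\le\Phi(-1/\delta_y)$. In stage two your identifications $G_z=(z-\beta)/\sqrt{\beta^2\delta_x^2+z^2\delta_y^2}$ and $H_z=(z-\beta)/\sqrt{\beta^2\delta_x^2+\beta^2\delta_y^2}$ are correct, and the constraint $\delta_y\le\sqrt{\lambda^2-\delta_x^2}$ indeed forces $\sigma_z/\lambda\le\beta$, hence $I\subset[0,2\beta]$. A mean-value estimate on $t\mapsto(\beta^2\delta_x^2+t)^{-1/2}$ then yields $|G_z-H_z|\le C\,\delta_y^2/\delta_x^3$ uniformly on $I$ for an absolute constant $C$, and the $1/\sqrt{2\pi}$ Lipschitz bound on $\Phi$ finishes. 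Your stated worry about $I$ depending on $\delta_y$ is not a real obstacle: the containment $I\subset[0,2\beta]$ and the resulting constants hold for every $\delta_y$ in the admissible range, so a single threshold $\gamma(\xi)$ controls both error terms simultaneously.
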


\begin{proof}
    We prove Theorem~\ref{thm:Valid_CI}. First, we prove that a ratio of noisy counts, when $n_x,n_y$ are large enough, is arbitrarily close to a ratio of two independent Gaussian random variables, and state their parameters.  

Let $X \sim \text{Binomial}(n_x, p_x)$, and define the empirical proportion $\widehat{p}_x := \frac{X}{n_x}$. By the classical Central Limit Theorem (CLT), we have
\[
\sqrt{n_x}(\widehat{p}_x - p_x) \xrightarrow{d} \mathcal{N}(0, \var(\widehat{p}_x)) \quad \text{as } n_x \to \infty,
\]
implying that \(\widehat{p}_x\) is asymptotically normal with mean \(p_x\) and variance $\var(\widehat{p}_x)$.
Let $L_x$ be a random variable that satisfies $\mathbb{E}[L_x] = 0$ and $\var(L_x)=\Sigma$ such that $L_x$ is independent of $X$. If $\Sigma=O(n_x)$ then $\frac{L_x}{n_x} \pto 0$ as $n_x \to \infty$. Define the perturbed estimator $\widetilde{p}_x := \widehat{p}_x + \frac{L_n}{n_x}$. Under the above conditions, we also have that \(\widetilde{p}_x\) is asymptotically normal with mean \(p_x\) and variance $\var(\widehat{p}_x)$; that is, for any $\eta > 0$, there exists $n_0 = n_0(\eta)$ such that for all $n_x > n_0$ and all $v \in \mathbb{R}$,
\begin{equation}\label{eq:p_x_tilde_to_gaus}
\left| \mathbb{P}(\widetilde{p}_x \le v) - G_x(v) \right| < \eta,
\end{equation}
where $G_x$ is the CDF of $\N(p_x, \var(\widehat{p}_x))$.

We can derive the same results for $Y$, and get that for any $\eta > 0$, there exists $n_0 = n_0(\eta)$ such that for all $n_y > n_0$ and all $v \in \mathbb{R}$,
\begin{equation}\label{eq:p_y_tilde_to_gaus}
\left| \mathbb{P}(\widetilde{p}_y \le v) - G_y(v) \right| < \eta,
\end{equation}
where $G_y$ is the CDF of $\N(p_y, \var(\widehat{p}_y))$.

We now consider the ratio $\widetilde{p} = \widetilde{p}_x/\widetilde{p}_y$. From \eqref{eq:p_x_tilde_to_gaus} and \eqref{eq:p_y_tilde_to_gaus} we have that
$$\widetilde{p}_x \dto \N(p_x, \var(\widehat{p}_x)), \qquad \widetilde{p}_y \dto \N(p_y, \var(\widehat{p}_y)).$$
Since they are independent (and so is their limiting distribution), we have that the joint distribution $(\widetilde{p}_x, \widetilde{p}_y)$ also converges to the joint limiting distribution.

The Continuous Mapping Theorem (CMT) states that if $(X_n,Y_n) \xrightarrow{d} (X,Y)$ and and if $g: \mathbb{R}^2 \to \mathbb{R}$ is a continuous function, then $
g(X_n, Y_n) \xrightarrow{d} g(X, Y)$.
In our case, define the function $g: \mathbb{R}^2 \to \mathbb{R}$ by $
g(a, b) = a/b$ for $b \neq 0$. This function is continuous everywhere except at $b = 0$. If $Y$ is continuous (as in our case), then $\mathbb{P}(Y = 0) = 0$.

To conclude, if we denote 
$\widetilde{p}^*= Z_x/Z_y$ where $Z_x \sim  \N(p_x, \var(\widehat{p}_x))$, $Z_y \sim \N(p_y, \var(\widehat{p}_y))$, and $Z_x$ and $Z_y$ are independent, we have that for any $\eta>0$ there exists some $n_0=n_0(\eta)$, such that for any $v$ and $n_x,n_y>n_0$,
\begin{equation}
    |F_{\widetilde{p}}(v)-F_{\widetilde{p}^*}(v)| < \eta.
\end{equation}

This completes the first part of the proof. Now we turn to leverage the results of~\cite{diaz2013existence} stated in Theorem \ref{thm:gauss_approx_ratio} to approximate the distribution of $\widetilde{p}^*$.

Let $\delta_x$ and $\delta_y$ be the coefficients of variation of $\widetilde{D}_x$, and $\widetilde{D}_y$
respectively. Then

$$ \delta^2_x=\var(\widehat{p}_x) {\big /}p_x^2, \quad \delta^2_y=\var(\widehat{p}_y) {\big /} p_y^2.$$
If $\sigma^2=O(n_x)$ then $\delta^2_x = O\left(\frac{1}{n_x}\right)$ since $\var(\widehat{p}_x) = O(\frac{1}{n_x})$, and similarly if $\sigma^2=O(n_y)$ then $\delta^2_y = O\left(\frac{1}{n_y}\right)$.

Let $\lambda=\sqrt{\delta_x^2 + 2\delta_y^2}$. For large enough $n_x$, $n_y$, we have
$\delta_x^2 \leq \lambda \leq 1, \ \delta_y^2 \leq \lambda \leq 1$.

Denote
\begin{equation*}\begin{split}
    V &= p^2 \left( \var(\widehat{p}_x){\big /}p_x^2 + \var(\widehat{p}_y){\big /}p_y^2\right) \\
    &= p^2 \left( \left( \frac{p_x(1-p_x)}{n_x}\right){\bigg /}p_x^2 + \left(\frac{p_y(1-p_y)}{n_y}\right){\bigg /}p_y^2 \right)\\
    &= p^2 \Bigg(\frac{1}{p_xn_x}-\frac{1}{n_x} + \frac{1}{p_yn_y}-\frac{1}{n_y}  \Bigg)
\end{split}\end{equation*}
and set $p^* \sim \N(p, V)$.

    For a normal random variable $Y$ independent of $X$, with small enough $\delta_y$ (which corresponds to large enough $n_y$), Condition \eqref{eq:condition} holds, and we have that for any constant $0<\xi<1$, there exists $n_0(\xi)$, such that for any $n_y>n_0(\xi)$,
    $$ |F_{\widetilde{p}^*}(z)-F_{p^*}(z)| < \xi,$$
    for any $z\in I = [p \pm \frac{\sigma_{\widetilde{p}^*}}{\lambda}]$ where $\sigma_{\widetilde{p}^*}=p\sqrt{\delta^2_x+\delta^2_y}$. Hence, for $z\in I$, using the triangle inequality, we get
    \begin{equation}\label{eq:cum__dist_nor_app}
        \left|F_{\widetilde{p}^*}(z)-F_{p^*}(z) \right| < \eta + \xi.
    \end{equation}
    Note also that 
    \begin{equation}\label{eq:ratio_gaus_dist}
        \frac{\sigma_{\widetilde{p}^*}}{\lambda} = p\frac{\sqrt{\delta^2_x+\delta^2_y}}{\sqrt{\delta_x^2 + 2\delta_y^2}} =   p \sqrt{1 - \frac{1}{\delta_y^2}} \pto p, 
    \end{equation}
thus the limit of $I$ is $\left(0, 2p\right)$.

It follows that the distribution of $\widetilde{p}$ converges to that of $p^*$ (see Equation \eqref{eq:ratio_gaus_dist}) in the interval $\left(0, 2p\right)$.

Define
\begin{equation}
    \widetilde{V} = \widetilde{p}  \cdot \Bigg(\frac{1}{\widetilde{X}}-\frac{1}{n_x} + \frac{1}{\widetilde{Y}}-\frac{1}{n_y} \Bigg)^{1/2}.
\end{equation}

Let
$$ L =  p - \Phi^{-1}(1-\alpha/2) \sqrt{ V}, \ \ U =  p+ \Phi^{-1}(1-\alpha/2) \sqrt{V}$$ 
$$ \tilde L =  p - \Phi^{-1}(1-\alpha/2) \sqrt{\widetilde V}, \ \ \tilde U =  p+ \Phi^{-1}(1-\alpha/2) \sqrt{\widetilde V}$$

$U$ and $L$ lie in the interval where the following hold due to \eqref{eq:cum__dist_nor_app}
\begin{equation}\label{eq:upper_ci}
    |F_{\widetilde p}( U) - F_{p^*}( U) |   \rightarrow 0
\end{equation}
and 
\begin{equation}
    |F_{\widetilde p}( L) - F_{p^*}( L) | \rightarrow 0.
\end{equation}

From Claim \ref{clm:consist_est} we have that $\widetilde{p} \rightarrow p$. From the CMT we have that $\frac{1}{\widetilde{p}_x} \pto \frac{1}{p_x}$ and $\frac{1}{\widetilde{p}_y} \pto \frac{1}{p_y}$. Thus, we can once again use the CMT and get that $\widetilde{V} \pto V$ as $n_x,n_y \rightarrow \infty$. Therefore, $\widetilde {U} \pto U$ and $\widetilde {L} \pto L$. Since $F_{\widetilde p}$ is continuous, we have
\begin{equation}
    |F_{\tilde p}(\tilde U) - F_{\tilde p}( U)| \stackrel{p}{\rightarrow} 0
\end{equation}
and 
\begin{equation}
    |F_{\tilde p}(\tilde L) - F_{\tilde p}( L) | \stackrel{p}{\rightarrow} 0.
    \label{eq:conv_FL_tilde}
\end{equation}

Therefore, exactly as in \citet{lin2024differentially},
\begin{equation*}
    \begin{aligned}
         \Pr &\left( p \in \left(\tilde p - \Phi^{-1}(1-\alpha/2)\sqrt{\widetilde V}, \tilde p + \Phi^{-1}(1-\alpha/2)\sqrt{\widetilde V}\right)\right) \\
        % & = \Pr\left(\tilde p - \Phi^{-1}(1-\alpha/2)\sqrt{\widetilde V} < p < \tilde p + \Phi^{-1}(1-\alpha/2)\sqrt{\widetilde V} \right) \\
        & = \Pr\left( p - \Phi^{-1}(1-\alpha/2)\sqrt{\widetilde V} < \tilde p <p + \Phi^{-1}(1-\alpha/2)\sqrt{\widetilde V} \right) \\
        & = \left(F_{\tilde p}(\tilde U) - F_{\tilde p}( U) \right)+ \left( F_{\tilde p}( U)  - F_{p^*}( U) \right) \\ 
        & \quad 
        -\left(F_{\tilde p}(\tilde L) - F_{\tilde p}( L) \right) - \left( F_{\tilde p}( L)  - F_{p^*}( L) \right) +  \left(F_{p^*}( U) - F_{p^*}( L)\right) .
    \end{aligned}
\end{equation*}
Putting together (\ref{eq:upper_ci}) through (\ref{eq:conv_FL_tilde}) and $F_{p^*}( U) - F_{p^*}( L) = 1- \alpha$, we have
\begin{equation*}
    \lim_{n_x,n_y\rightarrow \infty}\prob \left( p \in \left(\widetilde{p} - \Phi^{-1}(1-\alpha/2)\widetilde{V}, \widetilde p + \Phi^{-1}(1-\alpha/2)\widetilde{V}\right)\right)
      \rightarrow 1- \alpha
\end{equation*}
under the conditions $\Sigma = O(n_x)$ and $\Sigma = O(n_y)$.
\end{proof}

In the proof of Theorem \ref{thm:Valid_CI}, we ignored the perturbation noise. If the perturbation noise has a Gaussian distribution, we can account for both sources and get a more conservative CI that not only approaches the desired CI asymptotically, but also is very close to giving the coverage we seek for small, finite $n$. 

\begin{theorem}[Theorem \ref{thm:Gaussian_Valid_CI}, restated]
    Let $X$ and $Y$ be as in \eqref{eq:bin_rv}. Let $L_x,L_y \overset{\text{i.i.d.}}{\sim} \N(0, \sigma^2)$ independent from $X$ and $Y$. Denote $\widetilde{X} = X+L_x, \ \widetilde{Y} = Y+L_y$, $\widetilde{p}_x \coloneqq \widetilde{X}/n_x$, $\widetilde{p}_y \coloneqq \widetilde{Y}/n_y$, $p=p_x/p_y$, and $\widetilde{p} = \widetilde{p}_x/\widetilde{p}_y$.
    Define
    $$ \widetilde{V} = \widetilde{p}  \cdot \Bigg(\frac{1}{\widetilde{X}}-\frac{1}{n_x} + \frac{1}{\widetilde{Y}}-\frac{1}{n_y} \\ + \sigma^2 \left(\frac{1}{\widetilde{X}^2} + \frac{1}{\widetilde{Y}^2} \right) \Bigg)^{1/2}.$$
    Then, for any $\sigma^2 \in {\mathbb R}^{+}$ such that $\sigma^2 = O(n_x)$ and $\sigma^2 = O(n_y)$, if $n_x$ and $n_y$ both tend to infinity, then for any $0 < \alpha < 1$,
        $$
    \prob\left(p \in \left(\widetilde{p} \pm \Phi^{-1}(1-\alpha/2)\cdot \widetilde{V}\right)\right) \rightarrow 1-\alpha, 
    $$
where $\Phi^{-1}(\cdot)$ is the inverse of the CDF of a standard Gaussian random variable.
\end{theorem}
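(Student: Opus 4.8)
The plan is to mirror the proof of Theorem~\ref{thm:Valid_CI} step for step, changing exactly one thing: rather than discarding the privacy noise asymptotically via Slutsky's theorem (as is forced for general noise, where one only knows $L_x/n_x \pto 0$), I would retain its variance in the limiting normal law, which is legitimate precisely because $L_x,L_y$ are now \emph{exactly} Gaussian. Concretely, writing $\widetilde{p}_x = \widehat{p}_x + L_x/n_x$, the first summand is asymptotically $\N(p_x,\var(\widehat{p}_x))$ by the CLT (as in the earlier proof) and the second is exactly $\N(0,\sigma^2/n_x^2)$ and independent of it; since a convolution of independent Gaussians is Gaussian with added variances, $\widetilde{p}_x$ is asymptotically $\N\!\big(p_x,\ \var(\widehat{p}_x)+\sigma^2/n_x^2\big)$, and likewise $\widetilde{p}_y$ is asymptotically $\N\!\big(p_y,\ \var(\widehat{p}_y)+\sigma^2/n_y^2\big)$. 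This is the only place where Gaussianity of the mechanism is used, and it is the heart of why this CI can account for the perturbation at finite $n$ instead of washing it out.

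From there I would follow the remaining structure verbatim. By independence and the Continuous Mapping Theorem, $\widetilde{p}=\widetilde{p}_x/\widetilde{p}_y$ converges to a ratio of two independent Gaussians carrying the \emph{augmented} variances above. I would then invoke the normal-approximation-to-a-ratio result of~\cite{diaz2013existence} (Theorem~\ref{thm:gauss_approx_ratio}) with the augmented coefficients of variation
$$\delta_x^2=\big(\var(\widehat{p}_x)+\sigma^2/n_x^2\big)/p_x^2,\qquad \delta_y^2=\big(\var(\widehat{p}_y)+\sigma^2/n_y^2\big)/p_y^2.$$
The hypotheses of that theorem still hold: since $\sigma^2=O(n_x)$ we have $\sigma^2/n_x^2=O(1/n_x)$, so $\delta_x^2=O(1/n_x)\to 0$ and similarly $\delta_y^2\to 0$; hence for large enough $n_x,n_y$ the coefficients of variation fall below any fixed $\lambda\le 1$ and condition~\eqref{eq:condition} is met. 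This yields a single Gaussian $p^*\sim\N(p,V)$ with $V=p^2(\delta_x^2+\delta_y^2)$ approximating the ratio's distribution on an interval that, exactly as in~\eqref{eq:ratio_gaus_dist}, shrinks to $(0,2p)$ and therefore eventually contains $p$ and the CI endpoints.

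The remaining bookkeeping is to check that the stated $\widetilde{V}$ is the correct plug-in estimator of this $V$ and that it is consistent. Expanding $V$ using $\var(\widehat{p}_x)=p_x(1-p_x)/n_x$ gives
$$V=p^2\!\left(\tfrac{1}{p_x n_x}-\tfrac{1}{n_x}+\tfrac{1}{p_y n_y}-\tfrac{1}{n_y}+\sigma^2\big(\tfrac{1}{p_x^2 n_x^2}+\tfrac{1}{p_y^2 n_y^2}\big)\right),$$
and substituting $\widetilde{p}$ for $p$, $\widetilde{X}=n_x\widetilde{p}_x$ for $n_x p_x$, and $\widetilde{X}^2$ for $n_x^2p_x^2$ (likewise for $Y$) reproduces exactly the $\widetilde{V}$ in the statement, with its extra $\sigma^2\big(\widetilde{X}^{-2}+\widetilde{Y}^{-2}\big)$ term. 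Consistency $\widetilde{V}\pto V$ then follows from Claim~\ref{clm:consist_est} ($\widetilde{p}\pto p$) together with $1/\widetilde{p}_x\pto 1/p_x$, $1/\widetilde{p}_y\pto 1/p_y$ and the CMT, after which the identical triangle-inequality decomposition of the coverage probability used at the end of the proof of Theorem~\ref{thm:Valid_CI}, combined with $F_{p^*}(U)-F_{p^*}(L)=1-\alpha$, delivers convergence to $1-\alpha$.

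I expect the only genuinely new difficulty to be step one---cleanly justifying that the privacy variance may be absorbed into the limiting normal law rather than dropped. The delicate point is that ``asymptotic normality'' of $\widehat{p}_x$ is a CLT statement whose approximation error must be controlled while it is convolved with the exact Gaussian noise; because the noise is itself exactly Gaussian, this convolution is exact and the total approximation error is inherited entirely from the CLT step, so no new error terms appear. As a sanity check I would also note the cheap dominance argument: the $\widetilde{V}$ here pointwise dominates the one in Theorem~\ref{thm:Valid_CI}, so the resulting interval contains that one and immediately gives $\liminf \ge 1-\alpha$; the full argument above is what sharpens this to an exact limit of $1-\alpha$.
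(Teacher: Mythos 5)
Your proposal matches the paper's proof essentially step for step: the paper likewise absorbs the exact Gaussian perturbation variance into the limiting normal laws of $\widetilde{p}_x$ and $\widetilde{p}_y$ (rather than discarding it via Slutsky), re-invokes the D\'iaz-Franc\'es--Rubio approximation with the augmented coefficients of variation $\delta_x^2=\bigl(\var(\widehat{p}_x)+\sigma^2/n_x^2\bigr)/p_x^2$, and then states that the remainder follows verbatim from the proof of Theorem~\ref{thm:Valid_CI}, exactly as you outline. Your added remarks---that Gaussianity makes the convolution exact so the only approximation error comes from the CLT step, and the dominance observation giving $\liminf \ge 1-\alpha$ for free---are correct refinements but do not change the route.
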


The proof closely follows the structure of the proof of Theorem \ref{thm:Valid_CI}. In fact, this result can be viewed as a corollary of Theorem \ref{thm:Valid_CI}, as the two differ only by a second-order term. However, we present the full proof here to explicitly highlight how and where the perturbation noise is handled.

\begin{proof}

    Let $X \sim \text{Binomial}(n_x, p_x)$, and define the empirical proportion $\widehat{p}_x := \frac{X}{n_x}$. By the classical Central Limit Theorem (CLT), we have
\[
\sqrt{n_x}(\widehat{p}_x - p_x) \xrightarrow{d} \mathcal{N}(0, \var(\widehat{p}_x)) \quad \text{as } n_x \to \infty,
\]
implying that \(\widehat{p}_x\) is asymptotically normal with mean \(p_x\) and variance $\var(\widehat{p}_x)$.
Let $L_x \sim \N(0, \sigma^2)$ independent of $X$. Define the perturbed estimator $\widetilde{p}_x := \widehat{p}_x + \frac{L_n}{n_x}$. 

From the CMT we have that the sum of independent random variables goes to the sum of their limiting distribution. In our case, $\widehat{p}_x$ goes in distribution to a normal random variable, and the noise is always a random variable. Thus, the limiting distribution of the sum is a normal random variable with mean that is the sum of the means, and variance that is the sum of the variances. It thus follows that  $\widetilde{p}_x$ is asymptotically normal with mean $p_x$ and variance $\var(\widehat{p}_x) + \frac{\sigma^2}{n^2_x}$; that is, for any $\eta > 0$, there exists $n_0 = n_0(\eta)$ such that for all $n_x > n_0$ and all $v \in \mathbb{R}$,
\begin{equation}\label{eq:p_x_tilde_to_gaus2}
\left| \mathbb{P}(\widetilde{p}_x \le v) - G_x(v) \right| < \eta,
\end{equation}
where $G_x$ is the CDF of $\N\left(p_x, \var(\widehat{p}_x)+\frac{\sigma^2}{n^2_x}\right)$.

We can derive the same results for $Y$, and get that for any $\eta > 0$, there exists $n_0 = n_0(\eta)$ such that for all $n_y > n_0$ and all $v \in \mathbb{R}$,
\begin{equation}\label{eq:p_y_tilde_to_gaus2}
\left| \mathbb{P}(\widetilde{p}_y \le v) - G_y(v) \right| < \eta,
\end{equation}
where $G_y$ is the CDF of $\N\left(p_y, \var(\widehat{p}_y)+\frac{\sigma^2}{n^2_y}\right)$.

We now consider the ratio $\widetilde{p} = \widetilde{p}_x/\widetilde{p}_y$. From \eqref{eq:p_x_tilde_to_gaus2} and \eqref{eq:p_y_tilde_to_gaus2} we have that
$$\widetilde{p}_x \dto \N(p_x, \var(\widehat{p}_x)+\sigma^2/n^2_x), \qquad \widetilde{p}_y \dto \N(p_y, \var(\widehat{p}_y)+\sigma^2/n^2_y).$$
Since they are independent (and so is their limiting distribution), we have that the joint distribution $(\widetilde{p}_x, \widetilde{p}_y)$ also converges to the joint limiting distribution.

The Continuous Mapping Theorem (CMT) states that if $(X_n,Y_n) \xrightarrow{d} (X,Y)$ and and if $g: \mathbb{R}^2 \to \mathbb{R}$ is a continuous function, then $
g(X_n, Y_n) \xrightarrow{d} g(X, Y)$.
In our case, define the function $g: \mathbb{R}^2 \to \mathbb{R}$ by $
g(a, b) = a/b$ for $b \neq 0$. This function is continuous everywhere except at $b = 0$. If $Y$ is continuous (as in our case), then $\mathbb{P}(Y = 0) = 0$.

To conclude, if we denote 
$\widetilde{p}^*= Z_x/Z_y$ where $Z_x \sim \N(p_x, \var(\widehat{p}_x)+\sigma^2/n^2_x)$ and $Z_y \sim \N(p_y, \var(\widehat{p}_y)+\sigma^2/n^2_y)$, and $Z_x$ and $Z_y$ are independent, we have that for any $\eta>0$ there exists some $n_0=n_0(\eta)$, such that for any $v$ and $n_x,n_y>n_0$,
\begin{equation}
    |F_{\widetilde{p}}(v)-F_{\widetilde{p}^*}(v)| < \eta.
\end{equation}

This completes the first part of the proof. Now we turn to leverage the results of~\cite{diaz2013existence} stated in Theorem \ref{thm:gauss_approx_ratio} to approximate the distribution of $\widetilde{p}^*$.

Let $\delta_x$ and $\delta_y$ be the coefficients of variation of $\widetilde{D}_x$, and $\widetilde{D}_y$
respectively. Then

$$ \delta^2_x=\left(\var(\widehat{p}_x)+
    \frac{\sigma^2}{n^2_x}\right) {\bigg /}p_x^2, \quad \delta^2_y=\left(\var(\widehat{p}_y)+
    \frac{\sigma^2}{n^2_y}\right) {\bigg /} p_y^2. $$
If $\sigma^2=O(n_x)$ then $\delta^2_x = O\left(\frac{1}{n_x}\right)$ since $\var(\widehat{p}_x) = O(\frac{1}{n_x})$, and similarly if $\sigma^2=O(n_y)$ then $\delta^2_y = O\left(\frac{1}{n_y}\right)$.

Let $\lambda=\sqrt{\delta_x^2 + 2\delta_y^2}$. For large enough $n_x$, $n_y$,
$\delta_x^2 \leq \lambda \leq 1, \ \delta_y^2 \leq \lambda \leq 1$.

Denote
\begin{equation*}\begin{split}
    V &= p^2 \left(\left(\var(\widehat{p}_x)+
    \frac{\sigma^2}{n^2_x}\right){\bigg /}p_x^2 + \left(\var(\widehat{p}_y)+
    \frac{\sigma^2}{n^2_y}\right){\bigg /}p_y^2\right) \\
    &= p^2 \left(\left(\frac{p_x(1-p_x)}{n_x}+
    \frac{\sigma^2}{n^2_x}\right){\bigg /}p_x^2 + \left(\frac{p_y(1-p_y)}{n_y}+
    \frac{\sigma^2}{n^2_y}\right){\bigg /}p_y^2\right) \\
    &= p^2 \Bigg(\frac{1}{p_xn_x}-\frac{1}{n_x} + \frac{1}{p_yn_y}-\frac{1}{n_y}  + \sigma^2 \left(\frac{1}{(p_xn_x)^2} + \frac{1}{(p_yn_y)^2} \right) \Bigg)
\end{split}\end{equation*}
and set $p^* \sim \N(p, V)$.

    For a normal random variable $Y$ independent of $X$, with small enough $\delta_y$ (which corresponds to large enough $n_y$), Condition \eqref{eq:condition} holds, and we have that for any constant $0<\xi<1$, there exists $n_0(\xi)$, such that for any $n_y>n_0(\xi)$
    $$ |F_{\widetilde{p}^*}(z)-F_{p^*}(z)| < \xi,$$
    for any $z\in I = [p \pm \frac{\sigma_{\widetilde{p}^*}}{\lambda}]$ where $\sigma_{\widetilde{p}^*}=p\sqrt{\delta^2_x+\delta^2_y}$. Hence, for $z\in I$, using the triangle inequality, we get
    \begin{equation}\label{eq:cum__dist_nor_app2}
        \left|F_{\widetilde{p}^*}(z)-F_{p^*}(z) \right| < \eta + \xi.
    \end{equation}
    Note also that 
    \begin{equation}\label{eq:ratio_gaus_dist2}
        \frac{\sigma_{\widetilde{p}^*}}{\lambda} = p\frac{\sqrt{\delta^2_x+\delta^2_y}}{\sqrt{\delta_x^2 + 2\delta_y^2}} =   p \sqrt{1 - \frac{1}{\delta_y^2}} \pto p, 
    \end{equation}
thus the limit of $I$ is $\left(0, 2p\right)$.

It follows that the distribution of $\widetilde{p}$ converges to that of $p^*$ (see Equation \eqref{eq:ratio_gaus_dist2}) in the interval $\left(0, 2p\right)$.

Define
\begin{equation}
    \widetilde{V} = \widetilde{p}  \cdot \Bigg(\frac{1}{\widetilde{X}}-\frac{1}{n_x} + \frac{1}{\widetilde{Y}}-\frac{1}{n_y} \\ + \sigma^2 \left(\frac{1}{\widetilde{X}^2} + \frac{1}{\widetilde{Y}^2} \right) \Bigg)^{1/2},
\end{equation}
and the rest of the proof follows exactly as the proof of Theorem \ref{thm:Valid_CI}.
\end{proof}

\subsubsection*{Coverage rate for differentially private ratio of noisy counts with Gaussian noise}\label{subsubsec:gauss_coverage}

The coverage of a CI is the probability (over all sources of randomness) that the CI (which is a random variable) contains the true parameter. Computing the exact coverage of the CI in Theorem \ref{thm:Valid_CI} and Theorem \ref{thm:Gaussian_Valid_CI} requires convolution of a binomial random variable with Gaussian/Laplace noise, so instead we use a numerical simulation. For each set of parameters, we sample $10,000$ pairs of binomial random variables according to Equation~\eqref{eq:bin_rv}, with $n_x=n_y=200$, and we vary $p_x$ and $p_y$. 

We sample Gaussian noise for each binomial draw, with $\varepsilon=0.5$ and $\delta=10^{-4}$, and Laplace noise with $\varepsilon=0.5$. We take the maximum between the perturbed binomial random variable and $1$. For each pair of binomial random variables plus Gaussian or Laplace noise, we compute the CI based on Theorem \ref{thm:Valid_CI}, and Theorem \ref{thm:Gaussian_Valid_CI} (conservative) with $0.95$ confidence level and record whether the actual parameter $p_x/p_y$ is covered by the CI. We then report the average over all the simulations.

\begin{table}[htbp]
\centering
\renewcommand{\arraystretch}{1.1}
\footnotesize
{\centering\large\textbf{Coverage of Gaussian noise addition CI}\par}
\vspace{0.5em}
\begin{tabular}{l|rrrrrrrrr}
\toprule
$p_x$ / $p_y$ &  0.1 &  0.2 &  0.3 &  0.4 &  0.5 & 0.6 &  0.7 &  0.8 & 0.9 \\
\midrule
0.1 & 0.786 & 0.763 & 0.754 & 0.734 & 0.742 & 0.733 & 0.727 & 0.729 & 0.726 \\
0.2 & 0.819 & 0.828 & 0.828 & 0.821 & 0.816 & 0.813 & 0.803 & 0.806 & 0.803 \\
0.3 & 0.833 & 0.830 & 0.839 & 0.848 & 0.846 & 0.846 & 0.836 & 0.836 & 0.833 \\
0.4 & 0.834 & 0.851 & 0.847 & 0.863 & 0.857 & 0.852 & 0.851 & 0.849 & 0.839 \\
0.5 & 0.832 & 0.834 & 0.851 & 0.858 & 0.864 & 0.858 & 0.857 & 0.850 & 0.840 \\
0.6 & 0.828 & 0.834 & 0.858 & 0.864 & 0.863 & 0.854 & 0.849 & 0.848 & 0.834 \\
0.7 & 0.817 & 0.836 & 0.844 & 0.853 & 0.854 & 0.849 & 0.848 & 0.829 & 0.810 \\
0.8 & 0.825 & 0.829 & 0.853 & 0.851 & 0.855 & 0.845 & 0.839 & 0.806 & 0.788 \\
0.9 & 0.817 & 0.828 & 0.847 & 0.847 & 0.852 & 0.832 & 0.817 & 0.782 & 0.721 \\
\bottomrule
\end{tabular}
\vspace{1em}

{\centering\large\textbf{Laplace noise addition CI}\par}
\vspace{0.5em}
\begin{tabular}{l|rrrrrrrrr}
\toprule
$p_x$ / $p_y$ &  0.1 &  0.2 &  0.3 &  0.4 &  0.5 & 0.6 &  0.7 &  0.8 & 0.9 \\
\midrule
0.1 & 0.898 & 0.894 & 0.896 & 0.889 & 0.888 & 0.894 & 0.886 & 0.885 & 0.892 \\
0.2 & 0.913 & 0.922 & 0.921 & 0.923 & 0.920 & 0.919 & 0.919 & 0.913 & 0.915 \\
0.3 & 0.916 & 0.917 & 0.930 & 0.926 & 0.926 & 0.924 & 0.926 & 0.921 & 0.925 \\
0.4 & 0.916 & 0.931 & 0.927 & 0.931 & 0.933 & 0.930 & 0.930 & 0.928 & 0.918 \\
0.5 & 0.918 & 0.925 & 0.930 & 0.932 & 0.933 & 0.927 & 0.927 & 0.931 & 0.929 \\
0.6 & 0.919 & 0.926 & 0.928 & 0.935 & 0.932 & 0.929 & 0.928 & 0.929 & 0.924 \\
0.7 & 0.911 & 0.923 & 0.926 & 0.928 & 0.928 & 0.933 & 0.928 & 0.928 & 0.918 \\
0.8 & 0.918 & 0.924 & 0.932 & 0.928 & 0.929 & 0.930 & 0.924 & 0.915 & 0.916 \\
0.9 & 0.909 & 0.926 & 0.926 & 0.931 & 0.921 & 0.921 & 0.924 & 0.916 & 0.894 \\
\bottomrule
\end{tabular}
\vspace{1em}

{\centering\large\textbf{Conservative Gaussian noise addition CI}\par}
\vspace{0.5em}
\begin{tabular}{l|rrrrrrrrr}
\toprule
$p_x$ / $p_y$ &  0.1 &  0.2 &  0.3 &  0.4 &  0.5 & 0.6 &  0.7 &  0.8 & 0.9 \\
\midrule
0.1 & 0.920 & 0.949 & 0.948 & 0.945 & 0.946 & 0.946 & 0.947 & 0.944 & 0.945 \\
0.2 & 0.915 & 0.946 & 0.947 & 0.949 & 0.950 & 0.944 & 0.945 & 0.949 & 0.947 \\
0.3 & 0.920 & 0.936 & 0.947 & 0.950 & 0.948 & 0.949 & 0.947 & 0.948 & 0.949 \\
0.4 & 0.916 & 0.946 & 0.948 & 0.952 & 0.953 & 0.950 & 0.952 & 0.953 & 0.947 \\
0.5 & 0.919 & 0.938 & 0.947 & 0.950 & 0.952 & 0.950 & 0.953 & 0.954 & 0.948 \\
0.6 & 0.916 & 0.939 & 0.950 & 0.952 & 0.950 & 0.949 & 0.951 & 0.955 & 0.948 \\
0.7 & 0.911 & 0.940 & 0.947 & 0.947 & 0.948 & 0.948 & 0.957 & 0.949 & 0.946 \\
0.8 & 0.912 & 0.939 & 0.950 & 0.948 & 0.947 & 0.948 & 0.951 & 0.949 & 0.952 \\
0.9 & 0.911 & 0.938 & 0.948 & 0.947 & 0.950 & 0.951 & 0.950 & 0.947 & 0.952 \\
\bottomrule
\end{tabular}
\vspace{1em}

{\centering\large\textbf{Conservative Laplace noise addition CI}\par}
\vspace{0.5em}
\begin{tabular}{l|rrrrrrrrr}
\toprule
$p_x$ / $p_y$ &  0.1 &  0.2 &  0.3 &  0.4 &  0.5 & 0.6 &  0.7 &  0.8 & 0.9 \\
\midrule
0.1 & 0.938 & 0.941 & 0.945 & 0.937 & 0.941 & 0.944 & 0.943 & 0.942 & 0.944 \\
0.2 & 0.938 & 0.949 & 0.947 & 0.949 & 0.947 & 0.946 & 0.947 & 0.943 & 0.947 \\
0.3 & 0.937 & 0.941 & 0.950 & 0.946 & 0.949 & 0.945 & 0.951 & 0.947 & 0.948 \\
0.4 & 0.938 & 0.951 & 0.947 & 0.949 & 0.951 & 0.951 & 0.954 & 0.951 & 0.944 \\
0.5 & 0.941 & 0.947 & 0.949 & 0.951 & 0.950 & 0.945 & 0.949 & 0.950 & 0.949 \\
0.6 & 0.940 & 0.947 & 0.950 & 0.950 & 0.951 & 0.949 & 0.948 & 0.951 & 0.948 \\
0.7 & 0.936 & 0.945 & 0.946 & 0.948 & 0.948 & 0.951 & 0.952 & 0.953 & 0.948 \\
0.8 & 0.942 & 0.948 & 0.955 & 0.946 & 0.948 & 0.950 & 0.951 & 0.948 & 0.951 \\
0.9 & 0.934 & 0.949 & 0.950 & 0.949 & 0.944 & 0.947 & 0.951 & 0.950 & 0.949 \\
\bottomrule
\end{tabular}
\vspace{1em}
\caption{Average coverage of 10,000 CIs constructed for 10,000 pairs of binomial random variables drawn according to Equation~\eqref{eq:bin_rv}, with $n_x=n_y=200$. We perturb the counts with both Laplace and Gaussian noise to obtain $(\varepsilon,0)$-DP and $(\varepsilon, \delta=10^{-4})$-DP respectively, and compute CIs using Theorem \ref{thm:Valid_CI} (Private Laplace and Private Gaussian), and according to Theorem \ref{thm:Gaussian_Valid_CI} (Cons Private Gaussian and Laplace); for the Laplace noised counts we plug in the variance of the Laplace.}
\label{tab:coverage-px-py}
\end{table}

\newpage
\section{Further details on the numerical study}\label{App:numerical_study}

\subsection{Sample accuracy}
In Figure \ref{fig:Sample_accuracy}, we consider the sample accuracy (Definition \ref{def:sample_acc}) of several algorithms for differentially private ratio statistics, for three sets of parameters. A much more extensive study obtained very similar results. 

We consider four sets of values (that is, four datasets), $(X,Y)=\{(100,100),(50,100), (100,50), (100,30)\}$. For each set, we compared the sample accuracy $(1-\beta)$ of different algorithms, for varying values of $\varepsilon$. We chose $\alpha=0.1$, and $n_x=n_y=150$. 

\begin{enumerate}
    \item \textbf{Smooth Sensitivity:} The smooth sensitivity algorithm adds Laplace independent noise, with variance proportional to the smooth local sensitivity (see Appendix \ref{App:local_noise_methods}). Thus, once we derive the local sensitivity, we can easily compute $1-\beta$ from the Laplace CDF. We took $\delta=1/n_x=1/150$.
    \item \textbf{Propose-Test-Release:} For propose-test-release detailed in Appendix \ref{App:local_noise_methods} we had to account for the mechanism returning FAIL. We accounted for that as an additional error when compared with other methods. The algorithm also requires a guess for the local sensitivity. We gave the algorithm oracle access to the true sensitivity, that is, the guess is optimized in terms of $1-\beta$. We took $\delta=1/n_x=1/150$.
    \item \textbf{NoisedLog:} We plot the sample accuracy given in Claim \ref{clm:acc_exp_naive} (see Appendix \ref{App:DP_ratio_naive}).
    \item \textbf{LaplaceNoisedCounts:} We plot the sample accuracy given in Claim \ref{clm:acc_Alg_count_no_max} (see Section \ref{sec:noisy_lap_counts}).
\end{enumerate}

\subsection{The bias of LaplaceNoisedCounts}

In Figure \ref{fig:Bias_figure} we quantify the bias of the \textit{LaplaceNoisedCounts$_\varepsilon(X, Y)$} algorithm. We fix $n_x=n_y=150$, and for three sets of parameters $(p_x,p_y) = \{(1/3, 2/3),(1/2,1/2),(2/3, 1/3)\}$, we simulate $20,000$ pairs of Binomially distributed random variables:
$$ X \sim Bin(p_x,n_x), \ \ \ Y \sim Bin(p_y,n_y).$$

For each pair, we use \textit{LaplaceNoisedCounts$_\varepsilon(X, Y)$} to compute a $(\varepsilon,0)$-DP estimate, denoted by $(\widetilde{X}, \widetilde{Y})$. Further denote $X/Y=\widehat{p}_x/\widehat{p}_y=\widehat{p}$ and $\widetilde{X}/\widetilde{Y}=\widetilde{p}_x/\widetilde{p}_y=\widetilde{p}$. 

Then, for each $\varepsilon$, we average the estimates over all $20,000$ independent pairs, and plot the average.

In Figure \ref{fig:Bias_figure} we compare the bias of $\widehat{p}$ (the non-private estimate), $\widetilde{p}$ (the private estimate), and Claim \ref{clm:bias_ratio_counts} about the bias of the private estimate.

\subsection{Confidence interval width}

Following Section \ref{sec:CI}, we compare two non-private methods to construct CIs (for non-private counts), to the methods derived in Theorem \ref{thm:Valid_CI} and in Theorem \ref{thm:Gaussian_Valid_CI}.

We fix $n_x=n_y=150$, and for three sets of parameters $(p_x,p_y) = \{(1/3, 2/3),(1/2,1/2),(2/3, 1/3)\}$, we simulate $10,000$ pairs of Binomially distributed random variables:
$$ X \sim Bin(p_x,n_x), \ \ \ Y \sim Bin(p_y,n_y).$$

For each pair we compute the Classic CI, and the non-private CI. We then perturb the counts with both Laplace and Gaussian noise to obtain $(\varepsilon,0)$-DP and $(\varepsilon, \delta=10^{-4})$-DP, respectively. From the Laplace perturbed counts, we compute the Private Laplace and Cons Private Laplace CI, and from the Gaussian perturbed counts, we compute the Private Gaussian and Cons Private Gaussian CI.

\begin{enumerate}
    \item \textbf{Classic:} This is the classical method in the statistical literature to construct a CI for the relative risk statistic. See Equation~\eqref{eq:conf_int_RR}.
    \item \textbf{Non-Private:} We suggested an alternative approach for constructing CI based on direct normal approximation of the ratio. The method is detailed in Theorem \ref{thm:Valid_CI}.
    \item \textbf{Private Gaussian:} We use the Gaussian-noised-counts in Theorem \ref{thm:Valid_CI}, with $\delta=10^{-4})$.
    \item \textbf{Private Laplace:} We plug the Laplace-noised-counts in Theorem \ref{thm:Valid_CI}.
    \item \textbf{Cons Private Gaussian:} We use the Gaussian-noised-counts in Theorem \ref{thm:gauss_approx_ratio}, where $\sigma^2$ is computed such that the counts are $(\varepsilon, \delta=10^{-4})$-DP according to Theorem \ref{thm:Gaussian_Valid_CI}.
    \item \textbf{Cons Private Laplace:} We use the Laplace-noised-counts in Theorem \ref{thm:gauss_approx_ratio}, with $\sigma^2=2b^2=2(2/\varepsilon)^2$
\end{enumerate}

Then, for each $\varepsilon$, we average the coverage of each CI and its width.

\newpage
\section{Missing proofs}\label{sec:proofs}

\subsection{\textbf{Claim \ref{clm:Lap_ratio_cum_prob}}}

\begin{claim}\label{clm:Lap_ratio_cum_prob}
    Let $X_1 \sim Lap(\mu_1,b)$, and $X_2 \sim Lap(\mu_2,b)$, where $\mu_1,\mu_2 > 0$ are two independent random variables, and let $a\in \mathbb{R}$ be some positive constant.
    
    If $\mu_1/a \geq \mu_2$ then
     \begin{equation*}
        \mathbb{P}(X_1/X_2 < a) = \left(\frac{-1}{2(a+1)(a-1)}\right)\exp\left(\frac{\mu_2a-\mu_1}{b}\right) + \left(\frac{a^2}{2(a+1)(a-1)}\right)\exp\left(\frac{\mu_2-\mu_1/a}{b}\right).
    \end{equation*}

    If $\mu_1/a < \mu_2$ then
    \begin{equation*} \mathbb{P}(X_1/X_2 < a) = 1 + \exp\left(\frac{\mu_1/a-\mu_2}{b}\right) \left( \frac{a^2}{2(a+1)(1-a)}\right)
    - \exp\left(\frac{\mu_1-a\mu_2}{b}\right)\left(\frac{1}{2(a+1)(1-a)} \right). 
    \end{equation*}
\end{claim}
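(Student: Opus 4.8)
The plan is to compute the CDF directly by conditioning on the denominator $X_2$ and integrating against its density. Writing $F_{X_1}$ for the Laplace CDF of $X_1$ and $f_{X_2}$ for the Laplace density of $X_2$, I would start from
$$\mathbb{P}(X_1/X_2 < a) = \int_0^\infty F_{X_1}(a x)\, f_{X_2}(x)\, dx + \int_{-\infty}^0 \big(1 - F_{X_1}(a x)\big)\, f_{X_2}(x)\, dx,$$
where the split at $x=0$ and the complementary form on the negative axis arise because multiplying the event $X_1/X_2 < a$ through by $X_2$ reverses the inequality exactly when $X_2 < 0$. Substituting the explicit density $f_{X_2}(x) = \tfrac{1}{2b}e^{-|x-\mu_2|/b}$ together with the piecewise Laplace CDF of $X_1$ reduces the whole computation to integrals of products of exponentials.

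The key bookkeeping step is to track the breakpoints at which the piecewise definitions change. The inner term $F_{X_1}(ax)$ switches branch at $ax = \mu_1$, i.e. at $x = \mu_1/a$, while the density $f_{X_2}(x)$ switches at $x = \mu_2$; since $\mu_1,\mu_2,a>0$, both breakpoints lie on the positive half-line. Their relative order is precisely what the two cases $\mu_1/a \ge \mu_2$ and $\mu_1/a < \mu_2$ distinguish, so I would partition the positive half-line into the three sub-intervals determined by these two points and treat each case separately. On the negative axis there is no further breakpoint for $F_{X_1}(ax)$ (since $ax < 0 < \mu_1$ there, it stays in its lower exponential branch), so that contribution is comparatively simple.

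Each resulting piece is an elementary integral of the form $\int e^{cx}\,dx$, producing terms whose exponents combine the $x$ and $ax$ contributions. Integrating the factors $e^{\pm(a\pm 1)x/b}$ yields the constants $\tfrac{b}{a+1}$ and $\tfrac{b}{a-1}$, and once the $b$'s cancel these combine into the quadratic denominators $2(a+1)(a-1)$ of the statement. Evaluating the antiderivatives at the breakpoints $\mu_2$ and $\mu_1/a$—and using $\mu_1,\mu_2>0$ to guarantee convergence at $\pm\infty$—produces exactly the exponentials $\exp\!\big(\tfrac{a\mu_2-\mu_1}{b}\big)$ and $\exp\!\big(\tfrac{\mu_2-\mu_1/a}{b}\big)$ in the first case, and their analogues in the second.

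I expect the main obstacle to be purely the careful case analysis and sign/branch bookkeeping rather than any single hard integral: one must correctly assemble the sub-interval contributions in each case, keep the $1-F_{X_1}$ complement straight on the negative axis, and verify that the many exponential terms collapse into the compact closed form, with several intermediate pieces cancelling. I note that the formula implicitly requires $a \ne 1$ (else the $(a-1)$ denominator is singular); the locus $a=1$ is a single point and can be handled separately by continuity of the CDF in $a$.
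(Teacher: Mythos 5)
Your setup is sound and, up to one step, it is the same computation the paper performs: condition on the denominator, split the positive half-line at the breakpoints $\mu_2$ and $\mu_1/a$ (whose relative order is exactly the case distinction), and evaluate elementary integrals of exponentials whose $\frac{b}{a\pm 1}$ factors assemble into the $2(a+1)(a-1)$ denominators; your observation that $a\neq 1$ is implicitly required is also correct. The one place you genuinely depart from the paper is the negative axis: you (correctly, for the event as written) use $1-F_{X_1}(ax)$ for $x<0$, whereas the paper's proof integrates $F_{X_1}(ax_2)\,f_{X_2}(x_2)$ over all of $\mathbb{R}$ with no inequality reversal---that is, it computes $\mathbb{P}(X_1 < aX_2)$ rather than $\mathbb{P}(X_1/X_2 < a)$. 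The stated closed forms are exactly $\mathbb{P}(X_1 < aX_2)$: for instance, as $a\to\infty$ the second-case formula tends to $1-\tfrac{1}{2}e^{-\mu_2/b}=\mathbb{P}(X_2>0)$, not to $1$ as the CDF of the ratio must.

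Consequently your plan, carried out faithfully, cannot terminate at the claimed identity. Since $F_{X_1}(ax)=\tfrac{1}{2}e^{(ax-\mu_1)/b}$ for all $x<0$ and $f_{X_2}(x)=\tfrac{1}{2b}e^{(x-\mu_2)/b}$ there, the difference between your integral and the paper's is
\begin{equation*}
\int_{-\infty}^{0}\bigl(1-2F_{X_1}(ax)\bigr)f_{X_2}(x)\,dx
\;=\;\frac{1}{2}e^{-\mu_2/b}\;-\;\frac{1}{2(a+1)}e^{-(\mu_1+\mu_2)/b},
\end{equation*}
which is strictly positive and identical in both cases (both breakpoints lie on the positive axis, so the case split never touches this region). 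So where you expect the exponential terms to ``collapse into the compact closed form,'' you will instead be left with these two extra terms; the printed right-hand sides are recovered only if you drop the sign flip on $\{x<0\}$, as the paper's derivation does. The discrepancy is small in the paper's application---with $\mu_2=Y\geq 1$ and $b=2/\varepsilon$ it is $O(e^{-\varepsilon Y/2})$, the same order as the third term in Claim~\ref{clm:acc_Alg_count_no_max}---but your more careful decomposition would surface exactly this gap: the formulas as stated are exact for $\mathbb{P}(X_1<aX_2)$, not for $\mathbb{P}(X_1/X_2<a)$.
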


\begin{proof}
If $\mu_1/a \geq \mu_2$, then
    \begin{equation}\begin{split}
        \mathbb{P}(X_1/X_2 < a) &= \int_{-\infty}^{\infty}\mathbb{P}(X_1< a X_2 | X_2=x_2)\mathbb{P}(X_2=x_2) \\
        &= \int_{-\infty}^{\mu_2} \frac{1}{2}\exp \left(\frac{ax_2-\mu_1}{b}\right)\frac{1}{2b}\exp\left(\frac{x_2-\mu_2}{b} \right)dx_2  \\
        & \hspace{2cm} +\int_{\mu_2}^{\mu_1/a} \frac{1}{2}\exp \left(\frac{ax_2-\mu_1}{b}\right)\frac{1}{2b}\exp\left(\frac{\mu_2-x_2}{b} \right)dx_2  \\
        & \hspace{4cm} +\int_{\mu_1/a}^{\infty} \left(1-\frac{1}{2}\exp \left(\frac{\mu_1-ax_2}{b}\right) \right)\frac{1}{2b}\exp\left(\frac{\mu_2-x_2}{b} \right) dx_2\\
        &= \frac{1}{4b} \int_{-\infty}^{\mu_2} \exp \left(\frac{x_2(a+1)-(\mu_1+\mu_2)}{b}\right)dx_2 +
         \frac{1}{4b}\int_{\mu_2}^{\mu_1/a} \exp \left(\frac{x_2(a-1)-(\mu_1-\mu_2)}{b}\right)dx_2  \\
          & \quad + \frac{1}{2b}\int_{\mu_1/a}^{\infty} \exp\left(\frac{\mu_2-x_2}{b} \right) dx_2 - \frac{1}{4b}\int_{\mu_1/a}^{\infty}\exp \left(\frac{x_2(-a-1)+(\mu_1+\mu_2)}{b}\right)dx_2 \\
           &= \frac{1}{4(a+1)}\exp\left(\frac{\mu_2a-\mu_1}{b}\right) +
           \frac{1}{4(a-1)}\left(\exp \left(\frac{\mu_2-\mu_1/a}{b}\right)- \exp \left(\frac{\mu_2a-\mu_1}{b}\right) \right) \\
           & \quad + \frac{1}{2}\exp\left(\frac{\mu_2-\mu_1/a}{b} \right) - \frac{1}{4(a+1)}\exp \left(\frac{\mu_2-\mu_1/a}{b}\right) \\
           &= \exp\left(\frac{\mu_2a-\mu_1}{b}\right)\left(\frac{1}{4(a+1)} - \frac{1}{4(a-1)}\right) + \exp \left(\frac{\mu_2-\mu_1/a}{b}\right) \left(\frac{1}{4(a-1)} + \frac{1}{2} - \frac{1}{4(a+1)} \right) \\
            &= \left(\frac{-1}{2(a+1)(a-1)}\right)\exp\left(\frac{\mu_2a-\mu_1}{b}\right) + \left(\frac{a^2}{2(a+1)(a-1)}\right)\exp\left(\frac{\mu_2-\mu_1/a}{b}\right).
    \end{split}\end{equation}

    If $\mu_1/a \leq \mu_2$, then 
            \begin{equation}\begin{split}
        \mathbb{P}(X_1/X_2 < a) &= \int_{-\infty}^{-\infty}\mathbb{P}(X_1< a X_2 | X_2=x_2)\mathbb{P}(X_2=x_2) \\
        &= \int_{-\infty}^{\mu_1/a} \frac{1}{2}\exp \left(\frac{ax_2-\mu_1}{b}\right)\frac{1}{2b}\exp\left(\frac{x_2-\mu_2}{b} \right)dx_2  \\
        & \hspace{2cm} +\int_{\mu_1/a}^{\mu_2} \left(1-\frac{1}{2}\exp \left(\frac{\mu_1-ax_2}{b}\right)\right)\frac{1}{2b}\exp\left(\frac{x_2-\mu_2}{b} \right)dx_2  \\
        & \hspace{4cm} +\int_{\mu_2}^{\infty} \left(1-\frac{1}{2}\exp \left(\frac{\mu_1-ax_2}{b}\right) \right)\frac{1}{2b}\exp\left(\frac{\mu_2-x_2}{b} \right) dx_2\\
        &= \frac{1}{4b} \int_{-\infty}^{\mu_1/a} \exp \left(\frac{x_2(a+1)-(\mu_1+\mu_2)}{b}\right)dx_2  +\frac{1}{2b}\int_{\mu_1/a}^{\infty}\exp\left(\frac{-|x_2-\mu_2|}{b} \right)dx_2\\
        & \quad -\frac{1}{4b}\int_{\mu_1/a}^{\mu_2} \exp \left(\frac{x_2(1-a)-(\mu_2-\mu_1)}{b}\right)dx_2  - \frac{1}{4b}\int_{\mu_2}^{\infty}\exp \left(\frac{x_2(-a-1)+(\mu_1+\mu_2)}{b}\right)dx_2 \\
           &= \frac{1}{4(a+1)}\exp\left(\frac{\mu_1/a-\mu_2}{b}\right) + 1-\frac{1}{2}\exp\left(\frac{\mu_1/a-\mu_2}{b}\right) \\
            & \quad - \frac{1}{4(1-a)}\left(\exp \left(\frac{\mu_1-a\mu_2}{b}\right)- \exp \left(\frac{\mu_1/a-\mu_2}{b}\right) \right) - \frac{1}{4(a+1)}\exp \left(\frac{\mu_1-a\mu_2}{b}\right) \\
           &= 1 + \exp\left(\frac{\mu_1/a-\mu_2}{b}\right)\left(\frac{1}{4(a+1)} + \frac{1}{4(1-a) } - \frac{1}{2} \right) + \exp\left(\frac{\mu_1-a\mu_2}{b}\right)\left(-\frac{1}{4(a+1)}- \frac{1}{4(1-a)} \right) \\
           &= 1 + \exp\left(\frac{\mu_1/a-\mu_2}{b}\right) \left( \frac{a^2}{2(a+1)(1-a)}\right) - \exp\left(\frac{\mu_1-a\mu_2}{b}\right)\left(\frac{1}{2(a+1)(1-a)} \right).
    \end{split}\end{equation}

\end{proof}

\vfill

\subsection{\textbf{Claim \ref{clm:acc_Alg_count_no_max}, restated}}

\begin{claim}
Consider two natural numbers $X,Y \in {\mathbb N}$, and denote $Z\coloneqq X/Y \in {\mathbb R}^+$. For any $\varepsilon>0$ and $\alpha$ such that $0<\alpha<Z$, if we denote the output of Algorithm \textit{LaplaceNoisedCounts$_\varepsilon(X, Y)$} by $\widetilde{X}$, $\widetilde{Y}$ and take $\widetilde{Z}=\widetilde{X}/\widetilde{Y}$, then $\widetilde{Z}$ is $\left(\alpha, \beta \right)$-sample accurate (see Definition \ref{def:sample_acc}), with
\begin{multline*}
 \beta = \left(0.5 + \frac{0.5}{(Z-\alpha)^2-1} \right)\exp\left(\frac{-\varepsilon\alpha Y}{2(Z-\alpha) }\right)
 +\left(0.5+ \frac{0.5}{(Z+\alpha)^2-1} \right)\exp\left(\frac{-\varepsilon\alpha Y}{2(Z+\alpha) }\right) \\
  \hspace{0.65cm} -\frac{Z^2+\alpha^2-1}{(Z^2+\alpha^2-1)^2 -4\alpha^2 Z^2}\exp\left(\frac{-\varepsilon\alpha Y}{2}\right).
\end{multline*}
\end{claim}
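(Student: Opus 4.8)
The plan is to reduce everything to the explicit CDF of a ratio of two independent Laplace variables furnished by Claim~\ref{clm:Lap_ratio_cum_prob}. By construction $\widetilde{X}\sim\mathrm{Lap}(X,2/\varepsilon)$ and $\widetilde{Y}\sim\mathrm{Lap}(Y,2/\varepsilon)$ are independent with equal scale $b=2/\varepsilon$ and strictly positive means (since $X,Y>0$), so Claim~\ref{clm:Lap_ratio_cum_prob} applies verbatim with $\mu_1=X$, $\mu_2=Y$, $b=2/\varepsilon$. Because $\widetilde{Z}=\widetilde{X}/\widetilde{Y}$ is a continuous random variable, I first split the failure probability as
\[
\beta=\mathbb{P}(|\widetilde{Z}-Z|>\alpha)=\mathbb{P}(\widetilde{Z}<Z-\alpha)+\mathbb{P}(\widetilde{Z}>Z+\alpha),
\]
and evaluate each piece by instantiating Claim~\ref{clm:Lap_ratio_cum_prob} at the thresholds $a=Z-\alpha$ and $a=Z+\alpha$, both positive because $0<\alpha<Z$.

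The key preliminary check is determining which branch of Claim~\ref{clm:Lap_ratio_cum_prob} is active on each side. Writing $X=ZY$, the comparison of $\mu_1/a$ with $\mu_2$ reduces to comparing $Z/a$ with $1$. For $a=Z-\alpha$ we have $Z/(Z-\alpha)>1$, so $\mu_1/a\geq\mu_2$ and I use the \emph{first} branch to obtain $\mathbb{P}(\widetilde{Z}<Z-\alpha)$ directly. For $a=Z+\alpha$ we have $Z/(Z+\alpha)<1$, so $\mu_1/a<\mu_2$ and I use the \emph{second} branch to obtain $\mathbb{P}(\widetilde{Z}<Z+\alpha)$, then take the complement $\mathbb{P}(\widetilde{Z}>Z+\alpha)=1-\mathbb{P}(\widetilde{Z}<Z+\alpha)$; conveniently, the leading $1$ in the second-branch formula cancels against the $1$ introduced by the complement.

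The bulk of the work is then bookkeeping with exponents and coefficients, which collapse pleasantly. Substituting $\mu_1=ZY$, $\mu_2=Y$, $b=2/\varepsilon$, the exponent arguments simplify to $\mu_2 a-\mu_1=\mu_1-a\mu_2=-\alpha Y$, producing the shared factor $\exp(-\varepsilon\alpha Y/2)$, while $\mu_2-\mu_1/a=-\alpha Y/(Z-\alpha)$ and $\mu_1/a-\mu_2=-\alpha Y/(Z+\alpha)$ produce the two ``main'' exponential factors. For the coefficients I apply the identity $\frac{a^2}{2(a^2-1)}=\frac12+\frac{1}{2(a^2-1)}$ with $a=Z\mp\alpha$, which yields the $\bigl(0.5+\tfrac{0.5}{(Z\mp\alpha)^2-1}\bigr)$ prefactors exactly as stated.

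The last step is the only genuinely fiddly one: collecting the two separate $\exp(-\varepsilon\alpha Y/2)$ contributions (one from each side of the split) over a common denominator. Setting $A=(Z-\alpha)^2-1$ and $B=(Z+\alpha)^2-1$, their coefficients sum to $\frac{1}{2A}+\frac{1}{2B}=\frac{A+B}{2AB}$; since $A+B=2(Z^2+\alpha^2-1)$ and, writing $A,B=(Z^2+\alpha^2-1)\mp 2\alpha Z$ as a difference of squares, $AB=(Z^2+\alpha^2-1)^2-4\alpha^2Z^2$, this gives precisely the claimed combined term $\dfrac{Z^2+\alpha^2-1}{(Z^2+\alpha^2-1)^2-4\alpha^2Z^2}$. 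Assembling the three pieces completes the proof. The main obstacle is purely the case analysis together with this algebraic recombination; all the conceptual content is carried by the explicit Laplace-ratio CDF of Claim~\ref{clm:Lap_ratio_cum_prob}.
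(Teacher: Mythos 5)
Your proposal is correct and takes essentially the same route as the paper's proof: both split $\mathbb{P}(|\widetilde{Z}-Z|>\alpha)$ into the two one-sided events, apply Claim~\ref{clm:Lap_ratio_cum_prob} with $\mu_1=X=ZY$, $\mu_2=Y$, $b=2/\varepsilon$ (first branch at $a=Z-\alpha$; second branch plus complement at $a=Z+\alpha$, with the leading $1$ cancelling), and then use the same identity $\frac{a^2}{2(a+1)(a-1)}=\frac12+\frac{1}{2(a^2-1)}$ together with the recombination $A+B=2(Z^2+\alpha^2-1)$, $AB=(Z^2+\alpha^2-1)^2-4\alpha^2Z^2$ for the shared $\exp(-\varepsilon\alpha Y/2)$ terms. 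Your branch-selection check via $Z/(Z\mp\alpha)\gtrless 1$ is exactly the case analysis the paper performs implicitly when it sets $\alpha_1=Z-\alpha$ and $\alpha_2=Z+\alpha$.
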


\begin{proof}
    Let there be two natural numbers, $0<X\leq n_x$ and $0<Y\leq n_y$, denote $Z\coloneqq X/Y$, and denote the output of Algorithm \textit{LaplaceNoisedCounts$_\varepsilon(X, Y)$} by $\widehat{Z}$ (a random variable). Let $L_1,L_2 \sim \text{Lap}(2/\varepsilon)$, two independent random variables.
    \begin{equation}\begin{split}\label{eq:acc_count_no_max}
     \mathbb{P}(|\widehat{Z}-Z| \geq \alpha) &= \mathbb{P}\left(
    \Bigg|\frac{X}{Y}-\frac{X+L_1}{Y+L_2} \Bigg| \geq \alpha \right) \\
     &= \mathbb{P}\left(\frac{X+L_1}{Y+L_2} \leq \frac{X}{Y}-\alpha \right) + 1 - \mathbb{P}\left(\frac{X+L_1}{Y+L_2} \leq \frac{X}{Y}+\alpha \right) \\
     &\overset{a}{=} \frac{1}{2(\alpha_1+1)(\alpha_1-1)}\left(\alpha_1^2\exp\left(\frac{Y-X/\alpha_1}{b}\right)-\exp\left(\frac{Y\alpha_1-X}{b}\right)\right) \\
     & \quad + \frac{1}{2(\alpha_2+1)(\alpha_2-1)}\left(\alpha_2^2\exp\left(\frac{X/\alpha_2-Y}{b}\right)-\exp\left(\frac{X-\alpha_2Y}{b}\right)\right)\\
&= \frac{1}{2(\alpha_1+1)(\alpha_1-1)}\left(\alpha_1^2\exp\left(\frac{-\alpha Y}{b(Z-\alpha) }\right)-\exp\left(\frac{- \alpha Y}{b}\right)\right) \\
     & \quad + \frac{1}{2(\alpha_2+1)(\alpha_2-1)}\left(\alpha_2^2\exp\left(\frac{-\alpha Y}{b(Z+\alpha) }\right)-\exp\left(\frac{-\alpha Y}{b}\right)\right)
\end{split}\end{equation}

where (a) follows from Claim \ref{clm:Lap_ratio_cum_prob}, if we denote $\alpha_1 = Z-\alpha$ and $\alpha_2=Z+\alpha$.\\
\\
We note that
$$ \frac{a_1^2}{2(a_1+1)(a_1-1)} = \frac{1}{2}\frac{Z^2-2\alpha Z+\alpha^2}{Z^2-2\alpha Z+\alpha^2-1} = 0.5 + \frac{0.5}{Z^2-2\alpha Z+\alpha^2-1}$$
$$\frac{a_2^2}{2(a_2+1)(a_2-1)} = \frac{1}{2}\frac{Z^2+2\alpha Z+\alpha^2}{Z^2+2\alpha Z+\alpha^2-1} = 0.5+ \frac{0.5}{Z^2+2\alpha Z+\alpha^2-1}$$

Plugging in everything, including $b=2/\varepsilon$, and rearranging, we have

\begin{multline*}
  \mathbb{P}(|\widehat{Z}-Z| \geq \alpha) = \left(0.5 + \frac{0.5}{(Z-\alpha)^2-1} \right)\exp\left(\frac{-\varepsilon\alpha Y}{2(Z-\alpha) }\right) \\ +\left(0.5+ \frac{0.5}{(Z+\alpha)^2-1} \right)\exp\left(\frac{-\varepsilon\alpha Y}{2(Z+\alpha) }\right) \\
   -\frac{Z^2+\alpha^2-1}{(Z^2+\alpha^2-1)^2 -4\alpha^2 Z^2}\exp\left(\frac{-\varepsilon\alpha Y}{2}\right).\qedhere
\end{multline*}

\end{proof}

\newpage
\subsection{Claim \ref{clm:bias_ratio_counts}, restated}
\begin{claim}
Given two numbers $X,Y \in {\mathbb N}$, and $\varepsilon>0$, denote the output of Algorithm \textit{LaplaceNoisedCounts$_\varepsilon(X, Y)$} by $\widetilde{X}$, $\widetilde{Y}$, set $X/Y=Z \in \mathbb{R}^+$, and take $\widetilde{Z}=\widetilde{X}/\max(\widetilde{Y},1)$. Then, for any $\varepsilon>0$ we have
    \small{
    \begin{equation*}\begin{split}
        \mathbb{E}\left[\widetilde{Z}\right] 
       &= X \Bigg( \frac{1}{2}\exp\left(\frac{\varepsilon(1-Y)}{2} \right) +  \frac{\varepsilon}{4}\exp\left(-\frac{\varepsilon Y}{2}\right)  \left(Ei\left(\frac{\varepsilon Y}{2}\right)-  Ei\left(\frac{\varepsilon}{2}\right)\right)    -  \frac{\varepsilon}{4}\exp\left(\frac{\varepsilon Y}{2}\right)Ei\left(-\frac{\varepsilon Y}{2}\right) \Bigg) \\
       &\approx Z \cdot \frac{\varepsilon^2 Y^2}{4} \int_{-\infty}^{0} \frac{-e^u}{u^2 - \left(\frac{\varepsilon Y}{2}\right)^2} \, du,
    \end{split}\end{equation*}
    }
    where $Ei(x)= -\int_{-x}^{\infty}\frac{e^{-t}}{t}dt = \int_{-\infty}^{x}\frac{e^t}{t}dt$.
\end{claim}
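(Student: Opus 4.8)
The plan is to exploit the independence of $\widetilde{X}$ and $\widetilde{Y}$ to factor the expectation. Since the $\max$ is applied only to the denominator and the Laplace noise is zero-mean, $\mathbb{E}[\widetilde{X}]=X$, so
$$\mathbb{E}[\widetilde{Z}] = \mathbb{E}[\widetilde{X}]\cdot\mathbb{E}\!\left[\frac{1}{\max(\widetilde{Y},1)}\right] = X\cdot\mathbb{E}\!\left[\frac{1}{\max(\widetilde{Y},1)}\right].$$
All the work is therefore in evaluating $\mathbb{E}[1/\max(\widetilde{Y},1)]$ against the density of $\widetilde{Y}=Y+\mathrm{Lap}(b)$ with $b=2/\varepsilon$.

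Next I would write this expectation as an integral over $\mathbb{R}$ and split at the break-points of the integrand. On $(-\infty,1]$ the $\max$ equals $1$, and since $Y\geq 1$ the density there is $\tfrac{1}{2b}e^{-(Y-y)/b}$; this piece integrates elementarily to $\tfrac12\exp\!\big(\tfrac{\varepsilon(1-Y)}{2}\big)$, the first term of the claim (it is exactly $\prob(\widetilde{Y}<1)$). On $[1,\infty)$ the $\max$ equals $y$, and I would further split at $y=Y$ to resolve the absolute value in the density. The substitution $u=y/b$ turns $\int \tfrac1y e^{\pm y/b}\,dy$ into $\int \tfrac{e^{\pm u}}{u}\,du$, which by the stated definition of $Ei$ yields $Ei(Y/b)-Ei(1/b)$ on $[1,Y]$ and $-Ei(-Y/b)$ on $[Y,\infty)$. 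Substituting $Y/b=\varepsilon Y/2$, $1/b=\varepsilon/2$ and collecting the prefactors $\tfrac{1}{2b}e^{\mp Y/b}=\tfrac{\varepsilon}{4}e^{\mp\varepsilon Y/2}$ reproduces the two $Ei$ terms exactly.

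For the approximation I would drop the two terms that are negligible when $\varepsilon Y$ is large, namely $\tfrac12\exp\!\big(\tfrac{\varepsilon(1-Y)}{2}\big)$ (which decays exponentially in $\varepsilon Y$) and the $Ei(\varepsilon/2)$ contribution, leaving $\tfrac{\varepsilon}{4}\big[e^{-a}Ei(a)-e^{a}Ei(-a)\big]$ with $a=\varepsilon Y/2$. The remaining step is to recognize this as $Z\cdot a^2\cdot\int_{-\infty}^{0}\tfrac{-e^u}{u^2-a^2}\,du$. I would verify it by partial fractions, $\tfrac{1}{u^2-a^2}=\tfrac{1}{2a}\big(\tfrac{1}{u-a}-\tfrac{1}{u+a}\big)$, after which the shifts $v=u-a$ and $w=u+a$ turn the two resulting integrals into $e^{a}Ei(-a)$ and $e^{-a}Ei(a)$ via the definition of $Ei$; multiplying through by $a^2=\varepsilon^2Y^2/4$ and using $X=ZY$ matches the claimed integral form.

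The routine parts are the elementary integrations; the main obstacle is bookkeeping rather than any hard analysis. I must (i) keep the domain splits consistent with $Y\geq 1$ so the absolute value is resolved correctly in each sub-interval, (ii) track signs and orientations carefully when substitutions such as $u=-y/b$ reverse the limits of integration, since this is precisely where the $Ei(-a)$ term acquires its sign, and (iii) for the approximation, state in which regime ($\varepsilon Y$ large) the dropped terms are negligible and verify the partial-fraction identity that collapses $e^{-a}Ei(a)-e^{a}Ei(-a)$ into the single stated integral. This last identity is the crux connecting the exact closed form to the interpretable integral representation.
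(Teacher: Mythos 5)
Your proposal is correct and follows essentially the same route as the paper's own proof: factor $\mathbb{E}[\widetilde{Z}]=X\,\mathbb{E}[1/\max(\widetilde{Y},1)]$ by independence, split the Laplace-density integral at $1$ and at $Y$ to obtain the closed form via $Ei$, and recombine $e^{-a}Ei(a)-e^{a}Ei(-a)$ (with $a=\varepsilon Y/2$) into the single integral by the shifts $v=u-a$, $w=u+a$. Your sign bookkeeping in the final step is in fact cleaner than the paper's, whose intermediate display carries a sign typo ($+\tfrac{\varepsilon}{4}e^{-a}Ei(a)$ written as $-\tfrac{\varepsilon}{4}e^{-a}Ei(a)$) that your partial-fraction verification correctly reconciles with the $-e^u$ numerator in the stated claim.
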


\begin{proof}
\begin{equation}\begin{split}
    \mathbb{E}\left[\widehat{Z}\right] 
    &= \mathbb{E}\left[ \frac{X + Lap (2/\varepsilon)}{Max(Y + Lap (2/\varepsilon), 1)}\right] \\
    &= X \mathbb{E}\left[ \frac{1}{Max(Y + Lap (2/\varepsilon), 1)}\right] \\
    &= X \left( \int_{-\infty}^{1}\frac{\varepsilon}{4}\exp\left(-\frac{|x-Y|\varepsilon}{2}\right)dx +  \int_{1}^{\infty} \frac{1}{x}\frac{\varepsilon}{4}\exp\left(-\frac{|x-Y|\varepsilon}{2}\right) dx \right) \\
    &=X \left( \frac{1}{2}\exp\left(\frac{\varepsilon(1-Y)}{2} \right) + \int_{1}^{Y} \frac{\varepsilon}{4x}\exp\left(\frac{(x-Y)\varepsilon}{2}\right) dx + \int_{Y}^{\infty} \frac{\varepsilon}{4x}\exp\left(\frac{(Y-x)\varepsilon}{2}\right) dx \right) \\
    &= X \left( \frac{1}{2}\exp\left(\frac{\varepsilon(1-Y)}{2} \right) + \frac{\varepsilon}{4}\exp\left(-\frac{\varepsilon Y}{2}\right)\int_{1}^{Y} \frac{1}{x}\exp\left(\frac{x\varepsilon}{2}\right) dx + \frac{\varepsilon}{4}\exp\left(\frac{\varepsilon Y}{2}\right)\int_{Y}^{\infty} \frac{1}{x}\exp\left(\frac{-x\varepsilon}{2}\right) dx \right) \\
    &= X \Bigg( \frac{1}{2}\exp\left(\frac{\varepsilon(1-Y)}{2} \right) + \frac{\varepsilon}{4}\exp\left(-\frac{\varepsilon Y}{2}\right)\left(Ei\left(\frac{\varepsilon Y}{2}\right)-Ei\left(\frac{\varepsilon}{2}\right)\right)  - \frac{\varepsilon}{4}\exp\left(\frac{\varepsilon Y}{2}\right)Ei\left(-\frac{\varepsilon Y}{2}\right) \Bigg), \\
\end{split}\end{equation}
where $Ei(x)= -\int_{-x}^{\infty}\frac{e^{-t}}{t}dt = \int_{-\infty}^{x}\frac{e^t}{t}dt$.
\end{proof}

Note that for large enough $Y$ (also compared to $\varepsilon)$, we have

\begin{equation}\begin{split}
    \mathbb{E}\left[\widehat{Z}\right] &\approx X \left(-\frac{\varepsilon}{4}\exp\left(-\frac{\varepsilon Y}{2}\right) Ei\left(\frac{\varepsilon Y}{2}\right) - \frac{\varepsilon}{4}\exp\left(\frac{\varepsilon Y}{2}\right)Ei\left(-\frac{\varepsilon Y}{2}\right)\right)\\
    &=X\frac{\varepsilon}{4} \left(\int_{-\infty}^{-\frac{\varepsilon Y}{2}}\frac{e^{t+\frac{\varepsilon Y}{2}}}{t}dt-\int_{-\infty}^{\frac{\varepsilon Y}{2}}\frac{e^{t-\frac{\varepsilon Y}{2}}}{t}dt \right).\\
\end{split}\end{equation}

Simplifying the integrals separately:

For the first integral substitute \( u = t + \frac{\varepsilon Y}{2} \), and for the second integral substitute \( v = t - \frac{\varepsilon Y}{2} \)

\begin{equation*}
\int_{-\infty}^{-\frac{\varepsilon Y}{2}} \frac{e^{t+\frac{\varepsilon Y}{2}}}{t} \, dt \longrightarrow \int_{-\infty}^{0} \frac{e^u}{u - \frac{\varepsilon Y}{2}} \, du \qquad \int_{-\infty}^{\frac{\varepsilon Y}{2}} \frac{e^{t-\frac{\varepsilon Y}{2}}}{t} \, dt \longrightarrow \int_{-\infty}^{0} \frac{e^v}{v + \frac{\varepsilon Y}{2}} \, dv
\end{equation*}

Now, rewriting the expression:

\begin{equation}\begin{split}
     \mathbb{E}\left[\widehat{Z}\right] &\approx 
     X\frac{\varepsilon}{4} \left( \int_{-\infty}^{0} \frac{e^u}{u - \frac{\varepsilon Y}{2}} \, du - \int_{-\infty}^{0} \frac{e^u}{u + \frac{\varepsilon Y}{2}} \, du \right) \\
     &= X\frac{\varepsilon}{4} \int_{-\infty}^{0} e^u \left( \frac{1}{u - \frac{\varepsilon Y}{2}} - \frac{1}{u + \frac{\varepsilon Y}{2}} \right) du \\
     &= X\frac{\varepsilon}{4} \int_{-\infty}^{0} e^u \left( \frac{(u + \frac{\varepsilon Y}{2}) - (u - \frac{\varepsilon Y}{2})}{(u - \frac{\varepsilon Y}{2})(u + \frac{\varepsilon Y}{2})} \right) du \\
     &= X\frac{\varepsilon}{4} \int_{-\infty}^{0} e^u \left( \frac{\frac{\varepsilon Y}{2} + \frac{\varepsilon Y}{2}}{(u - \frac{\varepsilon Y}{2})(u + \frac{\varepsilon Y}{2})} \right) du \\
     &= X\frac{\varepsilon}{4} \int_{-\infty}^{0} e^u \left( \frac{\varepsilon Y}{(u - \frac{\varepsilon Y}{2})(u + \frac{\varepsilon Y}{2})} \right) du \\
     &= X\frac{\varepsilon^2 Y}{4} \int_{-\infty}^{0} \frac{e^u}{u^2 - \left(\frac{\varepsilon Y}{2}\right)^2} \, du \\
     &= \frac{X}{Y} \times \frac{\varepsilon^2 Y^2}{4} \int_{-\infty}^{0} \frac{e^u}{u^2 - \left(\frac{\varepsilon Y}{2}\right)^2} \, du
\end{split}\end{equation}

When $\varepsilon Y$ is large enough, we have that

$$
\frac{1}{u^2 - \left( \frac{\varepsilon Y}{2} \right)^2} \approx -\frac{1}{\left( \frac{\varepsilon Y}{2} \right)^2}.
$$

The integral becomes:

\[
\int_{-\infty}^{0} \frac{e^u}{u^2 - \left(\frac{\varepsilon Y}{2}\right)^2} \, du \approx -\frac{1}{\left( \frac{\varepsilon Y}{2} \right)^2} \int_{-\infty}^{0} e^u \, du = -\frac{4}{\varepsilon^2 Y^2}.
\]

Thus, as $Y$ grows, the expectation is getting closer to the real value; that is, the bias is decreasing.

\end{document}